\documentclass[lettersize,journal]{IEEEtran}
\usepackage{amsmath,amsfonts}
\usepackage{algorithmic}
\usepackage{algorithm}
\usepackage{array}
\usepackage[caption=false,font=normalsize,labelfont=sf,textfont=sf]{subfig}
\usepackage{textcomp}
\usepackage{stfloats}
\usepackage{url}
\usepackage{verbatim}
\usepackage{graphicx}
\usepackage{xcolor}
\usepackage{siunitx}
\newenvironment{restatetheorem}[2]
  {\par\medskip\noindent\textbf{Theorem~\ref{#1} (#2).}\ \itshape\ignorespaces}
  {\par\medskip}
\usepackage{hyperref}
\usepackage{multirow}
\usepackage{amsthm}
\usepackage{cite}
\usepackage{booktabs}
\usepackage{makecell}
\newtheorem{theorem}{Theorem}[section]

\begin{document}
\bstctlcite{BSTcontrol}

\title{SAIL: A Learnable Generalisation of Spatial Autocorrelation over Graph-Structured Data}

\author{Piotr~Keller and Fayyaz~Minhas%
\thanks{This work has been submitted to the IEEE for possible publication.
Copyright may be transferred without notice, after which this version may
no longer be accessible.}%
\thanks{P.~Keller and F.~Minhas are with the Predictive Systems in Biomedicine (PRISM) Lab, Department of Computer Science, University of Warwick, Coventry CV4 7AL, United Kingdom (e-mail: piotr.keller@warwick.ac.uk)}}

\markboth{}%
{Keller and Minhas: SAIL: A Learnable Generalisation of Spatial Autocorrelation}

\maketitle

\begin{abstract}
In vision and scientific domains involving spatial data, a key question is whether node feature values vary independently across space or form coherent spatial structures. Classical spatial autocorrelation statistics such as Moran's $I$ and Geary's $C$ quantify this phenomenon but operate on a single scalar variable with fixed neighbourhood weights, limiting their use for high-dimensional graph-like data. We introduce SAIL (Spatial Autocorrelation Index Learner), a differentiable framework that generalises spatial autocorrelation into a learnable operator by jointly learning feature projections defining candidate scalar fields and inter-node neighbourhood affinities. SAIL produces interpretable local and global spatial coherence descriptors, enabling supervised prediction and unsupervised discovery of spatial autocorrelation modes and hotspots. We show that SAIL exactly recovers classical spatial statistics under appropriate parametrisations and, for tasks driven by latent spatial autocorrelation, can approximate the Bayes-optimal predictor with an inductive bias matched to such tasks. Experiments on synthetic data and large-scale biomedical graphs demonstrate statistical correctness and discovery of spatial coherence with applications across vision, geospatial analysis, and biomedicine. The code repository is available at: \href{https://github.com/pkeller00/SAIL}{https://github.com/pkeller00/SAIL}.
\end{abstract}

\begin{IEEEkeywords}
Graph algorithms, Learning, Machine learning, Multivariate statistics, Biology and genetics, Algorithms
\end{IEEEkeywords}

\section{Introduction}
\IEEEPARstart{A}{re} tumour and immune cells intermixed within a tissue sample, or do they form distinct spatial niches associated with patient outcomes? Do neighbouring regions of a tissue section express similar gene programmes, or do these programmes change sharply across space? Are economic indicators geospatially coherent across neighbouring countries? 

Such questions arise across many domains involving spatially structured data, including pathobiological imaging, geospatial analysis, and computer vision, and are fundamentally about spatial self-similarity: whether observations resemble, or differ from, their neighbours. Quantifying this dependence allows us to ask not only if spatial structure is present, but also how strongly it is expressed, where it occurs, and if similar patterns recur across samples. Spatial autocorrelation (SAC) provides the formal framework for measuring such spatial self-similarity at local and global scales~\cite{manap2021identification,koenig2002global,castro2021spatial}.

Existing formulations for quantifying SAC, however, are poorly suited to modern high-dimensional, learned representations. Classical statistics such as Moran's $I$ and Geary's $C$~\cite{moran1950notes,geary1954contiguity,getis2009spatial,ospina2025spatialge,al2013spatial} operate on a single scalar variable under fixed, hand-crafted spatial weights, and their multivariate extensions~\cite{lin2023comparison,yamada2024moran,yamada2024geary} largely aggregate these univariate statistics rather than jointly learning which feature combinations carry spatial signal. Graph Neural Networks (GNNs) can implicitly encode spatial dependence via message passing~\cite{mackenzie2022neural,pizurica2023whole,ali2025graph}, but they do not provide an explicit, quantitative measure of spatial clustering or dispersion that can be optimised, discovered in an unsupervised manner, or subjected to statistical significance testing. This reveals a methodological gap: there is currently no scalable formulation of SAC that \emph{learns} the feature projection and neighbourhood structure for irregular, high-dimensional graph data and can be automatically optimised.

To address this, we introduce Spatial Autocorrelation Index Learner (SAIL), a neural framework that learns spatial autocorrelation descriptors directly from high-dimensional node features in spatial graphs or point sets. Unlike fixed metrics or post-hoc statistics, SAIL formulates SAC as a learnable, high-dimensional operator that jointly learns feature projections defining candidate scalar fields and inter-node neighbourhood affinities, adapting to multi-feature representations. This provides an explicit, task-aware measure of SAC, bridging classical spatial statistics and deep representation learning.

\begin{figure*}[!t]
  \centering
  \includegraphics[width=\textwidth]{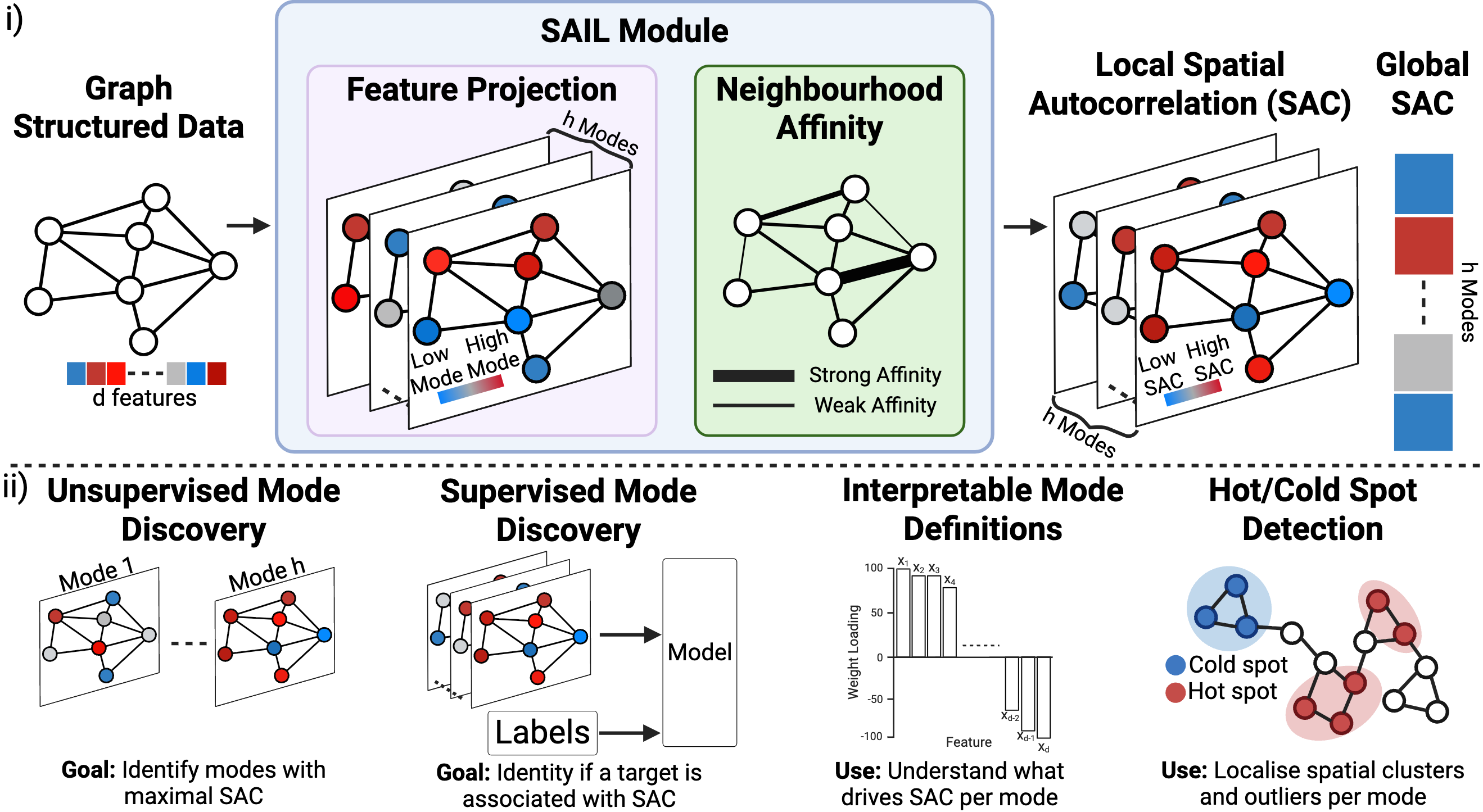}
\caption{Overview of the SAIL framework. (i) SAIL takes graph-structured data whose nodes carry $d$-dimensional features. Inside the SAIL module, a learnable feature projection maps node features to $h$ scalar fields (modes), and a learnable neighbourhood affinity assigns strong or weak weights to node pairs. Each projected field is combined with these affinities to give node-wise local spatial autocorrelation (SAC) scores for every mode, which can also be aggregated into an $h$-dimensional global SAC descriptor per graph. (ii) SAIL modes can be learned through supervised or unsupervised optimisation. In the unsupervised setting, SAIL identifies feature projections that maximise spatial autocorrelation (SAC), revealing which combinations of features exhibit the strongest spatial coherence across the dataset. Conversely, in supervised learning, global SAC descriptors are used to train a downstream predictor, enabling SAIL to identify spatial autocorrelation patterns associated with the target. In both settings, the learned feature projections provide interpretable combinations of features that define each mode. Finally, a formal statistical framework enables the
detection of significant hot and cold spots on the learned modes.}  \label{pipeline}
\end{figure*}

Our key contributions are as follows:
\begin{enumerate}
    \item{We introduce SAIL, a differentiable, multivariate generalisation of spatial autocorrelation that jointly learns feature projections defining scalar fields and spatial affinity weights governing neighbourhood interactions, yielding interpretable local and graph-level SAC maps.}
    \item{We demonstrate that SAIL supports supervised learning of predictive spatial patterns and unsupervised discovery of feature combinations exhibiting maximal spatial coherence or contrast, termed SAC modes and anti-modes, respectively, while enabling statistically significant detection of spatial hotspots, coldspots, and outliers.}
    \item{We establish theoretical guarantees showing that SAIL (i) exactly recovers classical spatial statistics (Moran's $I$, Geary's $C$, Getis--Ord $G^\star$, and Morisita--Horn) under appropriate configurations and (ii) provides an inductive bias suited to SAC-driven prediction by approximating Bayes-optimal predictors under stated assumptions, representing quadratic spatial interactions that linear $T$-hop graph filters cannot capture, and having no greater Rademacher complexity than broader hypothesis classes.}
    \item{We demonstrate SAIL's broad applicability across histology, spatial omics, and geospatial data, and its scalability to multi-graph datasets containing millions of nodes. SAIL uncovers meaningful and prognostically relevant spatial patterns without supervision, establishing it as a scalable, interpretable framework for learning spatial autocorrelation in high-dimensional graph-structured data.}
\end{enumerate}

\section{Related Work}
\label{sec:related_work}

\subsection{Classical Spatial Autocorrelation Statistics}
\label{sec:related_classical}

Consider a spatial graph $\mathcal{G}_i$ containing $n_i$ nodes. For a scalar feature $k$, let $x_{ij}^{(k)}$ denote the value of that feature at node $j$, and let 
$
\bar{x}_i^{(k)} = \frac{1}{n_i}\sum_{j=1}^{n_i} x_{ij}^{(k)}
$
denote its mean across the graph. Classical SAC statistics quantify whether similar or dissimilar values of $x_{ij}^{(k)}$ tend to occur at spatially neighbouring nodes. The spatial relationship between nodes $j$ and $j'$ is encoded by a predefined non-negative weight $w_{jj'} \geq 0$, typically derived from distance or adjacency.

\subsubsection{Moran's $I$}
Moran's $I$~\cite{moran1950notes,getis2009spatial} measures how similar the value of feature $k$, after centring by its global mean $\bar{x}_i^{(k)}$, is to the mean-centred values at its spatially weighted neighbours, relative to the global variance. Positive values indicate spatial clustering, whereas negative values indicate spatial contrast:
\begin{equation}
I_{\mathrm{Moran}}^{(k)} =
\frac{n_i}{\sum_{j \ne j'} w_{jj'}}
\cdot
\frac{\sum_{j \ne j'} w_{jj'} \left(x_{ij}^{(k)} - \bar{x}_i^{(k)}\right)\left(x_{ij'}^{(k)} - \bar{x}_i^{(k)}\right)}
{\sum_{j=1}^{n_i} \left(x_{ij}^{(k)} - \bar{x}_i^{(k)}\right)^2}.
\label{eq:moran}
\end{equation}

\subsubsection{Geary's $C$}
Geary's $C$~\cite{geary1954contiguity,ospina2025spatialge} instead penalises local squared differences directly. Values of $C<1$ suggest spatial clustering of similar values (positive spatial autocorrelation), whereas $C>1$ indicates spatial dispersion (negative spatial autocorrelation):
\begin{equation}
C_{\mathrm{Geary}}^{(k)} = \frac{(n_i - 1)}{2 \sum_{j \ne j'} w_{jj'}} \cdot \frac{\sum_{j \ne j'} w_{jj'} \left(x_{ij}^{(k)} - x_{ij'}^{(k)}\right)^2}{\sum_{j=1}^{n_i} \left(x_{ij}^{(k)} - \bar{x}_i^{(k)}\right)^2}.
\label{eq:geary}
\end{equation}
Other widely used classical SAC statistics include the Getis--Ord $G^\star$ statistic, which measures local hotspot concentration, and the Morisita--Horn index, which quantifies compositional overlap between spatial neighbourhoods. Their formulations are provided in Supplementary Section~\ref{sec:supp_classical_indices}. 

Despite their widespread use, the above classical SAC statistics assume a predefined scalar feature and spatial weighting scheme, limiting their use with high-dimensional data where informative feature combinations or neighbourhood relationships may need to be learned. Moreover, different SAC statistics are typically applied as separate fixed formulations rather than within a unified learnable framework. These limitations motivate the more general formulation introduced in SAIL.

\subsection{Multivariate Extensions and Their Limitations}
To move beyond a single scalar feature, several multivariate generalisations have been proposed~\cite{lin2023comparison,yamada2024moran,yamada2024geary}. The approach by Yamada et al.~\cite{yamada2024moran} standardises each feature channel $k=1,\dots,d$ to $z_{ij}^{(k)}=(x_{ij}^{(k)}-\bar{x}_i^{(k)})/s_i^{(k)}$, where $s_i^{(k)}$ is the standard deviation of feature $k$ in graph $\mathcal{G}_i$, and defines the multivariate Moran's $I$ as an average of pairwise inner products between standardised feature vectors $z_j=[z_j^{(1)},\dots,z_j^{(d)}]^\top$ over the graph:
\begin{equation}
I_{\mathrm{multi}} = \frac{1}{d\sum_{j\ne j'} w_{jj'}} \sum_{j\ne j'} w_{jj'}\, z_j^\top z_{j'}.
\label{eq:multi_moran}
\end{equation}
Because $z_j^\top z_{j'} = \sum_{k=1}^{d} z_j^{(k)} z_{j'}^{(k)}$, it can be shown that this multivariate index is exactly the mean of the per-feature Moran's $I$ values, $I_{\mathrm{multi}} = \frac{1}{d}\sum_{k=1}^{d} I^{(k)}_{\mathrm{Moran}}$~\cite{yamada2024moran}.

An analogous construction extends Geary's $C$ to the multivariate case~\cite{yamada2024geary} by replacing the scalar squared difference with the squared Euclidean norm between standardised feature vectors at neighbouring nodes:
\begin{equation}
C_{\mathrm{multi}} = \frac{1}{2d\sum_{j\ne j'} w_{jj'}} \sum_{j\ne j'} w_{jj'}\, \lVert z_j - z_{j'} \rVert^2.
\label{eq:multi_geary}
\end{equation}
Similarly to Moran's $I$, $C_{\mathrm{multi}}$ is exactly the mean of the per-feature Geary's $C$ values, $C_{\mathrm{multi}} = \frac{1}{d}\sum_{k=1}^{d} C^{(k)}_{\mathrm{Geary}}$~\cite{yamada2024geary}.

A second strategy avoids the aggregation problem via dimensionality reduction, computing classical univariate statistics on the resulting components. The simplest variant applies Principal Component Analysis (PCA), which lets a single component absorb variance across many correlated channels, but chooses the projection to maximise explained variance in the feature space alone, with no reference to the spatial coordinates or the neighbourhood structure. The component that best summarises feature variability therefore need not be the one that best expresses spatial coherence. MULTISPATI-PCA~\cite{dray2008spatial} addresses this by optimising the product of the variance and Moran's $I$. Its axes are thus spatially informed, but the objective still rewards variance as well as coherence, so a weakly varying yet highly coherent direction can be ranked below a high-variance, moderately coherent one.

However, these strategies remain fundamentally hand-crafted and linear: the feature projection is either fixed as an unweighted average across all channels or fixed by an objective (variance) unrelated to spatial structure, rather than learned to maximise spatial coherence directly. They require a predefined spatial weight matrix, operate on a fixed feature transformation, and are non-differentiable with respect to any learned representation. Because they rely on linear summaries of high-dimensional data, they cannot capture non-linear interactions between features or automatically identify feature subsets that jointly define a spatial niche. This limits their applicability in domains such as spatial omics, where the relevant spatial structure is often carried by a non-linear combination of many correlated channels rather than any single one, or by a projection chosen for an unrelated criterion.

\subsection{Graph Neural Networks}
\label{sec:related_gnn}

Graph Neural Networks (GNNs) partly address the need for modelling high-dimensional spatial dependence by aggregating information across spatial neighbourhoods through message passing, and have shown strong performance across a range of spatially structured prediction tasks~\cite{mackenzie2022neural,pizurica2023whole,ali2025graph}. However, GNNs encode spatial dependence only \emph{implicitly}, through the aggregation and update rules, without ever computing an explicit SAC statistic. As a result, standard GNNs do not directly provide an explicit or interpretable measure of how strongly a feature or learned representation is clustered or dispersed in space, nor do they support statistically testing the importance of spatial structure for an observed target without an associated supervised objective.

\subsection{Integration of Spatial Autocorrelation into Deep Learning}
\label{sec:related_sata}

Unlike GNNs, a separate line of work explicitly incorporates spatial autocorrelation into deep learning pipelines, for example as an auxiliary prediction target for geographic GNNs~\cite{klemmer2023positional} or, in Spatial Autocorrelation Token Analysis (SATA)~\cite{nikzad2025sata}, as a robustness mechanism for Vision Transformers. While these demonstrate that an explicit SAC statistic can be useful for deep learning, the underlying computation remains a fixed, predefined index evaluated on a predefined variable under a hand-crafted, non-adaptive spatial weight matrix, rather than a jointly learned feature projection and affinity structure. Consequently, these methods cannot adapt which feature combination is evaluated, nor how neighbourhood influence is defined, offering no mechanism for unsupervised discovery of task-relevant spatial modes.

\section{Methods}
\label{sec:methods}
The core idea of SAIL is to overcome the limitations of existing approaches outlined in the previous section by making the feature representation and spatial weighting scheme learnable, differentiable components of a modular spatial autocorrelation operator over graphs, providing a common formulation for diverse spatial data modalities that can be represented as graphs, including images, spatial omics, point sets, and geospatial data.

Formally, let
$
\mathcal{D}=\{\mathcal{G}_i\}_{i=1}^{N}
$
denote a dataset of $N$ spatial graphs. Each graph is represented as
$
\mathcal{G}_i=(\mathcal{V}_i,\mathcal{E}_i),
$
with the node set
\begin{equation}
\mathcal{V}_i
=
\left\{
v_{ij}
=
\left(x_{ij},c_{ij}\right)
\,\middle|\,
j=1,\ldots,n_i
\right\}.
\end{equation}
Here, $n_i$ denotes the number of nodes in graph $\mathcal{G}_i$, $x_{ij}\in\mathbb{R}^{d}$ is the $d$-dimensional feature vector associated with node $j$, and $c_{ij}\in\mathbb{R}^{c}$ denotes its coordinate in a physical or abstract spatial domain. The scalar feature value $x_{ij}^{(k)}$ used in the classical formulation corresponds here to the $k$th component of the feature vector $x_{ij}$ for a particular graph $i$.

The edge set
$
\mathcal{E}_i\subseteq\mathcal{V}_i\times\mathcal{V}_i
$
specifies any available topological relationships between nodes. Self-connections are excluded so that spatial autocorrelation is evaluated only between distinct nodes. For point sets without an explicit topology, $\mathcal{E}_i$ may be empty and neighbourhood relationships may instead be inferred directly from the spatial coordinates.

Spatial proximity between nodes $j$ and $j'$ is quantified using a user-specified distance function
$
f_{\mathrm{dist}}:
\mathbb{R}^{c}\times\mathbb{R}^{c}
\rightarrow
\mathbb{R}_{\geq 0}\cup\{\infty\},
$
such that
$
D_{ijj'}
=
f_{\mathrm{dist}}
\left(
c_{ij},
c_{ij'}
\right).
$
For example, $f_{\mathrm{dist}}$ may be the Euclidean, graph-geodesic or a masked distance that assigns infinity to node pairs that cannot interact.

Below, we describe the various components of the SAIL framework, how it can be used for supervised learning, and how it can discover spatial autocorrelation modes in unsupervised settings and statistically significant spatial hotspots.

\subsection{Learning Multivariate Spatial Autocorrelation with SAIL}
SAIL generalises classical SAC from a fixed scalar variable and predefined spatial weighting scheme to a learnable graph operator with modular components for feature transformation, neighbourhood construction, and spatial autocorrelation measurement. Specifically, for a spatial graph $\mathcal{G}$ with node features $\mathbf{X}$ and spatial coordinates $\mathbf{C}$, SAIL produces a transformed graph $\mathcal{G}'$ whose node attributes describe the local spatial autocorrelation of learned feature signals. The framework comprises four key components (Fig.~\ref{pipeline}):

\begin{enumerate}

\item A learnable feature mapping $f_{\mathrm{map}}(\cdot;\theta_{\mathrm{map}}):\mathbb{R}^{d}\rightarrow\mathbb{R}^{h}$ that transforms the node features into $h$ scalar fields on which SAC is evaluated. 

\item An inter-node affinity matrix $\mathbf{A}$ that defines the effective neighbourhood of each node as a linear combination of spatial distance $f_{\mathrm{dist}}$ and a feature-based inter-node affinity function $f_{\mathrm{affinity}}$, with $\tau$ controlling their relative contribution. The feature-based term may be omitted, and $\tau$ may be fixed or learned.

\item A local SAC operator $f_{\mathrm{local}}$ that computes a node-level SAC score for each projected scalar field by comparing each node with its affinity-weighted neighbourhood, thereby quantifying local spatial coherence or contrast across the graph.

\item An order-invariant reduction operator $f_{\mathrm{reduce}}$ that aggregates node-level SAC scores into graph-level descriptors.

\end{enumerate}

Together, these components define the transformation

\begin{equation}
\mathcal{G}' = \mathrm{SAIL}\big(\mathcal{G}; 
f_{\mathrm{map}}, f_{\mathrm{affinity}},f_{\mathrm{dist}}, \tau,f_{\mathrm{local}},f_{\mathrm{act}},f_{\mathrm{reduce}}\big),
\label{eq:sail-main}
\end{equation}

where $\mathcal{G}'$ retains the structure of $\mathcal{G}$, but its node features are replaced by, or augmented with, the learned local SAC descriptors $I^{\mathrm{local}}\in\mathbb{R}^{n\times h}$. Thus, after the SAIL transformation, each node $j$ is represented by the vector $I^{\mathrm{local}}_{j,:}$, which contains its spatial autocorrelation scores across the $h$ learned scalar fields.

The decomposition above highlights the modular nature of SAIL. By varying the feature mapping, affinity construction, local SAC statistic, activation, and reduction, the same framework can instantiate different flavours or notions of spatial autocorrelation within a common computational structure. The learnable parameters are contained primarily within $f_{\mathrm{map}}$ and $f_{\mathrm{affinity}}$, together with the weighting parameter $\tau$ when it is optimised, and are learned jointly from the dataset. In a learning problem involving multiple graphs, the full SAIL configuration is shared across all graphs, including the chosen forms of $f_{\mathrm{dist}}$, $f_{\mathrm{local}}$, $f_{\mathrm{act}}$, and $f_{\mathrm{reduce}}$ and any associated hyperparameters. SAC itself, however, is evaluated independently within each graph. Algorithm~\ref{alg:sail_forward} makes this computation explicit, showing the forward pass through feature projection and standardisation, affinity construction, local SAC evaluation, and graph-level reduction for an individual graph.

We next describe each component in detail.

\begin{algorithm}[t]
\caption{SAIL Forward Pass}
\label{alg:sail_forward}
\begin{algorithmic}

\STATE \textbf{Input:}
Graph $\mathcal{G}=(\mathbf{X},\mathbf{C})$

\STATE \textbf{Parameters:}
$\theta_{\mathrm{map}}$,
$\theta_{\mathrm{affinity}}$, 
$\tau>0$

\STATE \textbf{Operators:}
$f_{\mathrm{standardise}}$,
$f_{\mathrm{dist}}$,
$f_{\mathrm{local}}$,
$f_{\mathrm{act}}$, and
$f_{\mathrm{reduce}}$

\STATE \textbf{Return:}
$I^{\mathrm{local}}\in\mathbb{R}^{n\times h}$,
$I^{\mathrm{global}}\in\mathbb{R}^{h}$

\STATE
\STATE \textbf{function} \textsc{SAIL\_Forward}$(\mathcal{G};\theta,\tau)$

\STATE
$\mathbf{Z}
\gets
f_{\mathrm{map}}
\left(
\mathbf{X};
\theta_{\mathrm{map}}
\right)$
\COMMENT{$h$ projected scalar fields}

\STATE
$\tilde{\mathbf{Z}}
\gets
f_{\mathrm{standardise}}(\mathbf{Z}),
\qquad
\tilde{\mathbf{Z}}_{:,k}
=
\frac{\mathbf{Z}_{:,k}-\mu_k}{\sigma_k},
\quad
k=1,\ldots,h$

\FOR{$j=1,\ldots,n$}
    \FOR{$j'=1,\ldots,n$}
        \STATE
        $D_{jj'}
        \gets
        f_{\mathrm{dist}}
        \left(
        c_j,c_{j'}
        \right)$

        \STATE
        $F_{jj'}
        \gets
        f_{\mathrm{affinity}}
        \left(
        x_j,x_{j'};
        \theta_{\mathrm{affinity}}
        \right)$

        \STATE
        $\alpha_{jj'}
        \gets
        -\left(
        \tau D_{jj'}+F_{jj'}
        \right)$
    \ENDFOR

    \STATE
    $\alpha_{jj}\gets-\infty$

    \STATE
    $\mathbf{A}_{j,:}
    \gets
    \operatorname{softmax}
    \left(
    \alpha_{j,:}
    \right)$
\ENDFOR

\FOR{$k=1,\ldots,h$}
    \STATE
    $I^{\mathrm{local}}_{:,k}
    \gets
    f_{\mathrm{act}}
    \left(
    f_{\mathrm{local}}
    \left(
    \tilde{\mathbf{Z}}_{:,k},
    \mathbf{A}
    \right)
    \right)$
\ENDFOR

\STATE
$I^{\mathrm{global}}
\gets
f_{\mathrm{reduce}}
\left(
I^{\mathrm{local}}
\right)$

\STATE
\textbf{return}
$I^{\mathrm{local}},I^{\mathrm{global}}$

\STATE \textbf{end function}

\end{algorithmic}
\end{algorithm}

\subsubsection{Input Feature Space Mapping}

To evaluate SAC in high-dimensional data, SAIL first constructs candidate scalar fields from the node features. Let $\mathbf{X}\in\mathbb{R}^{n\times d}$ denote the node-feature matrix for a graph with $n$ nodes. A learnable mapping $f_{\mathrm{map}}(\cdot;\theta_{\mathrm{map}}):\mathbb{R}^{d}\rightarrow\mathbb{R}^{h}$ produces

$$
\mathbf{Z}
=
f_{\mathrm{map}}
\left(
\mathbf{X};
\theta_{\mathrm{map}}
\right)
\in
\mathbb{R}^{n\times h},
$$

where each column of $\mathbf{Z}$ defines a scalar field on which SAC can be evaluated. In its simplest form, $f_{\mathrm{map}}$ is linear,

$$
\mathbf{Z}
=
\mathbf{X}\mathbf{W},
\qquad
\mathbf{W}
\in
\mathbb{R}^{d\times h},
$$

although non-linear, sparse, or orthogonalised mappings may also be used (Supplementary Section~\ref{sec:supp_fmap}). Non-linear mappings allow SAIL to capture more complex combinations of the original features, sparse mappings can encourage compact and interpretable feature combinations, and orthogonalised mappings, for example through optimisation on the Stiefel manifold, can promote diverse and non-redundant projected scalar fields. Unlike classical SAC operators, which are typically applied to predefined scalar variables, SAIL is therefore not restricted to the original feature space and can learn task-relevant scalar fields on which spatial autocorrelation is evaluated.

Because the scale of the learned projections is arbitrary, each projected scalar field is standardised using $f_{\mathrm{standardise}}$. This centres and scales the projected signals so that the resulting SAC scores reflect spatial coherence rather than differences in magnitude or variance. The standardised scalar fields are collected in $\tilde{\mathbf{Z}}=f_{\mathrm{standardise}}(\mathbf{Z})\in\mathbb{R}^{n\times h}$.

\subsubsection{Learnable Spatial Affinities}

SAIL next defines how nodes influence one another through an inter-node affinity matrix. Classical SAC methods typically use a predefined spatial weighting scheme, such as adjacency or inverse distance. SAIL instead allows neighbourhood influence to depend on spatial proximity and, optionally, learned feature relationships.

For each pair of nodes $(j,j')$ in the graph, their spatial separation is given by $D_{jj'}=f_{\mathrm{dist}}(c_j,c_{j'})$, and an unnormalised affinity score is defined as

$$
\alpha_{jj'}
=
-
\left(
\tau D_{jj'}
+
f_{\mathrm{affinity}}
\left(
x_j,x_{j'};
\theta_{\mathrm{affinity}}
\right)
\right),
$$

where $\tau>0$ controls the contribution of spatial distance and $f_{\mathrm{affinity}}$ models feature-dependent dissimilarity. Larger dissimilarity thus decreases the affinity between two nodes.

The resulting scores are converted into non-negative neighbourhood weights through row-wise softmax normalisation:

$$
A_{jj'}
=
\frac{
\exp(\alpha_{jj'})
}{
\sum_{l=1}^{n}\exp(\alpha_{jl})
},
\qquad
\mathbf{A}
\in
\mathbb{R}^{n\times n}.
$$

By optimising $\tau$ and $\theta_{\mathrm{affinity}}$, SAIL can adapt the effective neighbourhood structure to the data and learning objective. Purely spatial affinities are obtained by omitting the feature-dependent term. Self-connections are excluded by default by setting $A_{jj}=0$, so that SAC is evaluated with respect to neighbouring nodes rather than the node itself, but they may be retained when required by the chosen SAC formulation.

\subsubsection{Local and Graph-Level SAIL Outputs}

Given the standardised projected fields $\tilde{\mathbf{Z}}\in\mathbb{R}^{n\times h}$ and affinity matrix $\mathbf{A}\in\mathbb{R}^{n\times n}$, the local SAC operator $f_{\mathrm{local}}$ computes a node-level spatial statistic for each projected scalar field. Here, $\tilde{\mathbf{Z}}=[\tilde{\mathbf z}^{(1)},\ldots,\tilde{\mathbf z}^{(h)}]$, where $\tilde{\mathbf z}^{(k)}\in\mathbb{R}^{n}$ denotes the $k$th standardised scalar field across all nodes and $\tilde z_j^{(k)}$ is its value at node $j$. For each field, $f_{\mathrm{local}}$ compares the value at a node with the values of other nodes according to the affinities in $\mathbf{A}$, thereby quantifying the local spatial relationship around that node. Different choices of $f_{\mathrm{local}}$ instantiate different notions of spatial structure within the same SAIL framework.

For example, a Moran-style operator measures local agreement between the value at node $j$ and the affinity-weighted values of its neighbourhood:

\begin{equation}
I_{jk}^{\mathrm{local}}
=
\tilde z_j^{(k)}
\left[
\mathbf A
\tilde{\mathbf z}^{(k)}
\right]_j,
\label{eq:moran_local}
\end{equation}

where $[\mathbf A\tilde{\mathbf z}^{(k)}]_j=\sum_{j'}A_{jj'}\tilde z_{j'}^{(k)}$ is the spatial lag of the $k$th scalar field at node $j$. Positive values indicate that the node and its neighbourhood have similar signed deviations from the field mean, whereas negative values indicate local spatial contrast.

In contrast, a Geary-style operator measures local dissimilarity directly through affinity-weighted differences between the node and its neighbours:

$$
I_{jk}^{\mathrm{local}}
=
\frac{1}{2}
\sum_{j'=1}^{n}
A_{jj'}
\left(
\tilde z_j^{(k)}
-
\tilde z_{j'}^{(k)}
\right)^2.
$$

Here, smaller values indicate greater local similarity, while larger values indicate stronger local heterogeneity.

SAIL also supports Getis--Ord $G^\star$-type hotspot measures and Morisita--Horn-type neighbourhood-overlap statistics via alternative choices of $f_{\mathrm{local}}$. The precise form of the local operator, together with any associated preprocessing or affinity conventions, depends on the statistic being instantiated.

An optional elementwise transformation $f_{\mathrm{act}}$, such as ReLU or ELU, may then be applied,

$$
I_{jk}^{\mathrm{local}}
\leftarrow
f_{\mathrm{act}}
\left(
I_{jk}^{\mathrm{local}}
\right),
$$

yielding the node-level SAC descriptor matrix $I^{\mathrm{local}}\in\mathbb{R}^{n\times h}$.

Graph-level descriptors are obtained by applying an order-invariant reduction operator $f_{\mathrm{reduce}}:\mathbb{R}^{n\times h}\rightarrow\mathbb{R}^{h}$ to the local scores:

$$
I^{\mathrm{global}}
=
f_{\mathrm{reduce}}
\left(
I^{\mathrm{local}}
\right).
$$

Thus, $I^{\mathrm{local}}$ describes where spatial structure occurs within the graph, while $I^{\mathrm{global}}$ summarises the overall degree of spatial coherence or heterogeneity for each learned scalar field. 

\subsection{Supervised Learning with SAIL} 



A key question in spatial learning is whether the spatial autocorrelation of node-level feature signals is predictive of a target of interest. Two spatial samples, such as graphs, may have similar overall feature distributions yet differ substantially in how those features are organised across neighbouring nodes. For example, two images may contain similar visual features but differ in how those features are distributed across adjacent regions, while two tissue samples may contain similar cell populations or molecular programmes but differ in whether they are spatially clustered, dispersed, or organised into coherent niches.

Because SAIL transforms node-level features into learnable SAC descriptors, it is naturally suited to supervised tasks in which the target depends on SAC. These descriptors can be used directly for classification, regression, survival modelling, or other predictive objectives, allowing the model to learn which patterns of spatial autocorrelation are most informative for the target.

Formally, given a dataset $\mathcal{D}=\{(\mathcal{G}_i, y_i)\}_{i=1}^N$, where each $\mathcal{G}_i$ is a spatial graph and $y_i$ is a graph-level target label, SAIL transforms node features into SAC descriptors, either at the node $I^{\mathrm{local}}_i$ or graph level $I^{\mathrm{global}}_i$. These descriptors are passed to a prediction head $f_{\mathrm{pred}}$, and the full parameter set is optimised end-to-end by minimising a task-specific loss:
\[
\min_{\theta} 
\sum_{i=1}^{N}
\mathcal{L}
\left(
f_{\mathrm{pred}}
\big(
\mathrm{SAIL}(\mathcal{G}_i; \theta_{\mathrm{SAIL}});
\theta_{\mathrm{pred}}
\big),
y_i
\right).
\]

Since the predictor uses only SAIL outputs, performance reflects how much the target depends on SAC. High accuracy indicates informative SAC; near-chance accuracy indicates little dependence. Pseudocode can be found in Supplementary Algorithm~\ref{alg:sail_supervised}.

\subsection{Unsupervised Learning of Spatial Autocorrelation Modes}
\label{sec:modes}

Classical SAC statistics quantify the spatial coherence of a fixed scalar variable. In high-dimensional settings, however, spatial structure may only emerge through combinations of features. SAIL addresses this by learning projections that maximise or minimise spatial coherence across a graph dataset. We refer to these as \emph{SAIL Modes} and \emph{SAIL Anti-Modes}, respectively.

Formally, let $\mathcal{D} = \{\mathcal{G}_i\}_{i=1}^N$ denote a graph dataset, each with node features $\mathbf{X}_i \in \mathbb{R}^{n_i \times d}$. A projection vector $w \in \mathbb{R}^d$ defines a scalar field $z_i = \mathbf{X}_i w$ on each graph. We quantify its expected spatial coherence as
$$
J_{\mathrm{coh}}(w)
=
\mathbb{E}_{\mathcal{G}_i \sim \mathcal{D}}
\left[
C\left(
I^{\mathrm{global}}
\left(
\mathcal{G}_i;
f_{\mathrm{map}}(x)=x^\top w
\right)
\right)
\right],
$$
where $C(\cdot)$ ensures that larger values indicate stronger spatial coherence, for example, $C(I)=I$ for Moran-style operators and $C(I)=-I$ for Geary-style operators. SAIL Modes and Anti-Modes are then defined as
$$
\begin{aligned}
w^*_{\mathrm{mode}}
&= \underset{\|w\|=1}{\arg\max}\;J_{\mathrm{coh}}(w),\\
w^*_{\mathrm{anti}}
&= \underset{\|w\|=1}{\arg\min}\;J_{\mathrm{coh}}(w).
\end{aligned}
$$
Thus, Modes identify feature combinations exhibiting maximal spatial clustering, whereas Anti-Modes capture maximal spatial contrast.

To discover multiple patterns, we jointly optimise $K$ projection directions collected in $\mathbf{W}=[w^{(1)},\ldots,w^{(K)}]\in\mathbb{R}^{d\times K}$. The multi-mode objective promotes spatial coherence while encouraging diversity and sparsity:
$$
\begin{aligned}
\min_{\mathbf{W}}\quad
&
-\frac{1}{K}\sum_{r=1}^{K}J_{\mathrm{coh}}(w^{(r)})
\\
&+\lambda_{\mathrm{corr}}
\sum_{r\neq s}\mathrm{Corr}(z^{(r)},z^{(s)})^2
+\lambda_{\mathrm{sparse}}\|\mathbf{W}\|_1.
\end{aligned}
$$
Anti-Modes are learned by reversing the sign of the first term. The correlation penalty encourages distinct spatial patterns, while sparsity favours interpretable feature combinations. Optionally, the orthogonality constraint $\mathbf{W}^{\top}\mathbf{W}=\mathbf{I}_K$ can further promote diversity. Pseudocode is provided in Supplementary Algorithm~\ref{alg:sail_modes}.

\subsection{Statistically Significant Spatial Hotspot Discovery}
\label{hotspot_discovery}

While supervised learning with SAIL evaluates whether SAC carries a predictive signal, and unsupervised SAIL Modes identify directions of maximal SAC, both operate at a global level. They quantify or discover \emph{whether} spatial structure exists, but do not localise \emph{where} it occurs within the graph.

To address this, we adapt SAIL's local scores to a Local Indicators of Spatial Association (LISA) style test~\cite{anselin1995local}. This categorisation requires a \emph{signed} local statistic, since the High--High/Low--Low/High--Low/Low--High classification depends on the sign of the node value and the sign of its spatial lag. We therefore use the Moran-style local score, Eq.~\eqref{eq:moran_local}, as its formulation naturally decomposes into these four regions.

For a scalar field $\tilde{z}^{(k)}$, we define the spatial lag at node $j$ as the affinity-weighted neighbourhood average,
\begin{equation}
\ell_j^{(k)} = \sum_{j'} A_{jj'}\, \tilde{z}_{j'}^{(k)},
\label{eq:spatial_lag}
\end{equation}
so that the Moran-style local score, $I_{jk}^{\mathrm{local}}$, serves as the test statistic for node-level significance. Under the null hypothesis of spatial independence, we generate a reference distribution by randomly permuting $\tilde{z}^{(k)}$ across nodes, recomputing the lag $\ell_j^{(k)}$ and the resulting statistic for each permutation, holding the affinity matrix $\mathbf{A}$ fixed. Empirical $p$-values for each node are obtained by comparing the observed statistic to this permutation null, with either false-discovery-rate (Benjamini--Hochberg) or max-T~\cite{westfall1993resampling} correction applied across nodes.

Nodes whose local statistic is deemed significant are then categorised according to the signs of $\tilde{z}_j^{(k)}$ and its spatial lag $\ell_j^{(k)}$ considered separately: High--High (a high-value node surrounded by high-value neighbours), Low--Low (coldspots), and the discordant categories High--Low and Low--High (spatial outliers). Non-significant nodes remain unclassified. This procedure transforms SAIL's learned local scores into statistically validated spatial structure maps, enabling rigorous localisation of coherent and heterogeneous regions.

\section{Theoretical Analysis} \label{Thereotical_Analysis}
This section establishes SAIL as a principled generalisation of classical spatial autocorrelation (SAC) to learnable graph representations. Theorem~\ref{theorem:classical_reduction} shows that SAIL exactly recovers classical SAC statistics under appropriate configurations, preserving their statistical interpretation. Theorem~\ref{theorem:inductive_bias} establishes that SAIL is well suited to SAC-driven prediction: it can approximate Bayes-optimal predictors under stated assumptions, explicitly represents quadratic spatial interactions beyond the capacity of linear $T$-hop graph filters, and has no greater Rademacher complexity than broader hypothesis classes containing SAIL. Together, these results show that SAIL combines the interpretability of classical spatial statistics with the representational capacity and inductive bias needed for learning from spatial structure.

\begin{theorem}[Classical spatial statistics as exact special cases of SAIL]
\label{theorem:classical_reduction}

Let $\mathbf{A}=[A_{jj'}]\in\mathbb{R}_{\geq 0}^{n\times n}$ be a fixed
row-stochastic spatial weight matrix, i.e., $\mathbf{A}\mathbf{1}=\mathbf{1}$,
and let $f_{\mathrm{act}}=\mathrm{Id}$. 
Then, under appropriate fixed choices of the SAIL mapping,
local-scoring, standardisation, and reduction operators, SAIL (Eq.~\eqref{eq:sail-main}) exactly
recovers:

\begin{enumerate}
    \item[(i)] Moran's $I$ for a non-constant scalar mapped field,
    using population standardisation, $A_{jj}=0$, and mean reduction;

    \item[(ii)] Geary's $C$ for a non-constant scalar mapped field,
    using sample standardisation, $A_{jj}=0$, and mean reduction;

    \item[(iii)] the standardised Getis--Ord $G_j^\star$ statistic
    for a non-constant scalar mapped field, using population
    standardisation and allowing self-weights as in the starred
    statistic, whenever its null variance is non-zero; and

    \item[(iv)] the Morisita--Horn similarity between a non-negative,
    unstandardised multichannel mapped representation at a node and its
    affinity-weighted neighbourhood composition, provided that
    $f_{\mathrm{local}}$ is allowed to act jointly on the corresponding
    mapped channels.
\end{enumerate}

Specifically, the resulting SAIL outputs are respectively
$I_{\mathrm{Moran}}(z;\mathbf{A})$, $C_{\mathrm{Geary}}(z;\mathbf{A})$,
$G_j^\star(z;\mathbf{A})$, and
$C_{\mathrm{MH}}(u_j,v_j)$, where
$u_j=f_{\mathrm{map}}(x_j)\in\mathbb{R}_{\geq 0}^{h}$
is the mapped representation at node $j$ and
$v_j=\sum_{j'}A_{jj'}u_{j'}$ is its
affinity-weighted neighbourhood composition.
\end{theorem}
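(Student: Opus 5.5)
The plan is to verify each of the four cases by direct substitution. In each case I will fix $f_{\mathrm{map}}$ to be the identity (or a fixed coordinate selection), $f_{\mathrm{act}}=\mathrm{Id}$, and the stated affinity $\mathbf{A}$. I then expand the SAIL output algebraically until it coincides with the classical formula, using $\mathbf{A}\mathbf{1}=\mathbf{1}$ to evaluate the normalising constants. Row-stochasticity gives $\sum_{j\neq j'}A_{jj'}=n$ when $A_{jj}=0$, and $\sum_{j,j'}A_{jj'}=n$ in general. Non-constancy of the field guarantees $\sigma>0$, so $f_{\mathrm{standardise}}$ is well defined.

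\emph{Moran's $I$.} Take $\tilde z_j=(z_j-\bar z)/\sigma_{\mathrm{pop}}$ with $\sigma_{\mathrm{pop}}^2=\frac1n\sum_j(z_j-\bar z)^2$. The Moran-style local operator \eqref{eq:moran_local} then gives
\[
\frac1n\sum_j I_j^{\mathrm{local}}=\frac{\sum_{j\ne j'}A_{jj'}(z_j-\bar z)(z_{j'}-\bar z)}{\sum_j(z_j-\bar z)^2}.
\]
With $W=\sum_{j\neq j'}A_{jj'}=n$, the prefactor $n/W$ in \eqref{eq:moran} equals $1$, so this is exactly $I_{\mathrm{Moran}}(z;\mathbf A)$.

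\emph{Geary's $C$.} Use sample standardisation, $s^2=\frac{1}{n-1}\sum_j(z_j-\bar z)^2$. The mean of the Geary-style local scores is
\[
\frac{1}{2n}\sum_{j\neq j'}A_{jj'}\frac{(z_j-z_{j'})^2}{s^2}
=\frac{(n-1)\sum_{j\ne j'}A_{jj'}(z_j-z_{j'})^2}{2n\sum_j(z_j-\bar z)^2},
\]
which matches \eqref{eq:geary} with $W=n$.

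\emph{Getis--Ord $G_j^\star$.} Here I will use the standard standardised form from the supplement. For weights $w_{jj'}$ including the self-weight,
\[
G_j^\star=\frac{\sum_{j'}w_{jj'}z_{j'}-\bar z\,W_j}{S\sqrt{\bigl(nS_{2j}-W_j^2\bigr)/(n-1)}},
\]
with $W_j=\sum_{j'}w_{jj'}$, $S_{2j}=\sum_{j'}w_{jj'}^2$, and $S$ the population standard deviation. Row-stochasticity gives $W_j=1$, so the numerator becomes $S\,[\mathbf A\tilde z]_j$ under population standardisation. The local operator is therefore $f_{\mathrm{local}}(\tilde z,\mathbf A)_j=[\mathbf A\tilde z]_j\big/\sqrt{(n\sum_{j'}A_{jj'}^2-1)/(n-1)}$, with identity reduction so that node-level outputs are kept. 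This is a fixed function of $(\tilde z,\mathbf A)$, and the non-zero null-variance hypothesis is exactly the condition $n\sum_{j'}A_{jj'}^2\neq 1$ that makes the denominator non-zero. I expect this case to be the main obstacle. The supplementary definition may carry its own normalisation convention, such as $S$ with $n$ versus $n-1$ in the denominator, or unnormalised weights. I would first check that convention and absorb any difference into the fixed local operator, noting that $G^\star$ is invariant to row rescaling of the weights, so that row-normalising to $\mathbf A$ loses no generality.

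\emph{Morisita--Horn.} Skip standardisation and take $f_{\mathrm{map}}$ with non-negative outputs $u_j$, so that $v_j=\sum_{j'}A_{jj'}u_{j'}\ge 0$. Define the joint local operator
\[
f_{\mathrm{local}}(u_j,v_j)=\frac{2\langle u_j,v_j\rangle}{\bigl(\|u_j\|^2/\|u_j\|_1^2+\|v_j\|^2/\|v_j\|_1^2\bigr)\|u_j\|_1\|v_j\|_1},
\]
which is exactly $C_{\mathrm{MH}}(u_j,v_j)$ written in proportion form. The only care needed is the convention when $u_j$ or $v_j$ is zero, which I would inherit from the supplementary definition. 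The reduction is again either identity or mean, as desired.
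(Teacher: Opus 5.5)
Your cases (i)--(iii) are the paper's proof essentially verbatim. The paper substitutes $S_0=n$ from $\mathbf A\mathbf 1=\mathbf 1$ and uses $\sum_j(z_j-\bar z)^2=n\sigma_z^2$ or $(n-1)s_z^2$. For $G_j^\star$ it rewrites the numerator as $\sigma_z(\mathbf A\tilde z)_j$ and keeps the node-level output. The convention check you worried about for $G_j^\star$ is harmless, because the supplement's $S$ is already the population standard deviation. Your remark that $G_j^\star$ is invariant under positive row rescaling is correct but not needed.

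The real difference is in (iv). The paper puts the normalisation into the mapping, $u_j=f_{\mathrm{map}}(x_j)=x_j/\|x_j\|_1$. Row-stochasticity then gives $\|v_j\|_1=1$, so only the short proportion form $2u_j^\top v_j/(\|u_j\|_2^2+\|v_j\|_2^2)$ is needed. You instead allow any non-negative map and build the normalisation into $f_{\mathrm{local}}$ through the abundance form.

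Your route is more general. It covers learned non-negative maps. It does not use $\mathbf A\mathbf 1=\mathbf 1$ at all, because the abundance form is invariant to positive rescaling of either argument. It reduces to the paper's construction when $\|u_j\|_1\equiv 1$.

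The paper's choice buys agreement with the supplementary definition \eqref{eq:morisitahorn}, where the neighbourhood composition is $m_j=\sum_{j'}A_{jj'}p_{j'}$, an $\mathbf A$-weighted average of node proportions. With an unnormalised map, your $v_j/\|v_j\|_1$ weights neighbour $j'$ by $A_{jj'}\|u_{j'}\|_1$ rather than by $A_{jj'}$. It therefore equals $m_j$ only when the norms $\|u_{j'}\|_1$ are constant. The paper's choice also removes the zero-vector caveat you raise, since $x_j\neq 0$ forces $\|u_j\|_1=\|v_j\|_1=1$.
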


\begin{theorem}[Inductive Bias of SAIL for SAC-Driven Learning]
\label{theorem:inductive_bias}

Let $P$ be a distribution over spatial graphs
$G=(V,E,\mathbf{X},\mathbf{C})$, with an optional
associated target variable $Y\in\mathcal{Y}$,
and population risk
\[
R_P(f)=\mathbb{E}_{(G,Y)\sim P}
\left[\mathcal{L}_P(Y,f(G))\right],
\]

and Bayes-optimal predictor $f_P^\star\in\arg\min_f R_P(f)$ with risk $R_P^\star=R_P(f_P^\star)$.

Let $\mathcal{P}_{\mathrm{SAC}}$ denote the class of problems whose Bayes-optimal predictor depends on one or more spatial-autocorrelation (SAC) descriptors of node signals. For clarity, consider a single global descriptor,
$$
f_P^\star(G)=\rho_P\!\left(S_P(z_P,\mathbf{A}_P)\right),
\qquad
z_P=g_P(\mathbf{X})\in\mathbb{R}^n,
$$
where $\mathbf{A}_P$ is a spatial-affinity operator, $S_P$ is a SAC functional formed from local spatial interactions followed by an order-invariant reduction, and $\rho_P$ maps the SAC descriptor to the prediction target. 

Let
$$
\mathcal{F}_{\mathrm{SAIL}}
=
\left\{
G\mapsto
f_{\mathrm{pred}}
\left(
I^{\mathrm{global}}(G;\theta_{\mathrm{SAIL}});
\theta_{\mathrm{pred}}
\right)
\right\}
$$
denote the supervised SAIL hypothesis class, where $I^{\mathrm{global}}$ is obtained from the feature mapping $f_{\mathrm{map}}$, learned affinity $\mathbf{A}$, local SAC operator $f_{\mathrm{local}}$, optional activation $f_{\mathrm{act}}$, and reduction $f_{\mathrm{reduce}}$ of Eq.~\eqref{eq:sail-main}.

Assume that the admissible SAIL component families are sufficiently expressive to approximate $g_P$, $A_P$, and $\rho_P$, and that the chosen local and reduction operators implement $S_P$. Assume also that $0\leq\mathcal{L}_P\leq M$ and that $\mathcal{L}_P$ is $L$-Lipschitz in its prediction argument. Define
$$
\Delta_P(\mathcal{F})
=
\inf_{f\in\mathcal{F}}R_P(f)-R_P^\star,
$$
and let $\mathfrak{R}_m(\mathcal{F})$ denote the Rademacher complexity of a hypothesis class $\mathcal{F}$ with respect to $m$ i.i.d.\ samples from $P$.

Then:

\textnormal{(i) Sufficiency.}
For every $P\in\mathcal{P}_{\mathrm{SAC}}$ and every $\varepsilon>0$, there exists $f_\varepsilon\in\mathcal{F}_{\mathrm{SAIL}}$ such that
$$
\mathbb{E}_G\!\left[
|f_\varepsilon(G)-f_P^\star(G)|
\right]\leq\varepsilon,
$$
and therefore
$$
\Delta_P(\mathcal{F}_{\mathrm{SAIL}})
\leq L\varepsilon.
$$
If $\widehat f_{\mathrm{SAIL}}$ is an $\xi_m$-approximate empirical-risk minimiser from $m$ i.i.d.\ examples, then with probability at least $1-\delta$,
$$
R_P(\widehat f_{\mathrm{SAIL}})-R_P^\star
\leq
L\varepsilon+\xi_m
+
4L\mathfrak{R}_m(\mathcal{F}_{\mathrm{SAIL}})
+
2M\sqrt{\frac{\log(2/\delta)}{2m}}.
$$

\textnormal{(ii) Necessity of higher-order interaction for quadratic SAC.}
Consider the quadratic SAC subclass
$$
S(z,\mathbf{A})=z^\top Q(\mathbf{A})z,
$$
with $\mathbf{A}$ fixed with respect to $z$ and
$$
Q_s(\mathbf{A})=\frac{Q(\mathbf{A})+Q(\mathbf{A})^\top}{2}\neq0.
$$
Then
$$
\nabla_z^2 S(z,\mathbf{A})=2Q_s(\mathbf{A})\neq0.
$$
Let
$$
\mathcal{F}_{\mathrm{lin},T}
=
\left\{
f(z,\mathbf{A})=w^\top H_T(\mathbf{A})z+c:
H_T(\mathbf{A})=\sum_{t=0}^{T}\beta_t\mathbf{A}^t
\right\}
$$
denote the class of finite $T$-hop linear graph filters followed by linear pooling and readout. This includes, for example, Simple Graph Convolution (SGC), linearised Graph Convolution Networks (GCNs), polynomial graph filters, and linear message-passing models with linear pooling/readout. Every $f\in\mathcal{F}_{\mathrm{lin},T}$ is affine in $z$ and hence satisfies $\nabla_z^2f=0$. Thus $\mathcal{F}_{\mathrm{lin},T}$ cannot in general represent the quadratic SAC subclass, whereas SAIL explicitly contains the interaction
$$
z_j(\mathbf{A}z)_j=\sum_{j'}A_{jj'}z_j z_{j'}.
$$
\textnormal{(iii) Comparison with broader classes.}
More general non-linear graph neural networks (GNNs) may also represent the higher-order
interactions required by SAC, but only as one possibility within a
broader function class. Let $\mathcal{F}_{\mathrm{gen}}$ denote such a
broader SAC-capable hypothesis class satisfying
$$
\mathcal{F}_{\mathrm{SAIL}}
\subseteq
\mathcal{F}_{\mathrm{gen}}.
$$
Then
$$
\mathfrak{R}_m(\mathcal{F}_{\mathrm{SAIL}})
\leq
\mathfrak{R}_m(\mathcal{F}_{\mathrm{gen}}).
$$
Hence, once the SAC structure is already representable by SAIL,
additional generic expressivity cannot, by itself, yield a smaller
Rademacher-complexity term in the corresponding standard
generalisation bound. If the two classes attain the same approximation
and optimisation errors on a SAC-driven problem, the resulting
excess-risk bound for SAIL is no larger.

Thus SAIL provides a matched inductive bias for $\mathcal{P}_{\mathrm{SAC}}$: it contains the interactions required by SAC-driven prediction, simpler linear-interaction classes may fail to represent them, while broader enclosing classes provide no smaller complexity-dependent learning guarantee solely through their additional expressivity.
\end{theorem}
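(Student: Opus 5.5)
The proof splits along the three parts of the statement. Each part reduces to a standard argument once the right decomposition is written down.

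\emph{Part (i).} I would build $f_\varepsilon$ by composition and bound the error stage by stage.
\begin{itemize}
\item Pick components $\hat g$, $\hat{\mathbf A}$ and $\hat\rho$ from the admissible families, approximating $g_P$, $\mathbf A_P$ and $\rho_P$. Use the assumption that $f_{\mathrm{local}}$ and $f_{\mathrm{reduce}}$ implement $S_P$ exactly.
\item Decompose
\[
|f_\varepsilon(G)-f_P^\star(G)|\le |\hat\rho(\hat S)-\rho_P(\hat S)|+|\rho_P(\hat S)-\rho_P(S_P)|,
\]
where $\hat S=S_P(\hat g(\mathbf X),\hat{\mathbf A})$.
\item Control the second term through continuity of $\rho_P$ and of $S_P$ in $(z,\mathbf A)$. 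For quadratic SAC, $S_P$ is locally Lipschitz on bounded sets.
\item Handle integrability by truncating to a high-probability compact set of graphs, so that the expected error is at most $\varepsilon$.
\end{itemize}
Then $R_P(f_\varepsilon)-R_P^\star\le \mathbb E\,|\mathcal L_P(Y,f_\varepsilon)-\mathcal L_P(Y,f_P^\star)|\le L\varepsilon$ by the Lipschitz property. This gives the bound on $\Delta_P$.

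For the excess-risk bound, I would use the usual three-term decomposition: approximation, optimisation $\xi_m$, and twice the uniform deviation $\sup_f|R_P-\hat R_m|$. The uniform deviation follows from McDiarmid's inequality, since the loss is bounded by $M$, together with symmetrisation. Talagrand's contraction lemma then moves from the loss class to $\mathcal F_{\mathrm{SAIL}}$ at a cost of factor $L$. Applying both tails yields the $\log(2/\delta)$ term. Constants will need some bookkeeping to land exactly on $4L\mathfrak R_m$ and $2M\sqrt{\cdot}$.

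\emph{Part (ii).} This is direct differentiation.
\begin{itemize}
\item For $S=z^\top Qz$ we have $\nabla_z S=(Q+Q^\top)z$, so $\nabla_z^2 S=2Q_s$.
\item For $f=w^\top H_T(\mathbf A)z+c$ with $\mathbf A$ fixed, $f$ is affine in $z$, so its Hessian vanishes.
\item Suppose some $f\in\mathcal F_{\mathrm{lin},T}$ equals $S$ on an open set of $z$. Then the Hessians agree there, giving $Q_s=0$, a contradiction. Even approximation fails uniformly on bounded sets: an affine function cannot approximate a nonzero quadratic form along a direction $v$ with $v^\top Q_s v\neq0$, which I would show by a midpoint-convexity defect argument.
\item To show SAIL contains the interaction, take the Moran-style local operator of Eq.~\eqref{eq:moran_local} with mean reduction. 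This gives $\frac1n z^\top\mathbf A z$, which has nonzero symmetric part whenever $\mathbf A$ is nonzero and nonnegative.
\end{itemize}

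\emph{Part (iii).} This is monotonicity of the supremum in the definition of Rademacher complexity: $\mathcal F_{\mathrm{SAIL}}\subseteq\mathcal F_{\mathrm{gen}}$ gives a pointwise inequality of suprema, then take expectations. The consequence for the bound follows because the bound is monotone in $\mathfrak R_m$ when the other terms are equal.

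The main obstacle is part (i). The assumption "sufficiently expressive" must be made precise: approximation in a norm strong enough to make $S_P$ and $\rho_P$ continuous, and the softmax affinity approximating $\mathbf A_P$ uniformly. I would state it as uniform approximation on compacts plus tightness of $P$.
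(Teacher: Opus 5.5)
Your proposal is correct and takes essentially the same route as the paper. For (i) that route is the component-wise correspondence, continuity of the composition and the Lipschitz loss, followed by the standard Rademacher bound; for (ii) it is the vanishing Hessian of affine filters; for (iii) it is monotonicity of the supremum. Your McDiarmid--symmetrisation--contraction bookkeeping does land exactly on $4L\mathfrak{R}_m(\mathcal{F}_{\mathrm{SAIL}})+2M\sqrt{\log(2/\delta)/(2m)}$.

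The one step to tighten is the truncation in (i). Restricting to a high-probability compact set $K$ controls $\mathbb{E}_G|f_\varepsilon(G)-f_P^\star(G)|$ only if you also clip the prediction head and assume $f_P^\star$ is integrable. For the bound on $\Delta_P(\mathcal{F}_{\mathrm{SAIL}})$ alone, you can instead absorb the tail via $0\leq\mathcal{L}_P\leq M$. This gives $R_P(f_\varepsilon)-R_P^\star\leq L\eta+M\,P(G\notin K)$, with $\eta$ the uniform approximation error on $K$.
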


\section{Experimental Results and Applications}
We evaluate SAIL through a series of experiments designed to assess its statistical correctness, predictive capabilities, scalability, and applicability across diverse spatial data. We first validate its ability to recover classical spatial autocorrelation statistics and examine its advantages over conventional graph neural networks on synthetic tasks. We then investigate its ability to discover meaningful spatial patterns in real-world datasets spanning histology, spatial transcriptomics, spatial proteomics, and geospatial analysis, including large-scale graphs with millions of nodes. Together, these experiments demonstrate how learnable spatial autocorrelation supports both predictive modelling and unsupervised discovery of spatial structure. For readers less familiar with the biomedical setting, Supplementary Section~\ref{sec:supp_bio_background} summarises the relevant terminology, including spatial omics modalities, whole-slide images, the Dice overlap score and the survival-analysis metrics used throughout.

\subsection{Recovering classical spatial autocorrelation with SAIL}
\label{sec:synthetic_experiments}

\begin{figure}[!t]
\centering
\includegraphics[width=3in]{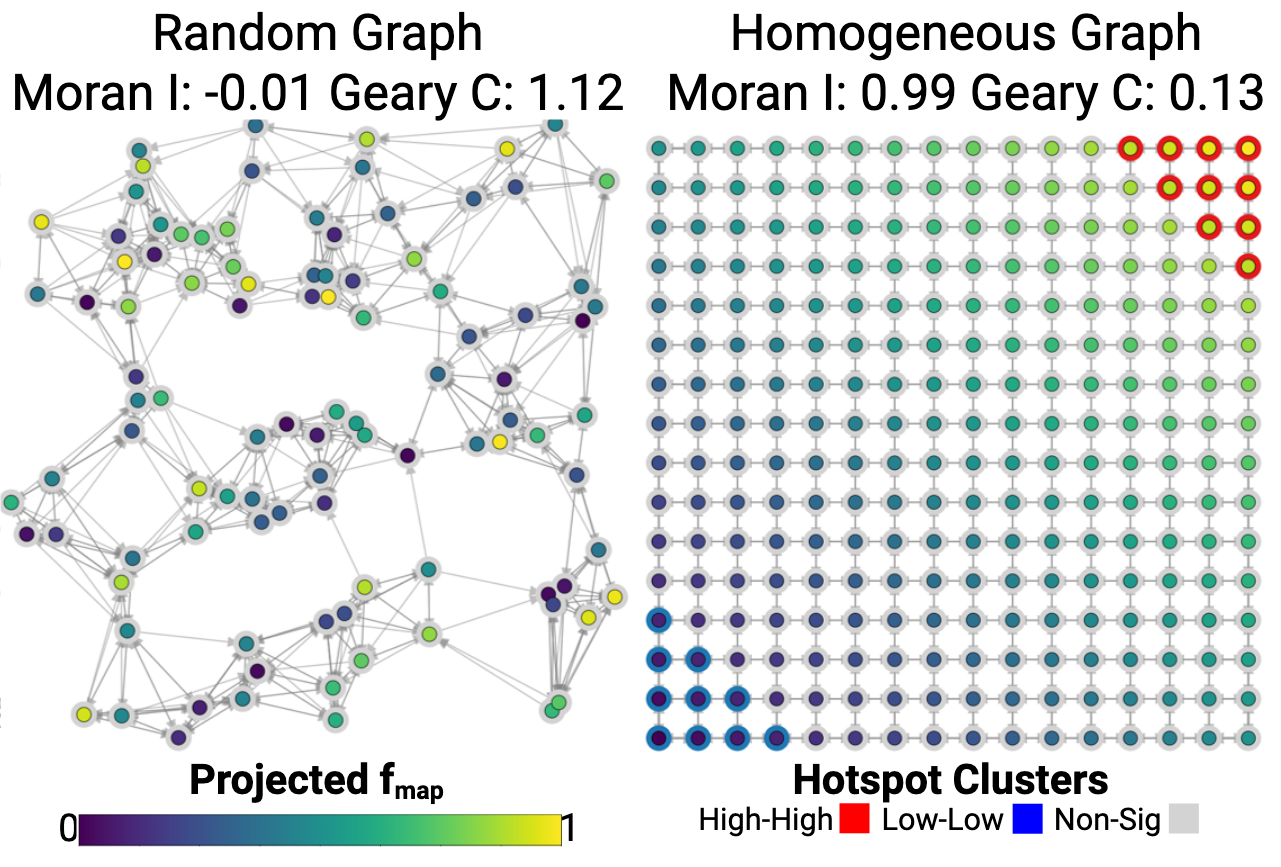}
\caption{SAIL recovers classical SAC on synthetic graphs with known structure. Left: a graph with unstructured, independently sampled node values (colour bar shows feature value), for which SAIL correctly reports near-zero global SAC (Moran's $I=-0.01$, Geary's $C=1.12$) and classifies most nodes as non-significant under hotspot analysis. Right: a spatially homogeneous graph with a smooth imposed gradient, for which SAIL correctly detects strong positive autocorrelation (Moran's $I=0.99$, Geary's $C$ $=0.13$) and hotspot analysis correctly localises High--High and Low--Low clusters matching the imposed gradient.}
\label{baseline_example_graphs}
\end{figure}

\begin{figure}[!t]
\centering
\includegraphics[width=3in]{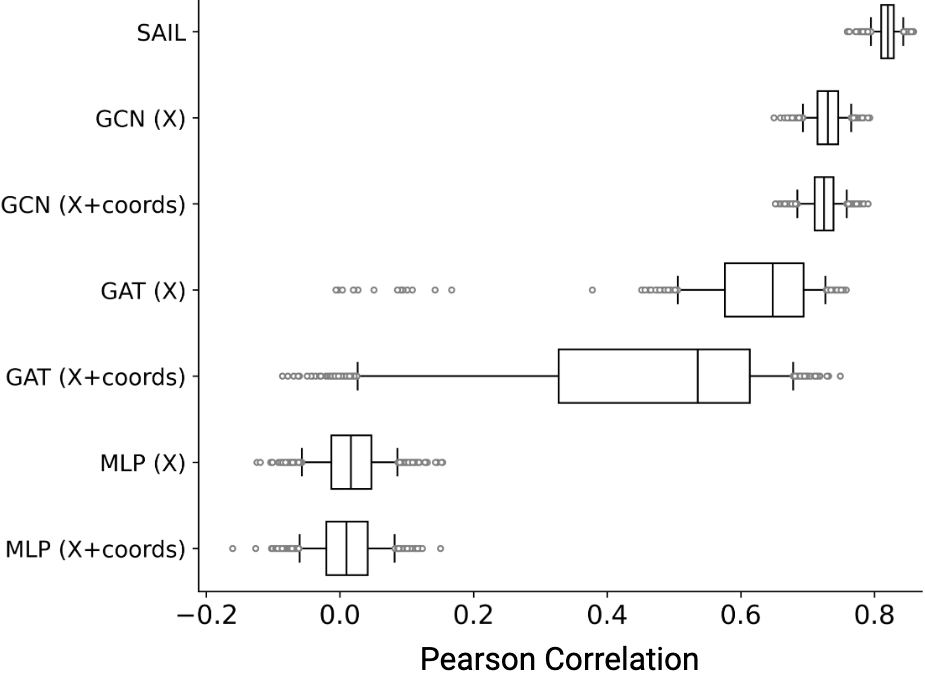}
\caption{Results demonstrating distinction of SAIL from traditional message-passing architectures. In a regression benchmark where models predict Moran's $I$ of heterogeneous test graphs, SAIL achieves significantly higher Pearson correlation between predicted and true values than classical message-passing networks with comparable or larger parameter counts ($p<0.0001$, box plots over 1000 train--validation--test runs).}
\label{baseline_regression}
\end{figure}

We begin by empirically validating that SAIL recovers classical SAC statistics on graphs with known spatial structure, confirming Theorem~\ref{theorem:classical_reduction}. We construct toy graphs with controlled spatial properties (Fig.~\ref{baseline_example_graphs}). First, we generate a random graph with unstructured scalar node values. As node values are sampled independently of spatial location, the expected global SAC is near zero. Second, we construct a spatially homogeneous graph with a smooth spatial gradient, inducing strong positive SAC. For both graphs, we compute Moran-style and Geary-style SAIL scores and visualise these local statistics alongside LISA-style maps.

For the random graph (Fig.~\ref{baseline_example_graphs}), SAIL detects negligible global SAC (Moran's $I \approx -0.01$, Geary's $C = 1.12$). The associated LISA map classifies most nodes as non-significant (NS), consistent with expectations. In contrast, for the spatially homogeneous graph (Fig.~\ref{baseline_example_graphs}), SAIL reports Moran's $I = 0.99$ and Geary's $C = 0.13$, reflecting strong global coherence. LISA decomposition reveals clear High--High clusters in high-value regions and Low--Low clusters in low-value regions, accurately recovering the imposed gradient structure. These results empirically confirm that SAIL reproduces classical SAC indices, confirming Theorem~\ref{theorem:classical_reduction}

\subsection{Learning spatial autocorrelation beyond message passing}
\label{sec:beyond_mp}

We next evaluate whether SAIL's spatial coherence operator provides an advantage in prediction tasks where the label is explicitly defined by a global SAC functional. This experiment tests the inductive bias characterised in Theorem~\ref{theorem:inductive_bias}: when labels depend on the SAC of a latent signal, SAIL contains the required quadratic interactions by construction (part (ii)), whereas message-passing architectures must learn them implicitly within a broader hypothesis class (part (iii)).

To this end, we generate graphs from two-dimensional point clouds with spatially correlated features and define the regression target as Moran's $I$ of a latent feature projection; full details are provided in Supplementary Section~\ref{sec:supp_synth_protocol}.

We train SAIL to predict this target and compare it against a diverse set of models, including Graph Convolution (GCN) and Graph Attention (GAT) Networks operating on complete graphs. Edges are either distance-weighted or unweighted. All models receive identical node feature channels, and when applicable, the 2D coordinates are provided as additional positional features. Models are trained under matched optimisation settings with early stopping, and experiments are repeated across multiple random seeds to assess statistical reliability. 

In Fig.~\ref{baseline_regression}, SAIL ($r=0.82 \pm 0.01$) achieves the highest Pearson correlation across runs. Among message-passing comparisons, the strongest performance is obtained by GCN operating on a complete graph ($r=0.73 \pm 0.02$), while augmenting the node features with spatial coordinates yields similar performance ($r=0.72 \pm 0.02$). A paired Wilcoxon signed-rank test confirms SAIL's improvement is statistically significant relative to all models (FDR-corrected $p < 10^{-4}$).

These results provide empirical support for Theorem~\ref{theorem:inductive_bias}(i): when labels depend on the SAC of a latent signal, SAIL can approximate the Bayes-optimal predictor. The performance gap relative to GCN and GAT is consistent with Theorem~\ref{theorem:inductive_bias}(iii): although non-linear message-passing networks may in principle represent the quadratic interactions underlying SAC, they must discover them within a broader hypothesis class, whereas SAIL encodes them explicitly and incurs no larger complexity term. Together, these experiments demonstrate that SAIL is not merely a flexible neural module, but a principled generalisation of classical spatial statistics with an inductive bias that is well matched to SAC-driven problems.

\subsection{Unsupervised discovery of Spatially Coherent Structures in Spatial Omics}
\label{sec:4i}

Having verified statistical correctness in synthetic settings, we next test if SAIL can discover biologically meaningful spatial niches in spatial omics data. Here, biologically relevant compartments arise from spatially coherent patterns of gene or protein expression rather than from individual feature values. Classical analyses typically operate on individual markers or clustering of feature vectors and therefore do not directly test if a feature combination forms a spatially coherent field. Likewise, standard GNNs are designed for supervised representation learning through message passing and do not provide an explicit unsupervised objective for discovering feature directions that maximise SAC. SAIL addresses this by explicitly learning feature projections that maximise SAC. If biologically meaningful niches correspond to coherent SAC of molecular programs, then unsupervised SAIL Modes should align with known compartments. We thus evaluate if SAIL Modes correspond to expert-annotated biological structures.
\subsubsection{Datasets and Setup}

We analyse the preprocessed Iterative Indirect Immunofluorescence Imaging (4i) spatial proteomics dataset~\cite{gut2018multiplexed}, which measures the expression levels of 43 proteins at $n=270{,}876$ spatial measurement locations for one sample. Each location is annotated with the subcellular compartment it belongs to (such as the nucleolus).

The task is to test whether protein combinations that SAIL identifies as spatially autocorrelated, without supervision to prevent overfitting with one sample, correspond to these annotated compartments. SAIL is applied in the unsupervised mode-learning configuration (Section~\ref{sec:modes}), learning $K=20$ SAIL Modes by minimising a Geary-style objective. For each mode, LISA-style hotspot analysis (Section~\ref{hotspot_discovery}) identifies locations belonging to significant High--High or Low--Low clusters, which form High--High or Low--Low masks respectively. Each mask is compared with each annotated compartment using the Dice score, which ranges from 0 (no overlap) to 1 (perfect agreement). 

We compare SAIL against four alternative ways of obtaining spatially autocorrelated fields, each followed by the same LISA-style hotspot analysis and Dice evaluation:
(i)~\emph{Univariate}: local Moran's $I$ on individual proteins, taking the best of the $20$ proteins with the highest global Moran's $I$;
(ii)~\emph{Mean}: the multivariate Moran's $I$ obtained by averaging univariate statistics across all proteins~\cite{yamada2024moran};
(iii)~\emph{MULTISPATI-PCA}~\cite{dray2008spatial}: local Moran's $I$ on axes that maximise the product of variance and Moran's $I$; and
(iv)~\emph{Hotspot}~\cite{detomaso2021hotspot}: local Moran's $I$ on the scores of spatially autocorrelated protein modules. Additional implementation details are given in Supplementary Section~\ref{sec:supp_4i}.

\subsubsection{Results}
In the 4i dataset (Table~\ref{tab:dice_performance}), SAIL Modes capture distinct compartments and outperform univariate ($p=0.003$), mean ($p=0.003$), MULTISPATI-PCA ($p=0.014$) and Hotspot ($p=0.010$) comparisons across compartments (one-sided paired Wilcoxon signed-rank tests; Supplementary Table~\ref{tab:supp_4i_full_comparison}). In Fig.~\ref{4i_figure} (a), SAIL Mode 8, whose Low--Low hotspots achieve a Dice score of $0.84$ with the Nucleolus, contrasts negative weights on FBL, PCNA with positive weights on GSK3A/B and CAV1. Because this is a Low--Low region, the mask comprises locations with low projected values, i.e.\ high expression of the negatively weighted proteins, that are surrounded by similarly scoring locations. These include fibrillarin (FBL), a canonical nucleolar marker, together with the nuclear proteins PCNA and CCNE1, whereas the positively weighted proteins mark cytoplasmic, endosomal (SARA), caveolar (CAV1) and mitochondrial (HSP60) compartments~\cite{gut2018multiplexed}. SAIL Mode 1, whose Low--Low hotspots achieve a Dice score of $0.59$ with Cell periphery 2, places positive weights on p-MEK and TSG101 and a negative weight on ISG15, corresponding to the cell periphery (Fig.~\ref{4i_figure} (b))~\cite{gut2018multiplexed}.

Overall, these results show that unsupervised SAIL Modes recover spatially coherent molecular programs that correspond closely to known subcellular compartments. This demonstrates that learning SAC directly from high-dimensional molecular measurements can reveal biologically meaningful spatial coherence without supervision.

\begin{table}[!t]
\centering
\caption{SAIL segmentation performance on the 4i dataset. We compare SAIL against classical Moran's $I$ computed on the best-performing individual protein among the $20$ proteins with the highest global Moran's $I$ (Univariate), the multivariate extension of Moran's $I$ obtained by averaging the univariate Moran's $I$ statistic across all proteins (Mean), MULTISPATI-PCA~\cite{dray2008spatial} and Hotspot~\cite{detomaso2021hotspot}. For each compartment, we report the Dice score between LISA-style hotspot mask and region annotation.}
\label{tab:dice_performance}
\setlength{\tabcolsep}{3pt}
\begin{tabular}{lccccc}
\toprule
Region/DICE Score & SAIL & Univariate & Mean & MULTISPATI & Hotspot \\
\midrule
Nucleolus & $\mathbf{0.84}$ & $0.75$ & $0.50$ & $0.79$ & $0.71$ \\ \midrule
Endosomes/Golgi 1 & $\mathbf{0.58}$ & $0.38$ & $0.34$ & $0.38$ & $0.56$ \\ \midrule
Cell periphery 1 & $\mathbf{0.53}$ & $0.51$ & $0.51$ & $0.50$ & $0.50$ \\ \midrule
Cell periphery 2 & $\mathbf{0.59}$ & $0.49$ & $0.50$ & $0.49$ & $0.50$ \\
\bottomrule
\end{tabular}
\end{table}

\begin{figure}[!t]
\centering
\includegraphics[width=3in]{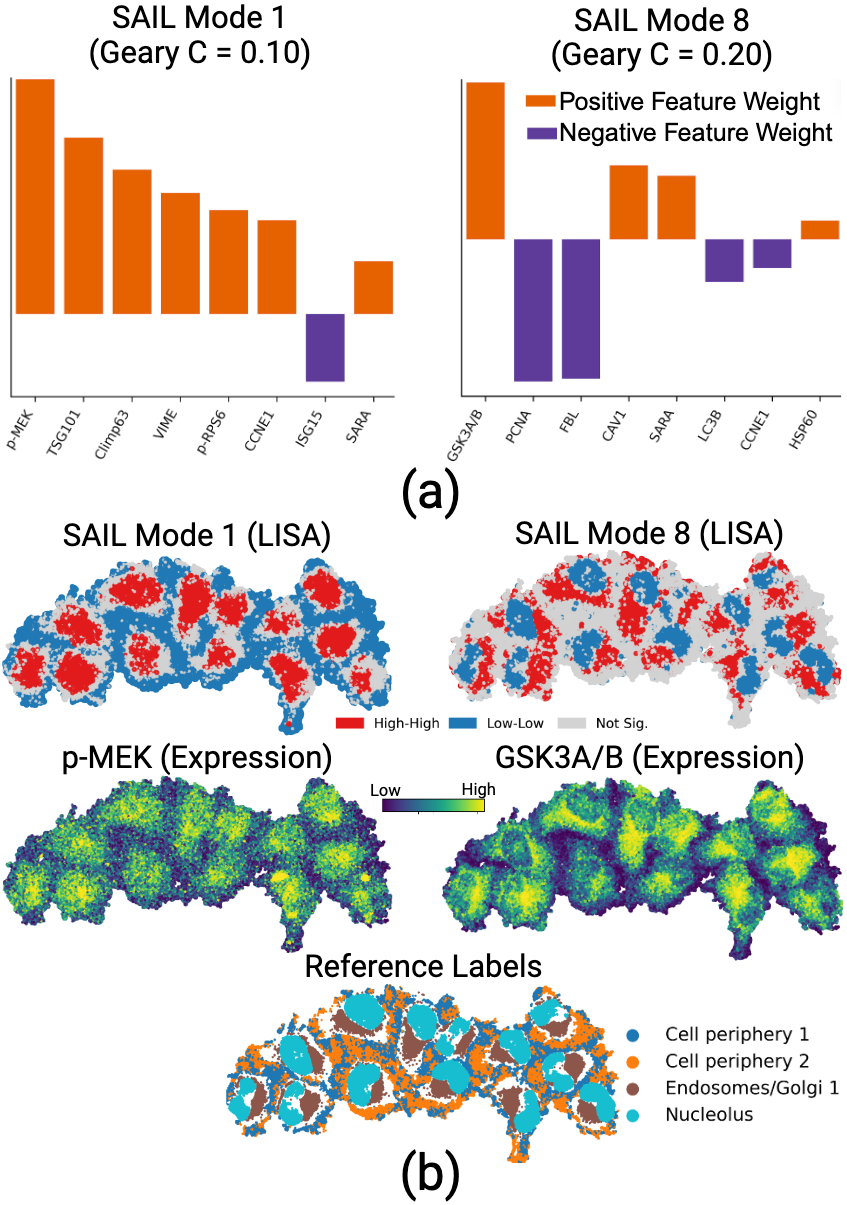}
\caption{Unsupervised SAIL Modes recover known subcellular compartments in the 4i spatial proteomics dataset. (a) Two data-driven SAIL Modes are visualised over the field of view. SAIL Mode 1 (showing the highest Dice score with Cell periphery 2) is defined by positive weights on p-MEK, TSG101 and Climp63 and a negative weight on ISG15. SAIL Mode 8 (showing the highest Dice score with the Nucleolus) is defined by positive weights on GSK3A/B, CAV1 and SARA and negative weights on PCNA, FBL and LC3B. (b) Visualisations of LISA High--High and Low--Low scoring locations for both SAIL Modes, which were used for defining spatial regions. We also show the expression of the most important proteins for each SAIL Mode and the reference annotation mask.}
\label{4i_figure}
\end{figure}

\subsection{Discovering spatial coherence biomarkers of treatment response}
\begin{table}[!t]
\centering
\caption{Prediction of response to PD-1/PD-L1 blockade in the cSCC cohort. We report discriminative performance as the area under the ROC curve (AUC) with 95\% confidence interval, together with the two-sided Mann--Whitney U p-value for separating responders from non-responders. SAIL is compared against the Favourable Niches classifier and pre-treatment PD-L1 status~\cite{lee2026single}.}
\label{tab:pdl1_response}
\begin{tabular}{lcc}
\toprule
Model & \makecell{AUC \\ (95\% CI)} & \makecell{Mann--Whitney $U$ \\ ($p$-value)}\\
\midrule
SAIL (Mode 18) & \textbf{0.86 (0.64--1.00)} & \textbf{0.005} \\
Favourable Niches & 0.78 (0.55--0.95) & 0.021 \\
PD-L1+ (untreated slide) & 0.78 (0.49--1.00) & 0.072 \\
PD-L1+ (all slides) & 0.69 (0.42--0.92) & 0.19 \\

\bottomrule
\end{tabular}
\end{table}

\begin{figure}[!t]
\centering
\includegraphics[width=3in]{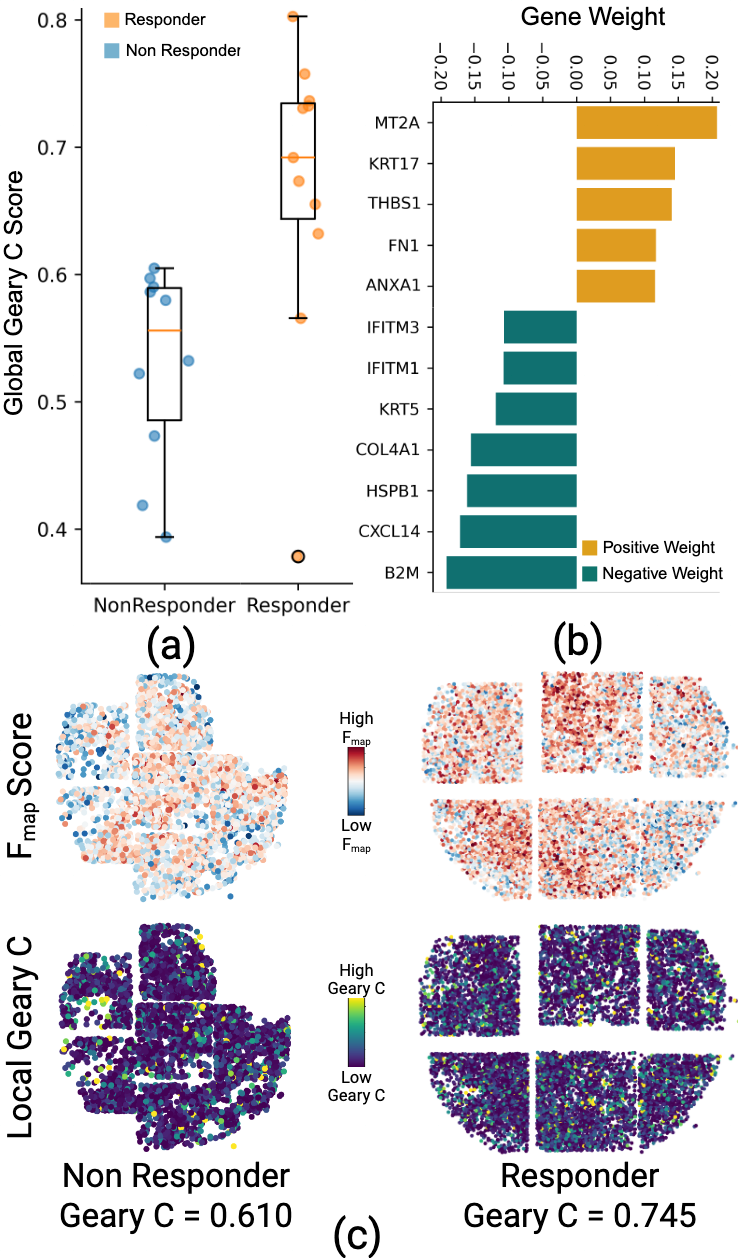}
\caption{Association of an unsupervised SAIL Mode with response to PD-1/PD-L1 blockade. (a) Per-patient global Geary-style dissimilarity SAIL score for SAIL Mode 18, compared between responders and non-responders. Each dot represents a patient; responders show a higher score ($p=0.005$), suggesting greater local cell heterogeneity. (b) Gene weights defining SAIL Mode 18. (c) Visualisation of responder and non-responder patients. We show the local $f_{\mathrm{map}}$ and Geary-style dissimilarity score per cell.}
\label{pd1_response}
\end{figure}

Response to PD-1/PD-L1 immune checkpoint blockade in cutaneous squamous cell carcinoma (cSCC) is highly heterogeneous, and reliable pre-treatment predictors of which patients will respond remain limited. PD-L1 expression, the standard clinical biomarker, is only weakly associated with response. Recent work has instead shown that the spatial organisation of the tumour microenvironment, such as how tumour, stromal and immune cells are arranged relative to one another, predicts pathologic response more strongly than PD-L1 status~\cite{lee2026single}. This suggests that response may depend not only on which cell states are present, but on how coherently or heterogeneously they are spatially arranged. This is precisely the property that SAC quantifies. We therefore test whether SAIL can be used to determine if a SAC mode of gene expression is predictive of response to PD-1/PD-L1 blockade.

\subsubsection{Dataset and Setup}

We analyse the publicly available dataset of Lee \textit{et al.}~\cite{lee2026single}, comprising single-cell spatial transcriptomic profiles of cSCC tumour biopsies from patients treated with neoadjuvant PD-1/PD-L1 blockade (cemiplimab or atezolizumab), profiled using the NanoString CosMx spatial molecular imaging platform with a 1000-plex gene panel. Tissue microarrays were generated, with each core containing on average $4{,}500$ cells, resulting in a total of $140{,}633$ cells from $11$ responder and $10$ non-responder patients. Following the original study, patients are classified as responders (pathologic complete response or major pathologic response, i.e.\ $<10\%$ residual tumour cells) or non-responders (all other patients).

We construct one spatial graph per patient, in which nodes are segmented cells, coordinates are cell centroids, and node features are the 1000-gene expression profiles. Each of a patient's tissue cores forms a separate $k$-nearest-neighbour component, with no edges between cores. 

SAIL is applied in the unsupervised mode-learning configuration (Section~\ref{sec:modes}), minimising a Geary-style objective over $28$ pre-treatment biopsies from $N=21$ patients. Response labels are not used during training. We use unsupervised rather than supervised learning as with only $21$ patients and $1000$ genes, a supervised model would be highly prone to overfitting. For each of the $20$ learned modes, we compute a per-patient global SAIL score and compare it between responders and non-responders using a two-sided Mann--Whitney $U$ test, with discriminative performance summarised by the area under the receiver operating characteristic curve (AUC). 

We compare SAIL against the two predictors reported in the original study~\cite{lee2026single}. The first is the niche-based classifier of Lee \textit{et al.} (Favourable Niches), which predicts response from the composition of six spatial niches identified by clustering cells according to their neighbourhood cell-type composition. These niches therefore describe which cell types co-occur, but are not optimised with respect to SAC. The second is pre-treatment PD-L1 positivity, the standard clinical biomarker, evaluated both on the untreated slide alone and on all slides.  

\subsubsection{Results}

As shown in Table~\ref{tab:pdl1_response}, the most discriminative SAIL Mode (Mode 18) separates responders from non-responders with AUC$=0.86$ (95\% CI 0.64--1.00; $p=0.005$). This exceeds both the Favourable Niches classifier (AUC$=0.78$, $p=0.021$) and pre-treatment PD-L1 status (untreated slide: AUC$=0.78$, $p=0.072$; all slides: AUC$=0.69$, $p=0.19$). Responders exhibit higher Mode 18 scores than non-responders (Fig.~\ref{pd1_response}(a)). Because the global score is a Geary-style dissimilarity, a higher score corresponds to greater local contrast between neighbouring cells. Responders' tumours are therefore more spatially heterogeneous along this mode.

Mode 18 (Fig.~\ref{pd1_response}(b)) contrasts genes associated with squamous tumour and cancer-associated-fibroblast/extracellular-matrix states (KRT17, FN1, THBS1, MT2A)~\cite{garcia2025cancer} with genes associated with interferon signalling and MHC class I antigen presentation (B2M, IFITM1, IFITM3)~\cite{jeong2023transcriptome} and with basal epidermal or basement-membrane programs (KRT5, COL4A1, CXCL14)~\cite{szabo2015statistically}. The higher scores in responders thus indicate sharper spatial juxtaposition of interferon-associated and tumour--stromal programs (Fig.~\ref{pd1_response}(c)).

These findings align with, and extend, existing work. They support the conclusion of Lee \textit{et al.}~\cite{lee2026single} that spatial organisation is more informative of response than PD-L1 status, and are compatible with the established association between an inflamed, spatially intermixed immune microenvironment and response to immunotherapy~\cite{galon2019approaches}. Unlike the niche-based approach, however, SAIL does not require a predefined clustering pipeline. A single, interpretable gene projection discovered without labels, whose SAC is directly quantified, provides stronger discrimination than both existing predictors.

\subsection{Unsupervised SAC modes for prediction of patient survival}
\label{sec:prognosis_sail}
A further question to study is whether SAC carries prognostic information about patient outcome. In cancer tissue, disease progression is often governed not only by the presence of specific cellular or molecular programs but also by how these programs are spatially organised in the tissue. For example, spatially coherent immune infiltration or tumour--stroma interactions are known to influence patient outcome~\cite{valdeolivas2024profiling}. SAIL provides a direct way to analyse such structure by learning feature projections that extremise SAC. Unlike the prevailing use of supervised approaches for survival prediction, SAIL Modes are learned without access to survival outcomes~\cite{keller2025histokernel,keller2025insight}. If SAC contains a prognostic signal, then SAIL Modes should correlate with patient survival despite being unsupervised.

\subsubsection{Datasets and Graph Construction}

We evaluate SAIL on two spatial biomedical modalities: cellular spatial proteomics and histological whole-slide images (WSIs). Each sample is represented as a spatial graph where nodes correspond to spatial measurement units, and edges capture neighbourhood structure. We analyse the ORION-CRC cohort~\cite{lin2023high}, consisting of $N=36$ patient slides with approximately $1.8$ million cells per slide. Each slide is represented as a cellular spatial graph where nodes are segmented cells with 2D spatial coordinates. Node features consist of 17 multiplex immunofluorescence (mIF) marker intensities. Graph edges are constructed using a $k$-nearest-neighbour graph in coordinate space ($k_{\mathrm{NN}}=32$). Across the dataset this yields $n_{\mathrm{total}}=65{,}327{,}071$ nodes. We further analyse WSIs from TCGA breast cancer (BRCA; $N=498$ patients) and colorectal cancer (CRC; $N=558$ patients) cohorts~\cite{weinstein2013cancer}. Each WSI is converted into a spatial graph by extracting $512\times512$ image patches at $0.5$ microns per pixel and representing each patch as a node with spatial coordinates given by its centroid. Node features correspond to deep patch embeddings of dimensionality $d=1024$ (ShuffleNet and UNI features for BRCA and CRC, respectively)~\cite{mackenzie2022neural,chen2024towards}. Each WSI contains on average $n\approx1281$ nodes, yielding $n_{\mathrm{total}}=1{,}382{,}718$ nodes across both datasets. For each cohort, SAIL is applied in an unsupervised configuration by minimising a Geary-style objective, without access to survival outcomes or clinical labels. For each graph, we compute a global SAIL score that summarises each mode's strength across the tissue. Patients are then stratified into high- and low-SAC groups using a median split, and differences are assessed via Kaplan--Meier analysis, concordance index (C-Index), hazard ratio (HR), and the difference in restricted mean survival time ($\Delta$RMST). Additional implementation details are given in Supplementary Section~\ref{sec:supp_prognosis}

\begin{table*}[!t]
\centering
\caption{Prognostic performance of SAIL on the ORION cohort. We compare SAIL against classical Geary's $C$ computed on individual protein markers (Univariate) and the multivariate extension of Geary's $C$ obtained by averaging the univariate Geary's $C$ (Mean), MULTISPATI-PCA~\cite{dray2008spatial}, and Hotspot~\cite{detomaso2021hotspot}. Patients are stratified into groups via a median split on each score; C-Index, hazard ratio (HR) with 95\% CI, restricted mean survival time difference ($\Delta$RMST), and log-rank $p$-value are reported.}
\label{tab:prognosis_performance}
\begin{tabular*}{\textwidth}{@{\extracolsep{\fill}}lccccc}
\toprule
Metric & SAIL & Univariate & Mean & MULTISPATI & Hotspot \\
\midrule
C-Index & $\mathbf{0.74}$ & $0.69$ & $0.68$ & $0.70$ & $0.60$ \\
$\Delta$RMST (days) & $\mathbf{782}$ & $385$ & $362$ & $617$ & $777$ \\
HR (95\% CI) & $\mathbf{5.23}$ \textbf{(1.45--18.89)}& $0.44$ (0.15--1.31) & $3.15$ (0.98--10.08) & $3.30$ (1.03--10.56) & $4.89$ (1.36--17.59) \\
Log-rank $p$-value & $\mathbf{0.005}$ & $0.13$ & $0.04$ & $0.03$ & $0.007$ \\
\bottomrule
\end{tabular*}
\end{table*}

\begin{figure*}[!t]
  \centering
  \includegraphics[width=\textwidth]{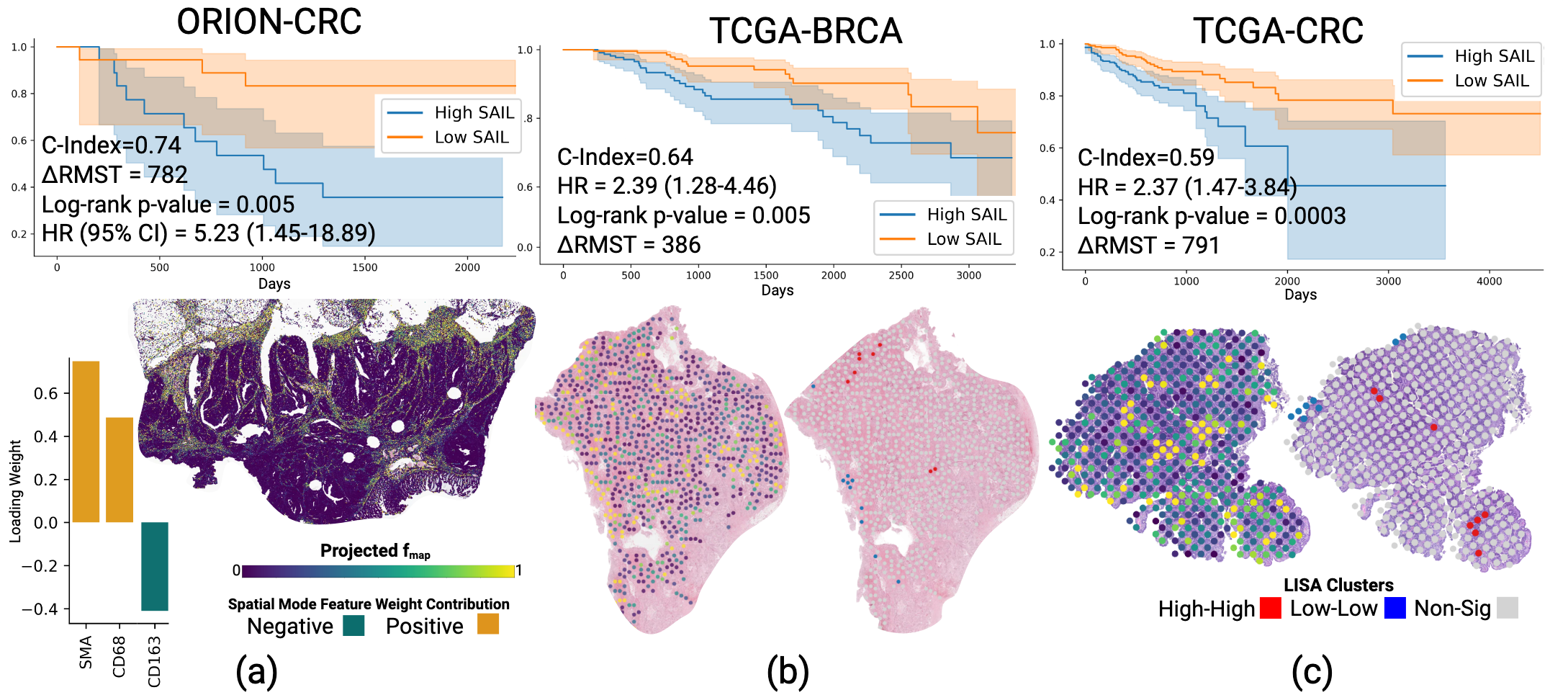}
  \caption{Unsupervised SAIL Modes link spatial coherence to outcome. (a) A SAIL Mode discovered in ORION-CRC (mIF). Bar chart shows $f_{\mathrm{map}}$ weights. The corresponding projected scalar field $f_{\mathrm{map}}(x)$ is overlaid on a tissue section to visualise its spatial pattern, and the associated global SAIL scores stratify patients into distinct survival groups. (b) SAIL learns a scalar field $f_{\mathrm{map}}(x)$ on TCGA-BRCA WSI deep-feature patch graphs (Mode 1); slides are visualised by $f_{\mathrm{map}}(x)$ alongside hotspot clusters. The corresponding global SAIL score (median split) defines risk groups with significant Kaplan--Meier separation in disease-specific survival. (c) Similarly to (b), we analyse the TCGA-CRC cohort.}
  \label{survival_analysis}
\end{figure*}

\subsubsection{Results}
Across cohorts, SAC captured by SAIL Modes is associated with outcome (Fig.~\ref{survival_analysis}; all modes in Supplementary Table~\ref{tab:supp_tcga_modes}). In the ORION cohort (Table~\ref{tab:prognosis_performance}; all modes in Supplementary Table~\ref{tab:supp_orion_modes}), the best-performing SAIL Mode (Mode 17) stratifies patients by overall survival with the highest C-Index (C-Index $=0.74$, log-rank $p=0.005$, HR (95\% CI) $5.23$\,(1.45--18.89), $\Delta$RMST $=782$ days), exceeding univariate Geary's $C$ on the best individual protein (C-Index $=0.69$, $p=0.13$), the mean-aggregation baseline (C-Index $=0.68$, $p=0.04$), MULTISPATI-PCA (C-Index $=0.70$, $p=0.03$) and Hotspot (C-Index $=0.60$, $p=0.007$)~\cite{detomaso2021hotspot}. The gap to the univariate and mean baselines suggests that the prognostic signal is not carried by the SAC of any single marker, nor by an unweighted combination of all markers, but by a learned combination of markers whose joint SAC SAIL can isolate. Interpreting the learned projection (Fig.~\ref{survival_analysis}(a)) reveals that this mode contrasts SMA and CD68 (positive weights) with CD163 (negative weight). It thus separates signal from SMA$^{+}$ stroma (myofibroblasts and cancer-associated fibroblasts) and CD68$^{+}$ macrophages from signal associated with CD163-expressing, M2-like macrophages. Because the SAIL score is a Geary-style dissimilarity, high-risk patients are those in whom these stromal and macrophage-associated signals are spatially intermixed rather than organised into coherent regions. This is consistent with reports that cancer-associated fibroblasts promote the recruitment and M2 polarisation of tumour-associated macrophages in colorectal cancer~\cite{zhang2019cancer}, and that infiltration by CD163$^{+}$ tumour-associated macrophages is an independent adverse prognostic factor~\cite{xue2021prognostic}.

A similar prognostic association holds in histological imaging. In TCGA-BRCA, the best-performing mode (Mode 1) separates patients by disease-specific survival (C-Index $=0.64$, log-rank $p=0.005$, HR (95\% CI) $2.39$\,(1.28--4.46), $\Delta$RMST $=386$ days), with high SAIL scores associated with worse prognosis. Since the SAIL score is a Geary-style dissimilarity, high scores correspond to patients with spatially heterogeneous tissue. In TCGA-CRC, Mode 17 gives the strongest separation in overall survival (C-Index $=0.59$, log-rank $p=2.9\times10^{-4}$, HR (95\% CI) $2.37$\,(1.47--3.84), $\Delta$RMST $=791$ days).

Together, these results indicate that patient outcome is linked not only to the presence of morphological or molecular patterns captured by imaging or omics measurements, but also to how coherently these patterns are spatially organised across the tissue~\cite{valdeolivas2024profiling}. In ORION, this signal is considerably weaker when examining individual features or their aggregate.

\subsection{Supervised classification of mesothelioma subtypes with SAIL}
\label{sec:mesograph}
We have shown that unsupervised SAIL Modes recover biologically meaningful signals. Beyond unsupervised discovery, SAIL also lets us ask a supervised question: for a given label, how much of the discriminative information lies in the SAC of features rather than in the features themselves?

A natural setting for this question is histological subtyping, where tissue classes are associated with cellular spatial coherence. One example is malignant mesothelioma which is classified into epithelioid (E) and non-epithelioid (NE) subtypes. These subtypes are clinically important as epithelioid tumours carry a substantially better prognosis and influence eligibility for surgery and immunotherapy benefit~\cite{neilly2024contemporary}. Yet subtyping from small biopsies is difficult and shows only moderate inter-observer agreement~\cite{brvcic2014reproducibility}.

There are strong morphological reasons to expect SAC to be informative for this task. Epithelioid mesothelioma typically grows as cohesive sheets of morphologically similar cells, so neighbouring cells tend to resemble one another, whereas non-epithelioid tumours show less organised growth, with greater variation between neighbouring cells. The subtypes may therefore differ not only in which cell states are present, but in how coherently they are spatially organised. We hypothesise that this second-order property, which SAIL quantifies explicitly, carries a substantial part of the subtype signal.

\subsubsection{Dataset and Setup}

To test this hypothesis, we use the publicly available St.~George's Hospital (SGH) mesothelioma cohort~\cite{eastwood2023mesograph}. It comprises $234$ tissue microarray (TMA) samples from $90$ patients ($148$ epithelioid, $86$ non-epithelioid). Each sample is represented as a cell graph whose nodes are segmented cells, with node features comprising $23$ morphological, $43$ haematoxylin and eosin (H\&E) intensity, $39$ texture and $512$ ResNet-derived descriptors. We evaluate two feature sets: all features ($617$ descriptors) and interpretable features only, which excludes the ResNet descriptors and keeps the $105$ hand-crafted morphological, intensity and texture features.

SAIL is trained in the supervised configuration. It learns $h$ feature projections and affinities, and the resulting global SAC descriptors $I^{\mathrm{global}}$ are passed to prediction head $f_{\mathrm{pred}}$ trained end-to-end against subtype labels. Because $f_{\mathrm{pred}}$ only sees SAC descriptors, predictive performance measures how much subtype information is carried by SAC. Implementation details are given in Supplementary Section~\ref{sec:supp_mesograph}.

To quantify this, we compare SAIL against predictors that do not model SAC explicitly but operate on the same cell graphs and, in each setting, the same features. These are MesoGraph~\cite{eastwood2023mesograph}, a graph neural network, and two multiple-instance learning baselines (PINS and Max-MIL). If SAC is a major driver of subtype, an SAC-only predictor should match or exceed these models.

We use leave-one-slide-out cross-validation, in which all samples from one slide are held out as the test set in each fold, and split the remaining samples $75\%$--$25\%$ into training and validation sets. AUC and average precision (AP) are computed with non-epithelioid samples as the positive class.

\subsubsection{Results}
Table~\ref{tab:meso_subtype} shows that SAIL achieves the strongest discriminative performance with both feature sets. With all features, SAIL (AUC$=0.902\pm0.035$, AP$=0.849\pm0.055$) exceeds MesoGraph (AUC$=0.863\pm0.065$), PINS (AUC$=0.844\pm0.017$) and Max-MIL (AUC$=0.782\pm0.050$), and a paired bootstrap test on the difference in AUC ($10{,}000$ resamples) confirms that these improvements are significant (MesoGraph and PINS: $p=0.011$; Max-MIL: $p=0.0002$). With interpretable features only, SAIL (AUC$=0.866\pm0.042$, AP$=0.814\pm0.040$) again ranks first, ahead of MesoGraph (AUC$=0.846\pm0.024$), PINS (AUC$=0.825\pm0.027$) and Max-MIL (AUC$=0.721\pm0.038$). In this setting the improvement over Max-MIL is significant ($p=0.0002$), whereas the differences from MesoGraph and PINS are not. 

Fig.~\ref{meso_results_figure}(a) shows the two SAIL Modes with the largest weights in the prediction head for the model trained on interpretable features: Mode~11 (weight $+0.57$) and Mode~12 (weight $+0.46$). Mode~11 loads positively on texture entropy and eosin heterogeneity and negatively on nuclear and cellular solidity (convexity). Mode~12 contrasts solid cells with small nuclei and homogeneous haematoxylin texture against cells with larger nuclei, rounder neighbouring nuclei and more variable eosin texture. Because the local operator is Geary-style, a high score indicates that neighbouring cells take markedly different values along the mode. Since both weights are positive, greater local dissimilarity along either mode is evidence for the non-epithelioid subtype. Consistent with this, Fig.~\ref{meso_results_figure}(b) shows that the non-epithelioid sample contains many scattered cells with high local dissimilarity, where cells differing in shape and texture are intermixed. Conversely, the epithelioid sample is largely uniform, matching the cohesive sheets of similar cells characteristic of epithelioid growth.

These results support our hypothesis. A predictor restricted to SAC descriptors matches or outperforms models with direct access to the same cell-level features, with or without deep image features. This indicates that a substantial part of the signal separating epithelioid from non-epithelioid mesothelioma is second-order: it depends on how similar neighbouring cells are, not merely on which cell states are present.

\begin{table}[!t]
\centering
\caption{Performance of supervised SAIL for mesothelioma subtype prediction~\cite{eastwood2023mesograph}, using all $617$ cell features (top) or only the $105$ interpretable features (bottom). We report AUC and average precision (AP) with their standard deviation ($\sigma$) across cross-validation folds. Comparisons include graph-based state of the art (MesoGraph) and standard multiple-instance learning pipelines (PINS, Max-MIL).}
\label{tab:meso_subtype}
\begin{tabular}{lcc}
\toprule
Model & AUC $\pm$ $\sigma$ & AP $\pm$ $\sigma$\\
\midrule
\multicolumn{3}{c}{All features ($d=617$)} \\
\midrule
SAIL & $\mathbf{0.902 \pm 0.035}$ & $\mathbf{0.849 \pm 0.055}$\\
MesoGraph & $0.863 \pm 0.065$ & $0.832 \pm 0.051$ \\
PINS & $0.844 \pm 0.017$ & $0.821 \pm 0.036$\\
Max-MIL & $0.782 \pm 0.050$ & $0.698 \pm 0.055$ \\
\midrule
\multicolumn{3}{c}{Interpretable features only ($d=105$)} \\
\midrule
SAIL & $\mathbf{0.866 \pm 0.042}$ & $\mathbf{0.814 \pm 0.040}$\\
MesoGraph & $0.846 \pm 0.024$ & $0.806 \pm 0.054$ \\
PINS & $0.825 \pm 0.027$ & $0.775 \pm 0.104$\\
Max-MIL & $0.721 \pm 0.038$ & $0.627 \pm 0.090$ \\
\bottomrule
\end{tabular}
\end{table}

\begin{figure}[!t]
\centering
\includegraphics[width=3in]{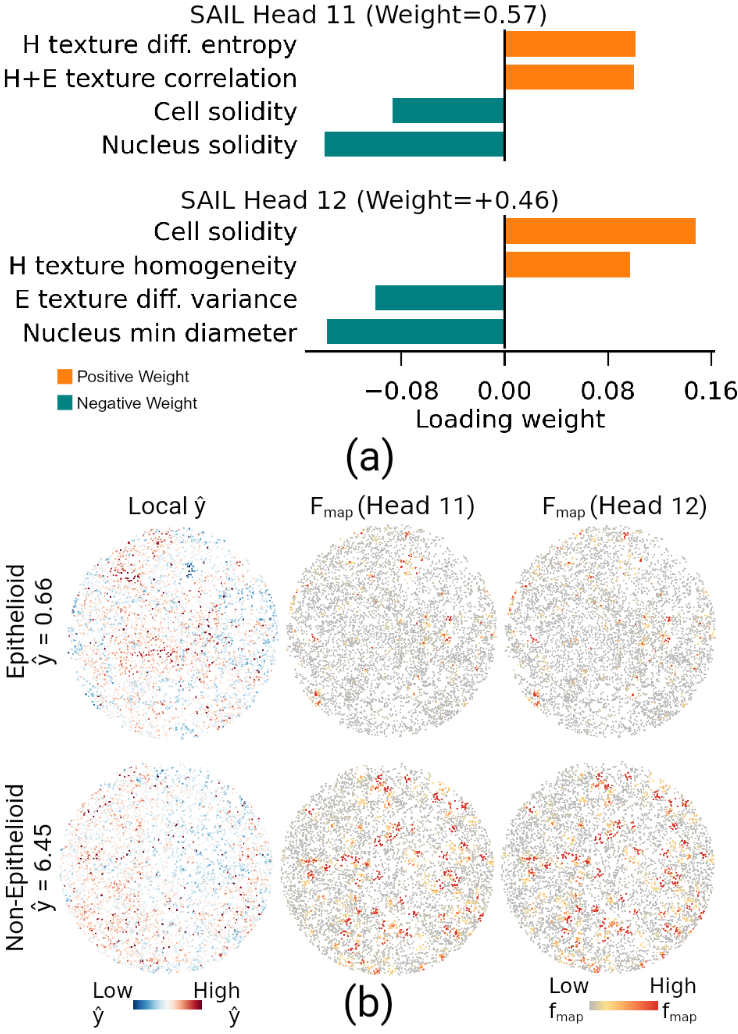}
\caption{Supervised SAIL discriminates mesothelioma subtype from the spatial coherence of interpretable cell features (model trained on the $105$ interpretable features only). (a) $f_{\mathrm{map}}$ loadings of the two SAIL Modes with the largest weights in the prediction head (Mode~11, weight $+0.57$; Mode~12, weight $+0.46$; both towards non-epithelioid), shown for the $12$ features with the largest absolute loading. (b) Local predicted value $\hat y$ and local Geary-style SAIL scores for these modes, shown for a representative epithelioid and non-epithelioid sample.}
\label{meso_results_figure}
\end{figure}

\subsection{Additional Results}
\label{sec:additional_results}
For brevity, proofs and further experiments are reported in the Supplementary Material:
\begin{itemize}
    \item \textbf{Proofs.} Full proofs of Theorems~\ref{theorem:classical_reduction} and~\ref{theorem:inductive_bias} are given in Supplementary Section~\ref{sec:supp_Theoretical_Analysis}.

    \item \textbf{Computational complexity.} In Supplementary Section~\ref{sec:supp_scaling}, we analyse the time and memory cost of SAIL. With a sparse $k_{\mathrm{NN}}$ graph, a training step costs $\mathcal{O}(ndh + nk_{\mathrm{NN}}h + nh^2)$ time and $\mathcal{O}(nd + nk_{\mathrm{NN}}h)$ memory, which is linear in the number of nodes $n$. The Stiefel constraint adds $\mathcal{O}(d^3)$ term.

    \item \textbf{Synthetic signals and 3D point clouds.} In Supplementary Section~\ref{sec:supp_toy_construction}, we apply SAIL to structured synthetic signals (smooth noise fields, checkerboards, step boundaries and concentric rings) and to a 3D point cloud. In each case, SAIL recovers the expected pattern of positive or negative spatial autocorrelation.

    \item \textbf{Ablation study.} In Supplementary Section~\ref{sec:ablation}, we ablate the components of SAIL ($f_{\mathrm{map}}$, $f_{\mathrm{local}}$, $f_{\mathrm{act}}$, $f_{\mathrm{reduce}}$, $f_{\mathrm{affinity}}$ and $\tau$) and of its loss (decorrelation and sparsity) on four datasets. SAIL is robust to most choices. Strong decorrelation degrades performance, Geary's $C$ and Moran's $I$ consistently outperform Getis--Ord $G^\star$, and a learnable spatial temperature helps when many graphs are analysed jointly.

    \item \textbf{BRAF-mutant CRC prognosis.} In Supplementary Section~\ref{sec:supp_prognosis}, we analyse an additional colorectal cancer cohort with cell-type labels~\cite{trahearn2025computational}. An unsupervised SAIL Mode contrasting stromal cells (myocytes, fibroblasts and endothelial cells) with immune cells (lymphocytes and neutrophils) stratifies patients by overall survival (log-rank $p=0.002$, HR $=1.83$). It outperforms all univariate cell-type baselines.

    \item \textbf{Macroeconomic geospatial data.} In Supplementary Section~\ref{sec:supp_macro}, we apply SAIL to the Global Macro Database~\cite{muller2025global}. It discovers modes of cross-country spatial coherence among economic indicators and tracks how they evolve from 2000 to 2023.

    \item \textbf{COVID-19.} In Supplementary Section~\ref{sec:supp_covid}, we use SAIL to track the spatial coherence of weekly COVID-19 case rates across countries. This reveals recurring periods of regional synchronisation, with hotspots concentrated in Western and Central Europe.

    \item \textbf{Mixing gases.} In Supplementary Section~\ref{sec:supp_diffusion}, we use SAIL to follow the mixing of two simulated gases by diffusion. Without supervision, it learns a mode based on species identity whose score decays as the gases mix, faster when the partition is removed than through a small hole, and stays constant when the partition is closed.

    \item \textbf{Scope and limitations.} In Supplementary Section~\ref{sec:supp_scope}, we vary how strongly the target depends on SAC. This empirically delineates the problem class $\mathcal{P}_{\mathrm{SAC}}$ of Theorem~\ref{theorem:inductive_bias}. SAIL outperforms message-passing architectures when the target is driven by spatial autocorrelation, but not when it is driven by feature content.
\end{itemize}

\section{Conclusion} 
We presented SAIL, a learnable formulation of SAC for high-dimensional spatial graphs. By jointly learning feature projections and spatial affinities, SAIL makes SAC an explicit quantity that can be analysed, optimised, and interpreted within modern learning pipelines. Across synthetic and real-world datasets, we observed that modelling SAC directly reveals structure that is difficult to capture through feature aggregation alone. In particular, unsupervised SAIL Modes recover spatially coherent biological niches, show a consistent association with response to PD-1/PD-L1 blockade, and expose prognostically relevant organisation in large biomedical datasets, while SAIL's supervised descriptors alone are sufficient to discriminate mesothelioma subtype, suggesting that SAC itself carries meaningful signal. More broadly, SAIL provides a practical bridge between classical spatial statistics and modern representation learning for cross-domain applications from geospatial analysis to biomedicine. Future work will explore generalisation to cross-channel coherence statistics.


\clearpage
\setcounter{algorithm}{1}
\begin{center}
{\Large\bfseries Supplementary Material for:\\
SAIL: A Learnable Generalisation of Spatial Autocorrelation\\over Graph-Structured Data}\\[1ex]
Piotr~Keller and Fayyaz~Minhas
\end{center}
\bigskip
\setcounter{tocdepth}{2}
\tableofcontents

\appendices

\section{Biological and Clinical Background}
\label{sec:supp_bio_background}
Since this work applies SAIL to biomedical data, we briefly summarise the main domain concepts and terminology.

\subsection{Spatial Omics Modalities}
\emph{Omics} refers to the large-scale measurement of molecular features within a biological sample. \emph{Spatial omics} technologies additionally record \emph{where} each measurement was taken, producing molecular readouts paired with coordinates.
\begin{itemize}
    \item \textbf{Spatial transcriptomics} measures gene expression (the abundance of RNA transcripts, i.e.\ how ``active'' each gene is) at many locations across a tissue.
    \item \textbf{Spatial proteomics} measures protein abundance rather than gene expression. We utilise fluorescent markers whose presence indicates a particular cell type or state (e.g., CD8a marks cytotoxic T cells, CD31 marks endothelial/blood-vessel cells).
\end{itemize}

\subsection{Tissue Structure and Spatial Niches}
Biological tissue is spatially organised: cells of similar type and function tend to cluster into coherent regions. A \emph{spatial niche} (or \emph{compartment}) is such a region, defined by a characteristic molecular or cellular composition, for example a subcellular compartment such as the nucleolus, nucleus, endoplasmic reticulum or cell periphery. Identifying niches is a core goal of spatial biology because tissue function depends not only on \emph{which} cells are present but on \emph{how} they are spatially arranged.

\subsection{Histology and Whole-Slide Images}
A whole-slide image (WSI) is a high-resolution digital scan of a stained tissue slide, often 100,000 px by 100,000 px. Because a WSI is too large to process directly, it is tiled into small patches; each patch is passed through a pretrained neural network or model to obtain a feature representation for each patch.

\subsection{Reference Annotations and Overlap (Dice)}
Biologically informed annotations provide reference masks. To measure how well a discovered SAIL Mode matches an annotated region, we use the Dice score, an overlap score between two regions equal to twice their intersection divided by the sum of their sizes, ranging from $0$ (no overlap) to $1$ (perfect overlap).

\subsection{Survival Analysis}
A central clinical question is whether a measured quantity predicts patient \emph{outcome}, most commonly \emph{survival time} (time from diagnosis or surgery until a particular event such as death or disease progression). Survival data are \emph{censored}: for patients still alive at the end of the study, the true survival time is unknown, only that it exceeds the observation window. Standard classification metrics cannot handle censoring, so specialised tools are used:
\begin{itemize}
    \item \textbf{Median split.} To test whether a continuous score (here, a global SAIL score) is prognostic, patients are divided into two groups by thresholding at the median value, giving a ``high'' and a ``low'' group. Outcomes are then compared between groups.
    \item \textbf{Kaplan--Meier analysis.} A method to estimate and visualise the fraction of patients surviving over time from censored data. Plotted as a step curve per group; well-separated curves indicate the groups have different prognoses.
    \item \textbf{Log-rank test.} A statistical hypothesis test for whether two Kaplan--Meier survival curves differ significantly.
    \item \textbf{Concordance index (C-Index).} The survival analogue of the AUC: the probability that, for a random pair of patients, the model correctly orders who survives longer. A value of $0.5$ is random; values $>0.5$ indicate better discrimination.
    \item \textbf{Hazard ratio (HR).} From a Cox proportional-hazards model, the relative instantaneous risk of the event in one group versus another. $\mathrm{HR}>1$ means higher risk; $\mathrm{HR}=1$ means no difference.
    \item \textbf{Restricted mean survival time (RMST).} The average event-free time over a fixed follow-up window; $\Delta$RMST is the difference between groups, giving an interpretable ``extra months/days survived'' quantity.
\end{itemize}

\section{Additional Classical Spatial Statistics}
\label{sec:supp_classical_indices}
Below we summarise two further indices used in this work: Getis--Ord $G^\star$, which measures local hotspot concentration, and the Morisita--Horn index, which measures neighbourhood compositional overlap.

\subsection{Getis--Ord $G^\star$} Getis--Ord $G^\star$ identifies local ``hotspots'' and ``coldspots''. For a scalar feature with values $x_1,\dots,x_n$ (dropping the graph index $i$ and feature index $k$ for readability), let $\bar x=n^{-1}\sum_{j'}x_{j'}$ denote its mean and $S=\sqrt{n^{-1}\sum_{j'}x_{j'}^2-\bar x^2}$ its population standard deviation. The standardised statistic at node $j$ is
\begin{equation}
G_j^{\star}
=
\frac{
\sum_{j'} w_{jj'} x_{j'} - \bar x \sum_{j'} w_{jj'}
}{
S\sqrt{
\dfrac{
n\sum_{j'} w_{jj'}^2 - \left(\sum_{j'} w_{jj'}\right)^2
}{n-1}
}
},
\label{eq:getisord}
\end{equation}
where the sums over $j'$ include $j'=j$, i.e.\ self-weights $w_{jj}>0$ are allowed, as in the starred statistic. $G_j^\star$ is a $z$-score: large positive values indicate that node $j$ and its neighbours collectively take high values (a potential hotspot), while large negative values indicate a coldspot.

\subsection{Morisita--Horn Index} The Morisita--Horn index measures compositional similarity between two distributions and is commonly used in ecology to quantify overlap between species abundance profiles. In spatial settings, it can be used to compare the feature composition of a node with that of its spatial neighbourhood, providing a measure of neighbourhood compositional overlap.

Let $x_{ij}\in\mathbb{R}_{\geq 0}^{d}$ denote the non-zero, non-negative feature vector of node $j$, and let
\begin{equation}
p_{ij}=\frac{x_{ij}}{\|x_{ij}\|_1}
\end{equation}
be its relative-abundance vector. We define the affinity-weighted neighbourhood composition as
\begin{equation}
m_{ij}^{(k)} = \sum_{j'} w_{jj'}\, p_{ij'}^{(k)},
\label{eq:mh_neighbour}
\end{equation}
which, for row-stochastic weights ($\sum_{j'}w_{jj'}=1$), is itself a relative-abundance vector describing the composition of the neighbourhood of node $j$. The Morisita--Horn similarity between node $j$ and its neighbourhood is then
\begin{equation}
\mathrm{MH}_{ij} =
\frac{
2 \sum_{k=1}^{d} p_{ij}^{(k)} m_{ij}^{(k)}
}{
\sum_{k=1}^{d} \left(p_{ij}^{(k)}\right)^2
+
\sum_{k=1}^{d} \left(m_{ij}^{(k)}\right)^2
}.
\label{eq:morisitahorn}
\end{equation}

This score lies in $[0,1]$, with larger values indicating greater compositional similarity between the node and its neighbourhood, and smaller values indicating compositional heterogeneity. In the context of spatial analysis, high Morisita--Horn values indicate regions where neighbouring nodes share similar feature compositions, whereas low values indicate spatial transitions between distinct compositions.

\section{Additional Input Feature Space Mapping}
\label{sec:supp_fmap}
\subsection{Linear Projection} The most basic choice is a dense linear layer,
\begin{equation}
\mathbf{Z} = \mathbf{X} \mathbf{W}, \quad \mathbf{W} \in \mathbb{R}^{d \times h},
\end{equation}
which learns spatial scalar fields as weighted combinations of input features.

\subsection{Orthogonal Projection} To encourage decorrelated and diverse spatial directions, we can constrain the projection matrix to lie on the Stiefel manifold:
\begin{equation}
\mathbf{Z} = \mathbf{X} \mathbf{W}, \quad \mathrm{subject to } \mathbf{W}^\top \mathbf{W} = \mathbf{I}_h.
\end{equation}
This orthogonality ensures that each column of \(\mathbf{Z}\) captures an independent spatial signal.

\subsection{Gumbel-Softmax Relaxation} To enable continuous relaxation of the discrete concept assignment while preserving differentiability, SAIL also supports the Gumbel-Softmax trick:
\begin{equation}
\mathbf{Z}_i^{(j)} = \frac{\exp\left((\log p_i^{(j)} + g_i^{(j)}) / \tau_{\mathrm{G}}\right)}{\sum_{k=1}^m \exp\left((\log p_i^{(k)} + g_i^{(k)}) / \tau_{\mathrm{G}}\right)},
\end{equation}
where \(p_i = \mathrm{softmax}(x_i \mathbf{V})\) and \(g_i^{(j)} \sim \mathrm{Gumbel}(0, 1)\). As the temperature \(\tau_{\mathrm{G}} \to 0\) (distinct from the spatial temperature $\tau$), this approximation converges to a true one-hot vector.

\subsection{Concept-Based (One-Hot) Projection} For discrete, interpretable representations, each feature vector is assigned to one of \(m\) learned ``concepts'' via a classification-style projection:
\begin{equation}
\mathbf{Z} = \mathrm{onehot}(\arg\max(\mathbf{X} \mathbf{V})) \in \{0,1\}^{n \times m}, \quad \mathbf{V} \in \mathbb{R}^{d \times m}.
\end{equation}
This hard assignment is implemented using a straight-through estimator:
\begin{equation}
\begin{aligned}
\mathbf{Z} &= \mathrm{onehot}(\arg\max\mathbf{H}) + (\mathbf{H} - \mathrm{stopgrad}(\mathbf{H})),\\
\mathbf{H} &= \mathrm{softmax}(\mathbf{X} \mathbf{V}),
\end{aligned}
\end{equation}
where gradients during backpropagation flow through \(\mathbf{H}\) while the forward pass uses the discrete one-hot encoding.

\section{Training Algorithms}
\label{sec:supp_algorithms}
Supplementary Algorithm~\ref{alg:sail_supervised} details supervised training of SAIL with a general task loss, and Supplementary Algorithm~\ref{alg:sail_modes} details unsupervised discovery of SAIL Modes; both call the SAIL forward pass of Algorithm~\ref{alg:sail_forward} of the main paper.

\begin{algorithm}[t]
\caption{Supervised Training of SAIL with a General Task Loss}
\label{alg:sail_supervised}
\begin{algorithmic}

\STATE \textbf{Input:}
Labelled dataset
$\mathcal{D}=\{(\mathcal{G}_i,y_i)\}_{i=1}^{N}$;
prediction head $f_{\mathrm{pred}}$;
task loss $\mathcal{L}_{\mathrm{task}}$;
learning rate $\eta$;
epochs $N_{\mathrm{epochs}}$;
and batch size $N_{\mathrm{batch}}$

\STATE \textbf{Parameters:}
SAIL parameters
$\theta_{\mathrm{map}}$,
$\theta_{\mathrm{affinity}}$,
$\tau>0$ and prediction parameters
$\theta_{\mathrm{pred}}$

\STATE \textbf{Return:}
optimised parameters
$\theta_{\mathrm{map}}^\star$,
$\theta_{\mathrm{affinity}}^\star$,
$\tau^\star$, and
$\theta_{\mathrm{pred}}^\star$

\STATE
Initialise
$\theta_{\mathrm{map}}$,
$\theta_{\mathrm{affinity}}$,
$\tau$, and
$\theta_{\mathrm{pred}}$

\FOR{$e=1,\ldots,N_{\mathrm{epochs}}$}

    \FOR{each mini-batch
    $\mathcal{B}\subset\mathcal{D}$,
    $|\mathcal{B}|=N_{\mathrm{batch}}$}

        \FOR{each $(\mathcal{G}_i,y_i)\in\mathcal{B}$}

            \STATE
            $I_i^{\mathrm{local}},I_i^{\mathrm{global}}
            \gets
            \textsc{SAIL\_Forward}
            \left(
            \mathcal{G}_i;
            \theta,
            \tau
            \right)$
            \COMMENT{Algorithm~\ref{alg:sail_forward} of the main paper}

            \STATE
            $\hat{y}_i
            \gets
            f_{\mathrm{pred}}
            \left(
            I_i^{\mathrm{global}};
            \theta_{\mathrm{pred}}
            \right)$

        \ENDFOR

        \STATE
        $\mathcal{L}_{\mathcal{B}}
        \gets
        \frac{1}{|\mathcal{B}|}
        \sum_{(\mathcal{G}_i,y_i)\in\mathcal{B}}
        \mathcal{L}_{\mathrm{task}}
        \left(
        \hat{y}_i,y_i
        \right)$

        \STATE

        $\theta
        \gets
        \theta
        -
        \eta\nabla_{\theta}\mathcal{L}_{\mathcal{B}},
        \qquad
        \theta
        =
        \left\{
        \theta_{\mathrm{map}},
        \theta_{\mathrm{affinity}},
        \tau,
        \theta_{\mathrm{pred}}
        \right\}$

    \ENDFOR

\ENDFOR

\STATE \textbf{return}
$\theta_{\mathrm{map}}^\star$,
$\theta_{\mathrm{affinity}}^\star$,
$\tau^\star$,
$\theta_{\mathrm{pred}}^\star$

\end{algorithmic}
\end{algorithm}

\begin{algorithm}[t]
\caption{Unsupervised Discovery of SAIL Modes}
\label{alg:sail_modes}
\begin{algorithmic}

\STATE \textbf{Input:}
Dataset $\mathcal{D}=\{\mathcal{G}_i\}_{i=1}^{N}$;
number of modes $K$;
learning rate $\eta$;
epochs $N_{\mathrm{epochs}}$;
batch size $N_{\mathrm{batch}}$;
regularisation weights
$\lambda_{\mathrm{corr}},\lambda_{\mathrm{sparse}}\geq 0$; and operator sign $s\in\{+1,-1\}$

\STATE \textbf{Parameters:}
$\theta_{\mathrm{map}}$,
$\theta_{\mathrm{affinity}}$,
$\tau>0$

\STATE \textbf{Return:}
optimised
$\theta_{\mathrm{map}}^\star$,
$\theta_{\mathrm{affinity}}^\star$ and $\tau^\star$ for $K$ modes

\STATE
Initialise
$\theta_{\mathrm{map}}$,
$\theta_{\mathrm{affinity}}$,
and $\tau$

\FOR{$e=1,\ldots,N_{\mathrm{epochs}}$}

    \FOR{each mini-batch
    $\mathcal{B}\subset\mathcal{D}$,
    $|\mathcal{B}|=N_{\mathrm{batch}}$}

        \FOR{each $\mathcal{G}_i\in\mathcal{B}$}

            \STATE
            $I_i^{\mathrm{local}},I_i^{\mathrm{global}}
            \gets
            \textsc{SAIL\_Forward}
            \left(
            \mathcal{G}_i;
            \theta,
            \tau
            \right)$
            \STATE
            $\mathbf{Z}_i
            \gets
            f_{\mathrm{map}}
            \left(
            \mathbf{X}_i;
            \theta_{\mathrm{map}}
            \right)$

        \ENDFOR

        \STATE
        $J_r
        \gets
        \frac{s}{|\mathcal{B}|}
        \sum_{\mathcal{G}_i\in\mathcal{B}}
        I_{i,r}^{\mathrm{global}},
        \quad r=1,\ldots,K$

        \STATE
        $z_{\mathcal{B}}^{(r)}
        \gets
        \operatorname{concat}
        \left(
        \left\{
        \mathbf{Z}_{i,:,r}
        \mid
        \mathcal{G}_i\in\mathcal{B}
        \right\}
        \right),
        \quad r=1,\ldots,K$

        \STATE
        $\mathcal{L}_{\mathrm{corr}}
        \gets
        \lambda_{\mathrm{corr}}
        \sum_{\substack{r,s=1\\r\neq s}}^{K}
        \operatorname{Corr}
        \left(
        z_{\mathcal{B}}^{(r)},
        z_{\mathcal{B}}^{(s)}
        \right)^2$

        \STATE
        $\mathcal{L}_{\mathrm{sparse}}
        \gets
        \lambda_{\mathrm{sparse}}
        \left\|
        \theta_{\mathrm{map}}
        \right\|_1$

        \STATE
        $\mathcal{L}_{\mathcal{B}}
        \gets
        -\frac{1}{K}
        \sum_{r=1}^{K}J_r
        +
        \mathcal{L}_{\mathrm{corr}}
        +
        \mathcal{L}_{\mathrm{sparse}}$

        \STATE
        $\left\{
        \theta_{\mathrm{map}},
        \theta_{\mathrm{affinity}},
        \tau
        \right\}
        \gets
        \left\{
        \theta_{\mathrm{map}},
        \theta_{\mathrm{affinity}},
        \tau
        \right\}
        -
        \eta\nabla\mathcal{L}_{\mathcal{B}}$

    \ENDFOR

\ENDFOR

\STATE \textbf{return}
$\theta_{\mathrm{map}}^\star$,
$\theta_{\mathrm{affinity}}^\star$,
$\tau^\star$

\end{algorithmic}
\end{algorithm}

\section{Additional Theoretical Analysis}
\label{sec:supp_Theoretical_Analysis}

We restate the theorems of the main text and provide their proofs.

\begin{restatetheorem}{theorem:classical_reduction}{Classical spatial statistics as exact special cases of SAIL}

Let $\mathbf{A}=[A_{jj'}]\in\mathbb{R}_{\geq 0}^{n\times n}$ be a fixed
row-stochastic spatial weight matrix, i.e., $\mathbf{A}\mathbf{1}=\mathbf{1}$,
and let $f_{\mathrm{act}}=\mathrm{Id}$. 
Then, under appropriate fixed choices of the SAIL mapping,
local-scoring, standardisation, and reduction operators, SAIL (Eq.~\eqref{eq:sail-main}) exactly
recovers:

\begin{enumerate}
    \item[(i)] Moran's $I$ for a non-constant scalar mapped field,
    using population standardisation, $A_{jj}=0$, and mean reduction;

    \item[(ii)] Geary's $C$ for a non-constant scalar mapped field,
    using sample standardisation, $A_{jj}=0$, and mean reduction;

    \item[(iii)] the standardised Getis--Ord $G_j^\star$ statistic
    for a non-constant scalar mapped field, using population
    standardisation and allowing self-weights as in the starred
    statistic, whenever its null variance is non-zero; and

    \item[(iv)] the Morisita--Horn similarity between a non-negative,
    unstandardised multichannel mapped representation at a node and its
    affinity-weighted neighbourhood composition, provided that
    $f_{\mathrm{local}}$ is allowed to act jointly on the corresponding
    mapped channels.
\end{enumerate}

Specifically, the resulting SAIL outputs are respectively
$I_{\mathrm{Moran}}(z;\mathbf{A})$, $C_{\mathrm{Geary}}(z;\mathbf{A})$,
$G_j^\star(z;\mathbf{A})$, and
$C_{\mathrm{MH}}(u_j,v_j)$, where
$u_j=f_{\mathrm{map}}(x_j)\in\mathbb{R}_{\geq 0}^{h}$
is the mapped representation at node $j$ and
$v_j=\sum_{j'}A_{jj'}u_{j'}$ is its
affinity-weighted neighbourhood composition.
\end{restatetheorem}

\begin{proof}

Since $\mathbf{A}$ is row-stochastic, $\sum_{j'}A_{jj'}=1$ for every $j$.
Consequently,
\[
S_0
:=
\sum_{j,j'}A_{jj'}
=
\sum_j 1
=
n.
\]

We consider each statistic in turn.

\medskip
\noindent
\emph{(i) Moran's $I$.}
Let $z_j=f_{\mathrm{map}}(x_j)$ and define the population
mean and variance by
$\bar z=n^{-1}\sum_j z_j$ and
$\sigma_z^2=n^{-1}\sum_j(z_j-\bar z)^2$.
Population standardisation gives
$\tilde z_j=(z_j-\bar z)/\sigma_z$, and therefore
$z_j-\bar z=\sigma_z\tilde z_j$ and
$\sum_j(z_j-\bar z)^2=n\sigma_z^2$.

The classical Moran statistic computed with weights $\mathbf{A}$ is
\[
I_{\mathrm{Moran}}(z;\mathbf{A})
=
\frac{n}{S_0}
\frac{
\sum_{j,j'}A_{jj'}
(z_j-\bar z)(z_{j'}-\bar z)
}{
\sum_j(z_j-\bar z)^2
}.
\]
Substituting $S_0=n$,
$z_j-\bar z=\sigma_z\tilde z_j$, and
$\sum_j(z_j-\bar z)^2=n\sigma_z^2$ yields
\[
\begin{aligned}
I_{\mathrm{Moran}}(z;\mathbf{A})
&=
\frac{n}{n}
\frac{
\sum_{j,j'}A_{jj'}
(\sigma_z\tilde z_j)
(\sigma_z\tilde z_{j'})
}{
n\sigma_z^2
}
\\
&=
\frac{\sigma_z^2}{n\sigma_z^2}
\sum_{j,j'}A_{jj'}\tilde z_j\tilde z_{j'}
\\
&=
\frac{1}{n}
\sum_{j,j'}A_{jj'}\tilde z_j\tilde z_{j'}.
\end{aligned}
\]

Now choose the SAIL local operator
\[
I_j^{\mathrm{local}}
=
\tilde z_j
\sum_{j'}A_{jj'}\tilde z_{j'}
=
\tilde z_j(\mathbf{A}\tilde z)_j
\]
and let $f_{\mathrm{reduce}}$ be the mean. Then
\[
\begin{aligned}
I^{\mathrm{global}}
&=
\frac{1}{n}\sum_j I_j^{\mathrm{local}}
\\
&=
\frac{1}{n}\sum_j
\tilde z_j
\sum_{j'}A_{jj'}\tilde z_{j'}
\\
&=
\frac{1}{n}
\sum_{j,j'}A_{jj'}\tilde z_j\tilde z_{j'}
\\
&=
I_{\mathrm{Moran}}(z;\mathbf{A}).
\end{aligned}
\]
Hence Moran's $I$ is an exact SAIL special case.

\medskip
\noindent
\emph{(ii) Geary's $C$.}
Again let $z_j=f_{\mathrm{map}}(x_j)$, but now define the
sample variance
$s_z^2=(n-1)^{-1}\sum_j(z_j-\bar z)^2$ and standardise as
$\tilde z_j=(z_j-\bar z)/s_z$.
It follows that
$z_j-z_{j'}=s_z(\tilde z_j-\tilde z_{j'})$ and
$\sum_j(z_j-\bar z)^2=(n-1)s_z^2$.

The classical Geary statistic under $\mathbf{A}$ is
\[
C_{\mathrm{Geary}}(z;\mathbf{A})
=
\frac{n-1}{2S_0}
\frac{
\sum_{j,j'}A_{jj'}(z_j-z_{j'})^2
}{
\sum_j(z_j-\bar z)^2
}.
\]
Substituting $S_0=n$,
$z_j-z_{j'}=s_z(\tilde z_j-\tilde z_{j'})$, and
$\sum_j(z_j-\bar z)^2=(n-1)s_z^2$ gives
\[
\begin{aligned}
C_{\mathrm{Geary}}(z;\mathbf{A})
&=
\frac{n-1}{2n}
\frac{
s_z^2
\sum_{j,j'}A_{jj'}
(\tilde z_j-\tilde z_{j'})^2
}{
(n-1)s_z^2
}
\\
&=
\frac{1}{2n}
\sum_{j,j'}A_{jj'}
(\tilde z_j-\tilde z_{j'})^2.
\end{aligned}
\]

Choose the SAIL local operator
\[
I_j^{\mathrm{local}}
=
\frac{1}{2}
\sum_{j'}A_{jj'}
(\tilde z_j-\tilde z_{j'})^2
\]
and again let $f_{\mathrm{reduce}}$ be the mean. Then
\[
\begin{aligned}
I^{\mathrm{global}}
&=
\frac{1}{n}\sum_j I_j^{\mathrm{local}}
\\
&=
\frac{1}{2n}
\sum_j\sum_{j'}
A_{jj'}
(\tilde z_j-\tilde z_{j'})^2
\\
&=
\frac{1}{2n}
\sum_{j,j'}A_{jj'}
(\tilde z_j-\tilde z_{j'})^2
\\
&=
C_{\mathrm{Geary}}(z;\mathbf{A}).
\end{aligned}
\]
Hence Geary's $C$ is also an exact SAIL special case.

\medskip
\noindent
\emph{(iii) Getis--Ord $G_j^\star$.}
Let $\bar z$ and $\sigma_z$ denote the population mean and standard
deviation defined above, and let
$\tilde z_j=(z_j-\bar z)/\sigma_z$.
The standardised Getis--Ord statistic under $\mathbf{A}$ is
\[
G_j^\star(z;\mathbf{A})
=
\frac{
\sum_{j'}A_{jj'}z_{j'}
-
\bar z\sum_{j'}A_{jj'}
}{
\sigma_z
\sqrt{
\frac{
n\sum_{j'}A_{jj'}^2
-
\left(\sum_{j'}A_{jj'}\right)^2
}{
n-1
}
}
}.
\]

Since $\mathbf{A}$ is row-stochastic, $\sum_{j'}A_{jj'}=1$. Substitution gives
\[
G_j^\star(z;\mathbf{A})
=
\frac{
\sum_{j'}A_{jj'}z_{j'}-\bar z
}{
\sigma_z
\sqrt{
\frac{
n\sum_{j'}A_{jj'}^2-1
}{
n-1
}
}
}.
\]
Moreover,
\[
\begin{aligned}
\sum_{j'}A_{jj'}z_{j'}-\bar z
&=
\sum_{j'}A_{jj'}z_{j'}
-
\bar z\sum_{j'}A_{jj'}
\\
&=
\sum_{j'}A_{jj'}(z_{j'}-\bar z)
\\
&=
\sigma_z
\sum_{j'}A_{jj'}\tilde z_{j'}.
\end{aligned}
\]
Substituting this expression and cancelling $\sigma_z$ yields
\[
G_j^\star(z;\mathbf{A})
=
\frac{
\sum_{j'}A_{jj'}\tilde z_{j'}
}{
\sqrt{
\frac{
n\sum_{j'}A_{jj'}^2-1
}{
n-1
}
}
}.
\]

Therefore, choosing
\[
I_j^{\mathrm{local}}
=
\frac{
\sum_{j'}A_{jj'}\tilde z_{j'}
}{
\sqrt{
\frac{
n\sum_{j'}A_{jj'}^2-1
}{
n-1
}
}
}
\]
gives $I_j^{\mathrm{local}}=G_j^\star(z;\mathbf{A})$ exactly. The denominator is
well-defined whenever $n\sum_{j'}A_{jj'}^2>1$. No reduction is required,
since $G_j^\star$ is itself a local statistic.

\medskip
\noindent
\emph{(iv) Morisita--Horn similarity.}
Let $x_j\in\mathbb{R}_{\geq0}^{d}$ be a non-zero
compositional feature vector and choose the (unstandardised) mapping
$
f_{\mathrm{map}}(x_j)
=
u_j
=
x_j/\|x_j\|_1,
$
so that $h=d$ and $\|u_j\|_1=1$.

Let
$
v_j
=
\sum_{j'}A_{jj'}u_{j'}
$
denote the affinity-weighted neighbourhood composition. Since $\mathbf{A}$ is
non-negative and row-stochastic,
\[
\|v_j\|_1
=
\sum_{j'}A_{jj'}\|u_{j'}\|_1
=
\sum_{j'}A_{jj'}
=
1.
\]
Thus both $u_j$ and $v_j$ are relative-abundance
vectors. The Morisita--Horn similarity between them is therefore
\[
C_{\mathrm{MH}}(u_j,v_j)
=
\frac{
2\,u_j^\top v_j
}{
\|u_j\|_2^2+\|v_j\|_2^2
}.
\]
Choosing
$
f_{\mathrm{local}}(u_j,v_j)
=
C_{\mathrm{MH}}(u_j,v_j)
$
gives
\[
I_j^{\mathrm{local}}
=
C_{\mathrm{MH}}
\left(
u_j,
\sum_{j'}A_{jj'}u_{j'}
\right),
\]
which is exactly the Morisita--Horn similarity between node $j$ and its
spatial neighbourhood.

Hence all four statistics are recovered as fixed instances of
the SAIL mapping, local-scoring, activation, and reduction operators.
\end{proof}

\begin{restatetheorem}{theorem:inductive_bias}{Inductive Bias of SAIL for SAC-Driven Learning}

Let $P$ be a distribution over spatial graphs
$G=(V,E,\mathbf{X},\mathbf{C})$, with an optional
associated target variable $Y\in\mathcal{Y}$,
and population risk
\[
R_P(f)=\mathbb{E}_{(G,Y)\sim P}
\left[\mathcal{L}_P(Y,f(G))\right],
\]

and Bayes-optimal predictor $f_P^\star\in\arg\min_f R_P(f)$ with risk $R_P^\star=R_P(f_P^\star)$.

Let $\mathcal{P}_{\mathrm{SAC}}$ denote the class of problems whose Bayes-optimal predictor depends on one or more spatial-autocorrelation (SAC) descriptors of node signals. For clarity, consider a single global descriptor,
$$
f_P^\star(G)=\rho_P\!\left(S_P(z_P,\mathbf{A}_P)\right),
\qquad
z_P=g_P(\mathbf{X})\in\mathbb{R}^n,
$$
where $\mathbf{A}_P$ is a spatial-affinity operator, $S_P$ is a SAC functional formed from local spatial interactions followed by an order-invariant reduction, and $\rho_P$ maps the SAC descriptor to the prediction target. 

Let
$$
\mathcal{F}_{\mathrm{SAIL}}
=
\left\{
G\mapsto
f_{\mathrm{pred}}
\left(
I^{\mathrm{global}}(G;\theta_{\mathrm{SAIL}});
\theta_{\mathrm{pred}}
\right)
\right\}
$$
denote the supervised SAIL hypothesis class, where $I^{\mathrm{global}}$ is obtained from the feature mapping $f_{\mathrm{map}}$, learned affinity $\mathbf{A}$, local SAC operator $f_{\mathrm{local}}$, optional activation $f_{\mathrm{act}}$, and reduction $f_{\mathrm{reduce}}$ of Eq.~\eqref{eq:sail-main}.

Assume that the admissible SAIL component families are sufficiently expressive to approximate $g_P$, $A_P$, and $\rho_P$, and that the chosen local and reduction operators implement $S_P$. Assume also that $0\leq\mathcal{L}_P\leq M$ and that $\mathcal{L}_P$ is $L$-Lipschitz in its prediction argument. Define
$$
\Delta_P(\mathcal{F})
=
\inf_{f\in\mathcal{F}}R_P(f)-R_P^\star,
$$
and let $\mathfrak{R}_m(\mathcal{F})$ denote the Rademacher complexity of a hypothesis class $\mathcal{F}$ with respect to $m$ i.i.d.\ samples from $P$.

Then:

\textnormal{(i) Sufficiency.}
For every $P\in\mathcal{P}_{\mathrm{SAC}}$ and every $\varepsilon>0$, there exists $f_\varepsilon\in\mathcal{F}_{\mathrm{SAIL}}$ such that
$$
\mathbb{E}_G\!\left[
|f_\varepsilon(G)-f_P^\star(G)|
\right]\leq\varepsilon,
$$
and therefore
$$
\Delta_P(\mathcal{F}_{\mathrm{SAIL}})
\leq L\varepsilon.
$$
If $\widehat f_{\mathrm{SAIL}}$ is an $\xi_m$-approximate empirical-risk minimiser from $m$ i.i.d.\ examples, then with probability at least $1-\delta$,
$$
R_P(\widehat f_{\mathrm{SAIL}})-R_P^\star
\leq
L\varepsilon+\xi_m
+
4L\mathfrak{R}_m(\mathcal{F}_{\mathrm{SAIL}})
+
2M\sqrt{\frac{\log(2/\delta)}{2m}}.
$$

\textnormal{(ii) Necessity of higher-order interaction for quadratic SAC.}
Consider the quadratic SAC subclass
$$
S(z,\mathbf{A})=z^\top Q(\mathbf{A})z,
$$
with $\mathbf{A}$ fixed with respect to $z$ and
$$
Q_s(\mathbf{A})=\frac{Q(\mathbf{A})+Q(\mathbf{A})^\top}{2}\neq0.
$$
Then
$$
\nabla_z^2 S(z,\mathbf{A})=2Q_s(\mathbf{A})\neq0.
$$
Let
$$
\mathcal{F}_{\mathrm{lin},T}
=
\left\{
f(z,\mathbf{A})=w^\top H_T(\mathbf{A})z+c:
H_T(\mathbf{A})=\sum_{t=0}^{T}\beta_t\mathbf{A}^t
\right\}
$$
denote the class of finite $T$-hop linear graph filters followed by linear pooling and readout. This includes, for example, Simple Graph Convolution (SGC), linearised Graph Convolution Networks (GCNs), polynomial graph filters, and linear message-passing models with linear pooling/readout. Every $f\in\mathcal{F}_{\mathrm{lin},T}$ is affine in $z$ and hence satisfies $\nabla_z^2f=0$. Thus $\mathcal{F}_{\mathrm{lin},T}$ cannot in general represent the quadratic SAC subclass, whereas SAIL explicitly contains the interaction
$$
z_j(\mathbf{A}z)_j=\sum_{j'}A_{jj'}z_j z_{j'}.
$$
\textnormal{(iii) Comparison with broader classes.}
More general non-linear graph neural networks (GNNs) may also represent the higher-order
interactions required by SAC, but only as one possibility within a
broader function class. Let $\mathcal{F}_{\mathrm{gen}}$ denote such a
broader SAC-capable hypothesis class satisfying
$$
\mathcal{F}_{\mathrm{SAIL}}
\subseteq
\mathcal{F}_{\mathrm{gen}}.
$$
Then
$$
\mathfrak{R}_m(\mathcal{F}_{\mathrm{SAIL}})
\leq
\mathfrak{R}_m(\mathcal{F}_{\mathrm{gen}}).
$$
Hence, once the SAC structure is already representable by SAIL,
additional generic expressivity cannot, by itself, yield a smaller
Rademacher-complexity term in the corresponding standard
generalisation bound. If the two classes attain the same approximation
and optimisation errors on a SAC-driven problem, the resulting
excess-risk bound for SAIL is no larger.

Thus SAIL provides a matched inductive bias for $\mathcal{P}_{\mathrm{SAC}}$: it contains the interactions required by SAC-driven prediction, simpler linear-interaction classes may fail to represent them, while broader enclosing classes provide no smaller complexity-dependent learning guarantee solely through their additional expressivity.

\end{restatetheorem}

\begin{proof}
\textit{Part (i).}
Consider a problem $P\in\mathcal{P}_{\mathrm{SAC}}$ with
$$
f_P^\star(G)=\rho_P\!\left(S_P(z_P,\mathbf{A}_P)\right).
$$
The argument extends componentwise to multiple SAC descriptors and directly to local SAC outputs.

The defining computation has the structure
$$
\mathbf{X}
\longrightarrow
z_P
\longrightarrow
S_P(z_P,\mathbf{A}_P)
\longrightarrow
\rho_P\!\left(S_P(z_P,\mathbf{A}_P)\right).
$$
SAIL has the same structure: $f_{\mathrm{map}}$ constructs scalar node fields, the affinity mechanism constructs $\mathbf{A}$, $f_{\mathrm{local}}$ and $f_{\mathrm{reduce}}$ compute SAC descriptors, and $f_{\mathrm{pred}}$ maps these descriptors to the task output. Thus
$$
g_P\leftrightarrow f_{\mathrm{map}},
\qquad
\mathbf{A}_P\leftrightarrow \mathbf{A},
\qquad
$$
$$
S_P\leftrightarrow
f_{\mathrm{reduce}}\circ f_{\mathrm{act}}\circ f_{\mathrm{local}},
\qquad
\rho_P\leftrightarrow f_{\mathrm{pred}}.
$$
Under exact component representation, an appropriate SAIL parametrisation therefore satisfies $f_\theta^{\mathrm{SAIL}}=f_P^\star$. More generally, by the assumed expressivity and continuity of the composition, for every $\varepsilon>0$ there exists $f_\varepsilon\in\mathcal{F}_{\mathrm{SAIL}}$ such that
$$
\mathbb{E}_G
\left[
|f_\varepsilon(G)-f_P^\star(G)|
\right]
\leq\varepsilon.
$$
Since $\mathcal{L}_P$ is $L$-Lipschitz,
$$
R_P(f_\varepsilon)-R_P^\star
\leq
L\,
\mathbb{E}_G
\left[
|f_\varepsilon(G)-f_P^\star(G)|
\right]
\leq L\varepsilon,
$$
so $\Delta_P(\mathcal{F}_{\mathrm{SAIL}})\leq L\varepsilon$.

Let
$$
\widehat R_m(f)
=
\frac{1}{m}
\sum_{r=1}^{m}\mathcal{L}_P(Y_r,f(G_r)),
$$
and suppose
$$
\widehat R_m(\widehat f_{\mathrm{SAIL}})
\leq
\inf_{f\in\mathcal{F}_{\mathrm{SAIL}}}\widehat R_m(f)
+\xi_m.
$$
Standard Rademacher uniform convergence for a bounded $L$-Lipschitz loss gives, with probability at least $1-\delta$,
$$
\begin{aligned}
R_P(\widehat f_{\mathrm{SAIL}})
-\inf_{f\in\mathcal F_{\mathrm{SAIL}}}R_P(f)
&\leq \xi_m
+4L\mathfrak R_m(\mathcal F_{\mathrm{SAIL}}) \\
&\quad
+2M\sqrt{\frac{\log(2/\delta)}{2m}}.
\end{aligned}
$$
Combining this with the approximation bound proves Part~(i).

\medskip

\textit{Part (ii).}
For
$$
S(z,\mathbf{A})=z^\top Q(\mathbf{A})z,
$$
the skew-symmetric part of $Q(\mathbf{A})$ contributes zero, so
$$
S(z,\mathbf{A})=z^\top Q_s(\mathbf{A})z,
\qquad
Q_s(\mathbf{A})=\frac{Q(\mathbf{A})+Q(\mathbf{A})^\top}{2}.
$$
Hence
$$
\nabla_z^2S(z,\mathbf{A})=2Q_s(\mathbf{A})\neq0,
$$
showing genuine second-order dependence on $z$.

For any $f\in\mathcal{F}_{\mathrm{lin},T}$,
$$
f(z,\mathbf{A})
=
w^\top
\left(
\sum_{t=0}^{T}\beta_t\mathbf{A}^t
\right)z+c
=
b(\mathbf{A})^\top z+c,
$$
and therefore
$$
\nabla_z^2f(z,\mathbf{A})=0.
$$
No such predictor can equal a non-degenerate quadratic SAC target on an open subset of the signal domain. Under squared-error loss, if the distribution of $z$ has finite fourth moment and support containing a non-empty open set, this implies
$$
\Delta_P(\mathcal{F}_{\mathrm{lin},T})>0
$$
for some quadratic SAC-driven problems.

Increasing $T$ changes the spatial range of the graph filter but not its order of dependence on $z$. SAIL, by contrast, directly evaluates
$$
z_j(\mathbf{A}z)_j
=
\sum_{j'}A_{jj'}z_j z_{j'},
$$
and global reduction yields the corresponding quadratic SAC statistic. Thus the required second-order interaction is explicit in SAIL. This proves Part~(ii).

\medskip

\textit{Part (iii).}
Let
$$
\mathcal{F}_{\mathrm{SAIL}}
\subseteq
\mathcal{F}_{\mathrm{gen}}.
$$
For a fixed sample $G_1,\ldots,G_m$ and Rademacher variables $\sigma_1,\ldots,\sigma_m$,
$$
\sup_{f\in\mathcal{F}_{\mathrm{SAIL}}}
\frac{1}{m}\sum_{r=1}^{m}\sigma_r f(G_r)
\leq
\sup_{f\in\mathcal{F}_{\mathrm{gen}}}
\frac{1}{m}\sum_{r=1}^{m}\sigma_r f(G_r).
$$
Taking expectations gives
$$
\mathfrak{R}_m(\mathcal{F}_{\mathrm{SAIL}})
\leq
\mathfrak{R}_m(\mathcal{F}_{\mathrm{gen}}).
$$
Therefore, for
$$
\Gamma_m(\mathcal{F},\delta)
=
4L\mathfrak{R}_m(\mathcal{F})
+
2M\sqrt{\frac{\log(2/\delta)}{2m}},
$$
we have
$$
\Gamma_m(\mathcal{F}_{\mathrm{SAIL}},\delta)
\leq
\Gamma_m(\mathcal{F}_{\mathrm{gen}},\delta).
$$

Because $\mathcal{F}_{\mathrm{gen}}$ is larger, it may attain smaller approximation error, so this does not imply worse realised test performance for the broader class. It establishes that once the SAC target is already adequately represented by SAIL, enlarging the hypothesis class alone cannot improve the Rademacher-complexity contribution to the standard generalisation bound. If approximation and optimisation errors are equal, the resulting excess-risk bound for SAIL is therefore no larger. This proves Part~(iii).
\end{proof}

\section{Additional implementation details on learning spatial autocorrelation beyond message passing}
\label{sec:supp_synth_protocol}

This section provides complete implementation details for the synthetic
Moran's $I$ regression experiment of Section~\ref{sec:beyond_mp}.
We adopt a two-stage approach where each architecture is assessed at its own best configuration under an identical optimisation budget. All models are trained with Adam, learning rate $10^{-2}$, mean-squared-error loss, and early stopping on validation Pearson correlation with patience $10$ and a maximum of $30$ epochs. The data splits are $600/250/600$ graphs for training/validation/test. Stage~1
selects hyperparameters on a single sweep seed; Stage~2 estimates the
performance distribution by retraining each selected model across $1000$
independent seeds.

\subsection{Dataset Construction}
\label{sec:supp_synth_dataset}

Each sample is an independently generated spatial graph. The generative process
is designed so that the regression target is, by construction, a global spatial
autocorrelation functional of a \emph{hidden} linear projection of the node features evaluated on a \emph{hidden} neighbourhood, and so that neither the projection nor the spatial scale can be memorised across graphs.

\paragraph{Coordinates} For each graph we sample the number of nodes uniformly,
$n \sim \mathcal{U}\{50,\dots,100\}$, and draw node coordinates $c_j \sim \mathcal{U}([-1,1]^2)$ independently.

\paragraph{Spatially correlated node features} Node features $\mathbf{X}\in\mathbb{R}^{n\times d}$ with $d=8$ are drawn from a Gaussian
process over the coordinates. We form an RBF kernel $K^{\mathrm{RBF}}_{jj'} = a^2 \exp\!\big(-\lVert c_j-c_{j'}\rVert_2^2 /(2\ell^2)\big)$
with amplitude $a=1$, add jitter $10^{-5}\mathbf{I}$ for numerical stability, and take its Cholesky factor $C_L$. Eight independent Gaussian process draws are generated as $\mathbf{X} = C_L\,\boldsymbol{\varepsilon}$ with
$\boldsymbol{\varepsilon}\in\mathbb{R}^{n\times 8}$, $\varepsilon_{jk}\sim\mathcal{N}(0,1)$.
The per-graph lengthscale $\ell$ is sampled log-uniformly on $[0.15,\,1.2]$, so that spatial smoothness varies across graphs. To introduce inter-feature correlation, the channels are mixed by a random low-rank matrix $\mathbf{M}=\mathbf{U}\mathbf{V}^\top\in\mathbb{R}^{d\times d}$ of rank $r=3$ (with $\mathbf{U},\mathbf{V}\in\mathbb{R}^{d\times r}$, standard normal entries), giving
$\mathbf{X}\leftarrow \mathbf{X}\mathbf{M}$. Finally each feature channel is standardised to zero mean and unit variance across nodes.

\paragraph{Hidden projection and regression target} 
For each graph a hidden projection $w^\star\in\mathbb{R}^d$ is drawn uniformly on the unit sphere ($w^\star=u/\lVert u\rVert$,
$u\sim\mathcal{N}(0,\mathbf{I}_d)$) and defines the latent scalar field $z^\star=\mathbf{X}w^\star$, standardised to zero mean and unit
variance. The regression label is:
\begin{equation}
\begin{aligned}
y \;=\; I_{\mathrm{Moran}}(z^\star)
&\;=\;
\frac{n}{S_0}\,
\frac{\sum_{j\neq j'} w_{jj'}\,\tilde z^\star_j \tilde z^\star_{j'}}
{\sum_j (\tilde z^\star_j)^2}, \\
S_0 &= \sum_{j\neq j'} w_{jj'},
\end{aligned}
\end{equation}
where the weights $w_{jj'}$ are defined on the \emph{true} neighbourhood: a
symmetric $k$-nearest-neighbour graph with $k_{\mathrm{true}}=8$, and edges
weighted by an RBF of the coordinate distance,
$w_{jj'}=\exp(-D_{jj'}^2/(2\ell_w^2))$ with $\ell_w=0.6$. Because
$w^\star$, $\ell$, and the induced neighbourhood vary per graph, the
mapping from $(\mathbf{X},c)$ to $y$ cannot be solved by memorising a
fixed direction or spatial scale; it requires modelling the spatial
autocorrelation functional itself. Crucially, the generating neighbourhood
$k_{\mathrm{true}}$ is never exposed to any model.

\subsection{Neighbourhood Condition}
\label{sec:supp_synth_neighbourhoods}
 
Because no model is handed the generating operator, all models must infer
spatial structure from a neighbourhood they construct themselves. We adopt an
access-matched, fully connected, distance-weighted comparison: every node pair
is connected, with edge weights $w=\exp(-D_{jj'}^2/(2\cdot 0.6^2))$, or raw pairwise
distances supplied as edge attributes for attention variants. The baselines see
all pairs and must learn the weighting, while SAIL likewise sees all pairwise
distances and must learn the spatial temperature $\tau$. 

\subsection{Models and Architectural Choices}
\label{sec:supp_synth_models}
 
All models share a two-stage template: a node-level encoder followed by global
mean pooling and a single linear head producing the scalar prediction. They
differ only in how the encoder uses graph structure. Each baseline uses a single
aggregation layer, matching SAIL's single quadratic operator. The input
dimension is $d_{\mathrm{in}}=8$ for feature-only variants and $d_{\mathrm{in}}=10$ when node coordinates are concatenated; the hidden width $d_{\mathrm{hid}}$ (and, for attention, the number of heads) is selected per model in Stage~1.
 
\begin{itemize}
\item \textbf{SAIL (dense).} A linear feature map
$f_{\mathrm{map}}:\mathbb{R}^{d_{\mathrm{in}}}\!\to\mathbb{R}^{h}$ followed by the
Moran-style local index under a dense affinity built from full pairwise
coordinate distances (no edge list, no self-loops, learnable temperature $\tau$,
mean global reduction), then a linear head $\mathrm{Linear}(h\!\to\!1)$.
\item \textbf{GCN.} A single $\mathrm{GCNConv}(d_{\mathrm{in}}\!\to\!d_{\mathrm{hid}})$ with ReLU,
global mean pooling, and $\mathrm{Linear}(d_{\mathrm{hid}}\!\to\!1)$, run on the fully connected
distance-weighted graph, evaluated with and without node coordinates.
\item \textbf{GAT.} A single attention layer $(d_{\mathrm{in}}\!\to\!d_{\mathrm{hid}})$ with ReLU,
global mean pooling, and $\mathrm{Linear}(d_{\mathrm{hid}}\!\to\!1)$. Run on the
fully connected graph, evaluated with and without coordinates.
\item \textbf{MLP (control)} Single-hidden-layer MLP
$(d_{\mathrm{in}}\!\to\!d_{\mathrm{hid}})$ with ReLU, global mean pooling, and
$\mathrm{Linear}(d_{\mathrm{hid}}\!\to\!1)$.
\end{itemize}
 
\subsection{Two-Stage Selection and Evaluation}
\label{sec:supp_synth_stages}
 
\paragraph{Stage 1: hyperparameter sweep} For each model we grid-search the
hidden width over $\{8,16,32,64\}$ and, for attention models, the number of
heads over $\{1,2,4\}$, on a single sweep seed. Each configuration is trained to
convergence and its validation Pearson correlation recorded. The configuration
with the highest validation Pearson is selected; ties within $10^{-2}$ are
broken toward the smaller model.
 
\paragraph{Stage 2: seeded evaluation} Each model is then retrained at its
selected configuration across $1000$ independent seeds, with datasets
regenerated per seed, and test metrics (Pearson, MSE) recorded. The
evaluation seeds are drawn disjoint from the sweep seed. The resulting distributions define the box plots of Figure~\ref{baseline_regression}, with per-model parameter counts and mean $\pm$ standard deviation of test Pearson and MSE reported in Supplementary Table~\ref{tab:supp_regression_results}; significance is assessed with a paired Wilcoxon signed-rank test with false-discovery-rate correction across models.

\section{Additional details on unsupervised discovery of Spatially Coherent Structures in spatial omics}
\label{sec:supp_4i}

We apply SAIL in a fully unsupervised setting to the 4i multiplexed imaging
dataset where subcellular compartment labels are used only for post-hoc evaluation and are never exposed to the model. A $k$-nearest-neighbour spatial graph is built directly from measurement-location coordinates with $k_{\mathrm{NN}}=32$ neighbours, and neighbour distances are
log-transformed before being passed to the affinity module. SAIL is trained to jointly discover $20$ modes, using a Geary-style local index, a Stiefel-constrained linear projection, no learned feature affinity, and $\lambda_{\mathrm{corr}}=\lambda_{\mathrm{sparse}}=1$. For each learned mode, LISA cluster maps are computed (Section~\ref{hotspot_discovery}), and the
High--High and Low--Low hotspot masks are compared against each of the $10$ expert-annotated compartments via Dice overlap. SAIL attains the highest Dice on $7$ of $10$ compartments; across compartments it outperforms Univariate ($p=0.003$), Mean ($p=0.003$), MULTISPATI-PCA ($p=0.014$) and Hotspot ($p=0.010$; one-sided paired Wilcoxon signed-rank tests). For every compartment, we report the single best-matching mode and cluster polarity
(Supplementary Table~\ref{tab:supp_4i_dice}), and we additionally benchmark against classical univariate and mean-aggregate multivariate Moran's $I$ baselines, MULTISPATI-PCA and Hotspot (Supplementary Table~\ref{tab:supp_4i_full_comparison}). The projection weights and LISA cluster maps of all $20$ learned modes are shown in Supplementary Figure~\ref{fig:all_sail_modes}, and the expression of all $43$ proteins together with the ground-truth compartment masks is shown in Supplementary Figure~\ref{fig:all_4i_expression}.

\begin{table}[!ht]
\centering
\caption{Best-matching unsupervised SAIL Mode per annotated subcellular compartment on the 4i dataset. For each compartment, we report the mode whose LISA hotspot mask (High--High or Low--Low) attains the highest Dice overlap with the expert annotation.}
\label{tab:supp_4i_dice}
\begin{tabular}{lccc}
\toprule
Compartment & Mode & LISA cluster & Dice \\
\midrule
Nucleolus            & $8$ & Low--Low & $0.839$ \\
Cell periphery 2     & $1$ & Low--Low & $0.587$ \\
Endosomes/Golgi 1    & $3$ & High--High & $0.577$ \\
Cell periphery 1     & $4$ & Low--Low & $0.533$ \\
Nucleus              & $5$ & High--High & $0.529$ \\
ER/mitochondria 1    & $14$ & High--High & $0.488$ \\
ER/mitochondria 2    & $9$ & Low--Low & $0.347$ \\
Cell periphery 3     & $17$ & High--High & $0.316$ \\
Nuclear envelope     & $16$ & High--High & $0.126$ \\
Endosomes/Golgi 2    & $11$ & High--High & $0.077$ \\
\bottomrule
\end{tabular}
\end{table}

\begin{table}[!ht]
\centering
\newcommand{\uv}[2]{\begin{tabular}[c]{@{}c@{}}#1\\[-2pt] {\footnotesize (#2)}\end{tabular}}
\caption{Full per-compartment Dice comparison on the 4i dataset, extending the
four-compartment summary of the main text to all $10$ annotated compartments.
\emph{SAIL} is the best-matching unsupervised mode (Supplementary Table~\ref{tab:supp_4i_dice});
\emph{Univariate} (Univ.) is classical Moran's~$I$ with LISA computed on the single
best-performing individual protein (named in parentheses) among the $20$ proteins with the highest global Moran's~$I$; \emph{Mean} is the traditional multivariate extension of Moran's~$I$, i.e.\ the mean of the univariate Moran's~$I$ statistics across all $43$ proteins; \emph{MULTISPATI} is
MULTISPATI-PCA~\cite{S:dray2008spatial} with local Moran's~$I$ LISA computed on the axes and \emph{Hotspot} is Hotspot~\cite{S:detomaso2021hotspot} with local Moran's~$I$ LISA computed on the scores of its protein modules, all using the same $k_{\mathrm{NN}}=32$ log-distance kNN graph. For each compartment we report the Dice overlap between the derived High--High/Low--Low hotspot mask and the expert annotation.}
\label{tab:supp_4i_full_comparison}
\setlength{\tabcolsep}{3pt}
\footnotesize
\begin{tabular}{lccccc}
\toprule
Compartment & SAIL & Univ. & Mean & MULTISPATI & Hotspot \\
\midrule
Nucleolus & $\mathbf{0.84}$ & \uv{$0.75$}{DAPI} & $0.50$ & $0.79$ & $0.71$ \\
Cell periphery 2 & $\mathbf{0.59}$ & \uv{$0.49$}{CRT} & $0.50$ & $0.49$ & $0.50$ \\
Endosomes/Golgi 1 & $\mathbf{0.58}$ & \uv{$0.38$}{SARA} & $0.34$ & $0.38$ & $0.56$ \\
Cell periphery 1 & $\mathbf{0.53}$ & \uv{$0.51$}{CRT} & $0.51$ & $0.50$ & $0.50$ \\
Nucleus & $\mathbf{0.53}$ & \uv{$0.47$}{PCNA} & $0.41$ & $0.40$ & $0.46$ \\
ER/mitochondria 1 & $0.49$ & \uv{$0.45$}{ABCD3} & $0.24$ & $\mathbf{0.50}$ & $0.47$ \\
ER/mitochondria 2 & $0.35$ & \uv{$0.37$}{Actin} & $\mathbf{0.38}$ & $0.38$ & $0.37$ \\
Cell periphery 3 & $0.32$ & \uv{$0.31$}{CTNNB1} & $0.10$ & $0.31$ & $\mathbf{0.33}$ \\
Nuclear envelope & $\mathbf{0.13}$ & \uv{$0.06$}{NUPS} & $0.01$ & $0.08$ & $0.07$ \\
Endosomes/Golgi 2 & $\mathbf{0.08}$ & \uv{$0.03$}{FBL} & $0.01$ & $0.04$ & $0.02$ \\
\bottomrule
\end{tabular}
\end{table}
\begin{figure*}[!t]
  \centering
   \includegraphics[width=5in]{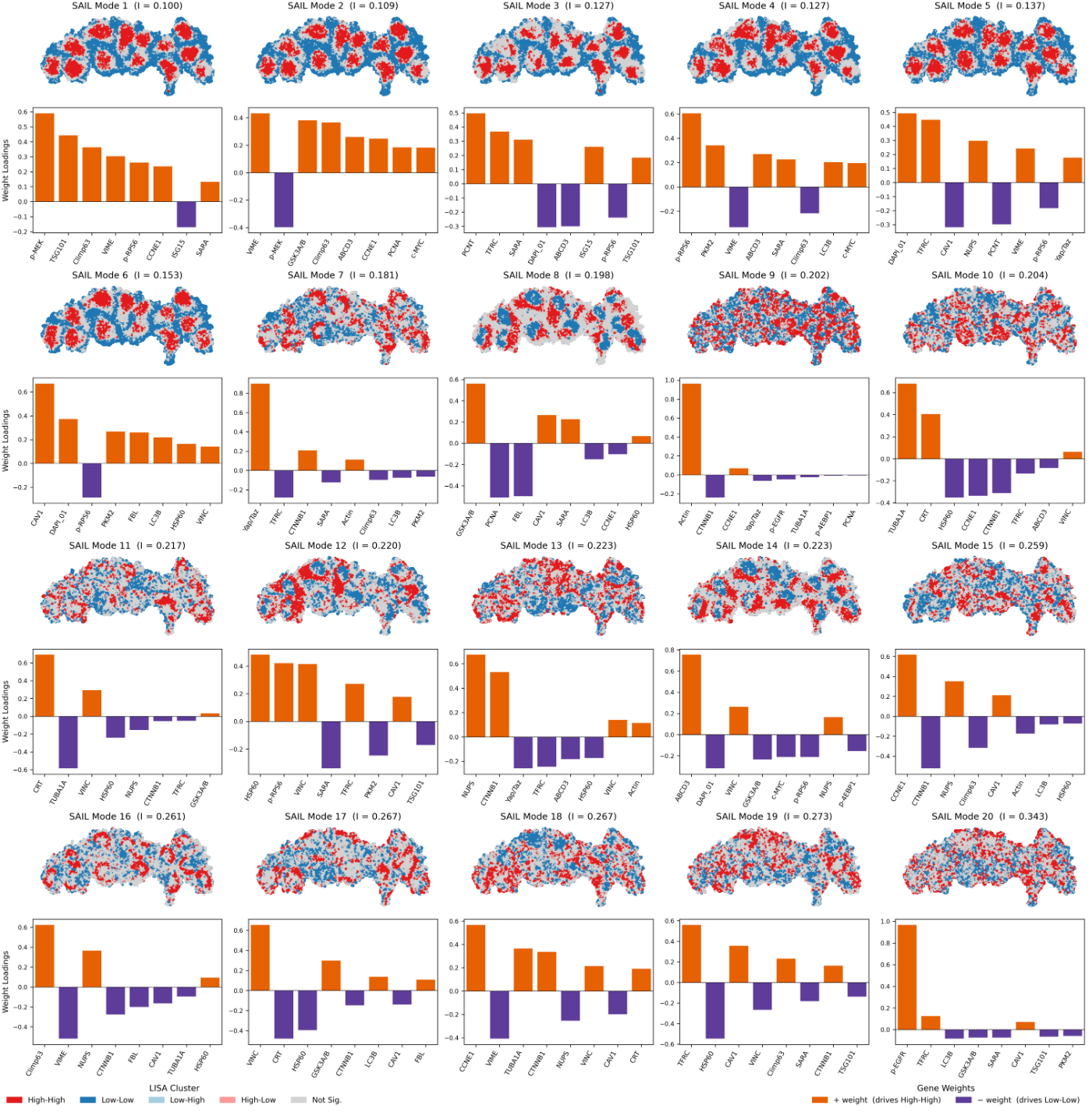}
   \caption{All $20$ unsupervised SAIL Modes learned on the 4i dataset. For each
   mode we show the learned $f_{\mathrm{map}}$ projection weights over the $43$
   protein markers (bar chart, as in Figure~\ref{4i_figure}), alongside the
   corresponding High--High and Low--Low LISA cluster maps overlaid on the
   tissue. Together these define, for every mode, both the molecular program it
   emphasises and the spatial regions where that program is coherently high
   (High--High) or low (Low--Low). The best-matching mode per annotated
   compartment is quantified in Supplementary Table~\ref{tab:supp_4i_dice}.}
   \label{fig:all_sail_modes}
\end{figure*}

\begin{figure*}[!t]
  \centering
   \includegraphics[width=6in]{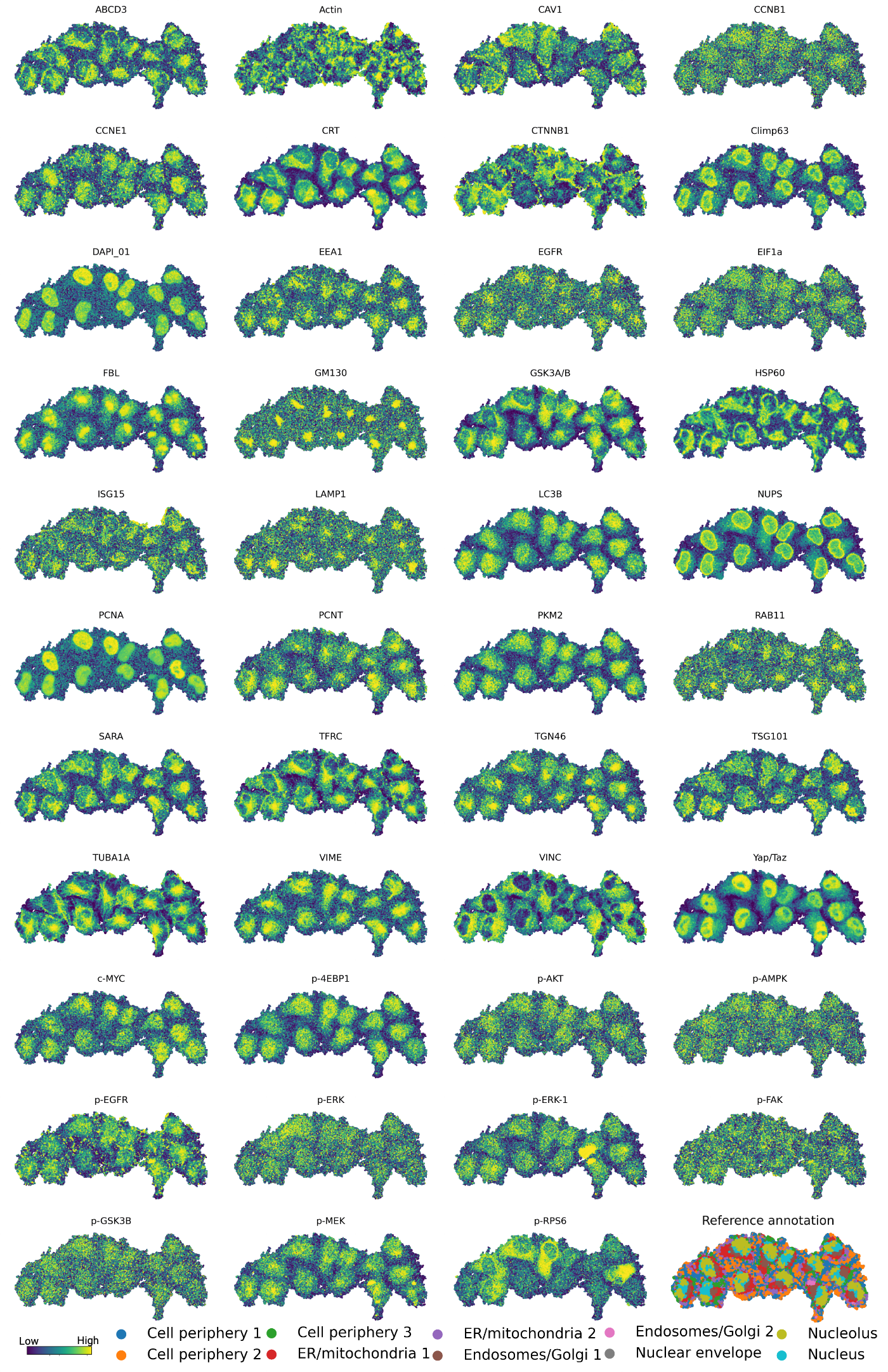}
      \caption{All $43$ proteins in the 4i dataset. For each protein we show its localised expression, to help understand its expected spatial distribution and how it contributes to SAIL Modes that overlap with known subcellular compartments. We also show the ground-truth annotation masks for all compartments.}
   \label{fig:all_4i_expression}
\end{figure*}

\section{Additional details on Discovering spatial coherence biomarkers of treatment response}
\label{sec:sail_niches}

We next test whether unsupervised SAIL niches are predictive of clinical
response to PD-1/PD-L1 blockade in a cutaneous squamous cell carcinoma (cSCC)
CosMx cohort, comprising baseline (pre-treatment) biopsies from $11$ responder and $10$ non-responder patients. Response is treated as an
unpaired, subject-level label, so association is assessed with a two-sided Mann--Whitney $U$ test and the corresponding ROC AUC per mode. Cells with fewer than $100$ total counts and genes detected in fewer than $10$ cells are removed prior to analysis, and any tissue core with fewer than $50$ surviving cells is dropped. Each patient is represented by a single graph in which every retained core forms an internally connected
$k$-nearest-neighbour block ($k_{\mathrm{NN}}=32$) with no edges between cores.

SAIL is trained fully unsupervised across all patients jointly to learn $20$ spatial modes with: a Geary-style local index, a Stiefel-constrained linear projection, $\lambda_{\mathrm{corr}}=\lambda_{\mathrm{sparse}}=1$, a learnable spatial temperature, and no learned feature affinity. The resulting $20$ modes are scored per subject (Supplementary Table~\ref{tab:sail_pd1_response}) and their defining gene weights are shown in Supplementary Figure~\ref{fig:all_sail_modes_pdl1}.

\begin{table}[htbp]
\centering
\caption{SAIL niche discrimination of PD-1/PD-L1 blockade response (see Supplementary Section~\ref{sec:sail_niches}). Each row is one SAIL Mode, scored at the subject level across 11 responder and 10 non-responder patients and ranked by discriminative AUC. \emph{Med.\ $R$} and \emph{Med.\ $NR$} are the median per-subject global Geary-style SAIL scores among responder ($R$) and non-responder ($NR$) patients, respectively. \emph{AUC} is the area under the ROC curve for separating the two groups by SAIL score; $U$ and $p$ are the Mann--Whitney--Wilcoxon rank-sum statistic and its two-sided $p$-value.}
\label{tab:sail_pd1_response}
\small
\setlength{\tabcolsep}{4pt}
\begin{tabular}{rrrrrr}
\toprule
Mode & Med.\ $R$ & Med.\ $NR$ & AUC & $U$ & $p$ \\
\midrule
18 & 0.692 & 0.556 & 0.864 & 95 & 0.005 \\
11 & 0.589 & 0.554 & 0.809 & 89 & 0.018 \\
4 & 0.588 & 0.513 & 0.791 & 87 & 0.027 \\
8 & 0.634 & 0.541 & 0.782 & 86 & 0.032 \\
14 & 0.638 & 0.549 & 0.773 & 85 & 0.038 \\
7 & 0.611 & 0.519 & 0.764 & 84 & 0.045 \\
20 & 0.683 & 0.614 & 0.764 & 84 & 0.045 \\
9 & 0.635 & 0.554 & 0.755 & 83 & 0.053 \\
16 & 0.673 & 0.563 & 0.745 & 82 & 0.062 \\
19 & 0.695 & 0.577 & 0.727 & 80 & 0.084 \\
3 & 0.582 & 0.505 & 0.700 & 77 & 0.130 \\
10 & 0.641 & 0.537 & 0.691 & 76 & 0.149 \\
13 & 0.608 & 0.542 & 0.682 & 75 & 0.170 \\
15 & 0.609 & 0.545 & 0.673 & 74 & 0.193 \\
6 & 0.596 & 0.534 & 0.655 & 72 & 0.245 \\
1 & 0.485 & 0.495 & 0.636 & 70 & 0.307 \\
17 & 0.625 & 0.592 & 0.627 & 69 & 0.342 \\
12 & 0.612 & 0.542 & 0.618 & 68 & 0.379 \\
2 & 0.563 & 0.502 & 0.600 & 66 & 0.460 \\
5 & 0.553 & 0.540 & 0.591 & 65 & 0.504 \\
\bottomrule
\end{tabular}
\end{table}

\begin{figure*}[!t]
  \centering
   \includegraphics[width=7in]{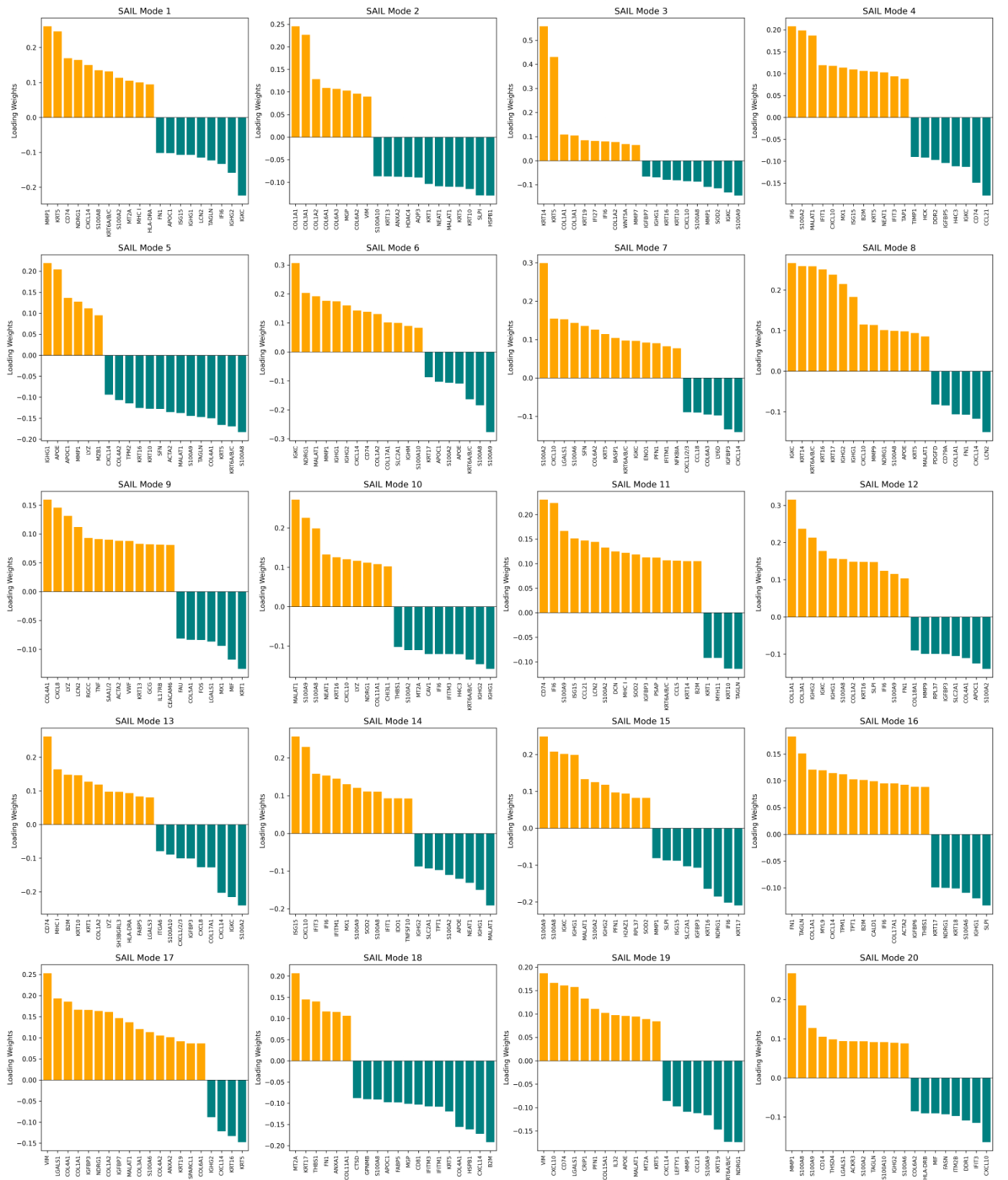}
   \caption{Gene weights defining all $20$ unsupervised SAIL Modes learned on the cSCC PD-1/PD-L1 CosMx cohort. For each mode, bars show the top genes by absolute loading in the learned $f_{\mathrm{map}}$ projection, with positively and negatively weighted genes coloured separately; these weights specify the gene combination whose spatial Geary-style dissimilarity each mode minimises.}
   \label{fig:all_sail_modes_pdl1}
\end{figure*}

\section{Additional details on Unsupervised SAC modes for prediction of patient survival}
\label{sec:supp_prognosis}
Implementation details for the prognostic analyses of Section~\ref{sec:prognosis_sail}. In
all four cohorts, SAIL is applied in the unsupervised mode-learning configuration. Spatial modes are learned by minimising a Geary-style objective ($K=20$ modes, $k_{\mathrm{NN}}=32$, $\lambda_{\mathrm{corr}}=\lambda_{\mathrm{sparse}}=1$, Stiefel-constrained linear projection, no learned feature affinity), unless stated otherwise. For each patient, the resulting global SAIL score(s) are used to evaluate performance.

\subsection{ORION-CRC}
In the ORION-CRC cohort, $k_{\mathrm{NN}}=32$-nearest-neighbour cell graphs are used, with node features given by $17$ marker intensities after excluding the autofluorescence (AF1) and ARG550 channels. SAIL learns $20$ modes jointly across all slides ($K=20$ modes, $k_{\mathrm{NN}}=32$, $\lambda_{\mathrm{corr}}=1$, $\lambda_{\mathrm{sparse}}=0.1$, Stiefel-constrained linear projection, no learned feature affinity). We reduce the sparsity penalty from the default $\lambda_{\mathrm{sparse}}=1$ because, on this cohort, the default produced modes that each loaded on a single marker, so that the global SAIL score reduced to a univariate SAC statistic. Supplementary Table~\ref{tab:supp_orion_modes} reports the survival association of every mode. Survival associations of univariate Geary's $C$ on each individual marker, the mean-aggregate multivariate baseline, tumour grade and the immune groups (IFM1--4) of the original ORION study are reported in Supplementary Table~\ref{tab:orion_baselines}.

\begin{table}[!t]
\centering
\caption{Survival association (overall survival) of all $20$ unsupervised SAIL Modes in ORION-CRC. Patients are split at the median of each mode's global SAIL score. C-Index, log-rank $p$-value and $\Delta$RMST (restricted mean survival time of the low- minus the high-score group, in days) is reported.}
\label{tab:supp_orion_modes}
\begin{tabular}{rccc}
\toprule
Mode & C-Index & $p$ & $\Delta$RMST \\
\midrule
1 & 0.68 & $0.403$ & 213 \\
2 & 0.66 & $0.403$ & 213 \\
3 & 0.52 & $0.300$ & 288 \\
4 & 0.72 & $0.040$ & 502 \\
5 & 0.68 & $0.048$ & 482 \\
6 & 0.51 & $0.502$ & -153 \\
7 & 0.57 & $0.527$ & 196 \\
8 & 0.64 & $0.360$ & 201 \\
9 & 0.55 & $0.442$ & 342 \\
10 & 0.57 & $0.193$ & 417 \\
11 & 0.60 & $0.244$ & 349 \\
12 & 0.61 & $0.046$ & 602 \\
13 & 0.63 & $0.261$ & 323 \\
14 & 0.57 & $0.447$ & 344 \\
15 & 0.65 & $0.007$ & 660 \\
16 & 0.58 & $0.275$ & 321 \\
17 & 0.74 & $0.005$ & 782 \\
18 & 0.59 & $0.331$ & 345 \\
19 & 0.60 & $0.138$ & 407 \\
20 & 0.53 & $0.458$ & -219 \\
\bottomrule
\end{tabular}
\end{table}

\subsection{TCGA-BRCA and TCGA-CRC}
In the TCGA analysis, patches form graph nodes connected via $k_{\mathrm{NN}}=32$ nearest-neighbour edges. For both cohorts, SAIL learns $20$ modes ($\lambda_{\mathrm{corr}}=\lambda_{\mathrm{sparse}}=1$, Stiefel-constrained linear projection, no learned feature affinity). Supplementary Table~\ref{tab:supp_tcga_modes} reports the survival association of every mode.

\begin{table*}[!t]
\centering
\small
\caption{Survival association of all $20$ unsupervised SAIL Modes in TCGA-BRCA (disease-specific survival) and TCGA-CRC (overall survival). Patients are split at the median of each mode's global SAIL score. C-Index is reported as $\max(c,1-c)$; $p$ is the log-rank $p$-value; HR (95\% CI) compares the high- with the low-score group; $\Delta$RMST is the restricted mean survival time of the low- minus the high-score group, in days.}
\label{tab:supp_tcga_modes}
\setlength{\tabcolsep}{3pt}
\begin{tabular}{r cccc cccc}
\toprule
& \multicolumn{4}{c}{TCGA-BRCA} & \multicolumn{4}{c}{TCGA-CRC} \\
\cmidrule(lr){2-5}\cmidrule(lr){6-9}
Mode & C-Index & $p$ & HR (95\% CI) & $\Delta$RMST & C-Index & $p$ & HR (95\% CI) & $\Delta$RMST \\
\midrule
1 & 0.64 & $0.005$ & 2.39 (1.28--4.46) & 386 & 0.59 & $0.117$ & 1.45 (0.91--2.30) & 102 \\
2 & 0.55 & $0.450$ & 1.26 (0.69--2.28) & 195 & 0.60 & $0.057$ & 1.56 (0.98--2.48) & 653 \\
3 & 0.58 & $0.288$ & 1.37 (0.76--2.47) & 159 & 0.54 & $0.576$ & 1.14 (0.72--1.80) & 107 \\
4 & 0.57 & $0.053$ & 1.80 (0.98--3.28) & 302 & 0.56 & $0.059$ & 1.55 (0.98--2.47) & 365 \\
5 & 0.54 & $0.690$ & 1.13 (0.62--2.07) & 104 & 0.55 & $0.020$ & 1.74 (1.09--2.79) & 689 \\
6 & 0.57 & $0.105$ & 1.63 (0.90--2.97) & 240 & 0.57 & $0.185$ & 1.37 (0.86--2.17) & 329 \\
7 & 0.56 & $0.051$ & 1.81 (0.99--3.30) & 270 & 0.53 & $0.510$ & 1.17 (0.74--1.85) & 118 \\
8 & 0.51 & $0.264$ & 1.40 (0.78--2.51) & 223 & 0.52 & $0.889$ & 1.03 (0.65--1.63) & 0 \\
9 & 0.53 & $0.271$ & 1.39 (0.77--2.53) & 254 & 0.55 & $0.081$ & 1.51 (0.95--2.42) & 548 \\
10 & 0.51 & $0.492$ & 0.81 (0.45--1.48) & -13 & 0.53 & $0.258$ & 1.30 (0.82--2.05) & 213 \\
11 & 0.50 & $0.752$ & 1.10 (0.61--1.97) & 88 & 0.54 & $0.206$ & 1.35 (0.85--2.13) & 432 \\
12 & 0.63 & $0.010$ & 2.18 (1.19--4.00) & 323 & 0.56 & $0.143$ & 1.41 (0.89--2.24) & 586 \\
13 & 0.52 & $0.641$ & 1.15 (0.64--2.08) & 67 & 0.52 & $0.646$ & 1.11 (0.71--1.75) & 350 \\
14 & 0.51 & $0.782$ & 1.09 (0.61--1.95) & 40 & 0.52 & $0.715$ & 1.09 (0.69--1.73) & 156 \\
15 & 0.55 & $0.580$ & 0.85 (0.47--1.53) & 8 & 0.56 & $0.288$ & 1.28 (0.81--2.03) & 201 \\
16 & 0.52 & $0.605$ & 1.17 (0.65--2.12) & 53 & 0.54 & $0.137$ & 1.42 (0.89--2.24) & 234 \\
17 & 0.59 & $0.486$ & 1.23 (0.69--2.21) & 99 & 0.59 & $2.9\times10^{-4}$ & 2.37 (1.47--3.84) & 791 \\
18 & 0.56 & $0.022$ & 2.00 (1.09--3.67) & 262 & 0.57 & $0.660$ & 1.11 (0.70--1.76) & -137 \\
19 & 0.52 & $0.420$ & 1.28 (0.70--2.31) & 126 & 0.50 & $0.265$ & 0.77 (0.48--1.22) & -234 \\
20 & 0.58 & $0.600$ & 1.17 (0.65--2.10) & 43 & 0.50 & $0.907$ & 0.97 (0.61--1.54) & 93 \\
\bottomrule
\end{tabular}
\end{table*}

\subsection{BRAF-Mutant Metastatic Colorectal Cancer (BRAF-CRC)}
We further evaluate SAIL on a cohort of patients with BRAF-mutant metastatic colorectal cancer~\cite{S:trahearn2025computational}. Multiplexed tissue images were segmented and each cell assigned one of eight coarse cell-type labels: cancer epithelial (cep), normal epithelial (nep), fibroblast (fib), lymphocyte (lym), neutrophil (neu), macrophage (mac), endothelial (end), and myocyte (myo) cells. Cell type is one-hot encoded as the node feature vector, with 2D cell centroids as node coordinates. Each tissue image is converted into a $k_{\mathrm{NN}}$-nearest-neighbour spatial graph ($k_{\mathrm{NN}}=32$); where a patient contributes multiple images, per-image SAIL scores are averaged to obtain a single per-patient score. Overall survival is measured from diagnosis of metastatic disease, with death as the event and patients alive at last follow-up treated as censored. SAIL learns $30$ modes jointly across all patient graphs (batch size $8$) with $\lambda_{\mathrm{corr}}=\lambda_{\mathrm{sparse}}=0$.The best-performing SAIL Mode contrasts stromal cells (myocytes, fibroblasts and endothelial cells) with immune cells (lymphocytes and neutrophils) and stratifies patients by overall survival (log-rank $p=0.002$, HR $=1.83$; Supplementary Figure~\ref{fig:braf_analysis}). This exceeds univariate Geary's $C$ on every individual cell type and the mean-aggregate baseline (Supplementary Table~\ref{tab:braf_baselines}).

Zoomed-in regions of the local SAIL scores $I^{\mathrm{local}}$ and LISA clusters are shown in Supplementary Figure~\ref{TCGA_ZOOMED_examples}, for TCGA-BRCA and TCGA-CRC patch graphs (Supplementary Figure~\ref{TCGA_ZOOMED}) and for BRAF-CRC and ORION-CRC cell graphs (Supplementary Figure~\ref{CELL_ZOOMED}).

\begin{table*}[!t]
\centering
\small
\caption{Baseline survival associations in the ORION cohort using inverse-distance spatial weighting to compute univariate global spatial autocorrelation using Geary's $C$. For the multivariate case, we report the mean of the univariate spatial autocorrelations across proteins (Mean aggregate). Results are also compared with immune groups (IFM1--4) defined in the original ORION study.}
\begin{tabular}{lcccc}
\toprule
Covariate & C-Index & Log-rank $p$-value & HR (high vs low) & $\Delta$RMST (low--high, days) \\
\midrule
Mean aggregate & 0.68 & 0.04 & 3.15 & 362 \\
CD68 & 0.61 & 0.05 & 3.00 & 643 \\
CD4 & 0.69 & 0.13 & 2.28 & 385 \\
Grade & 0.61 & 0.13 & 2.20 & 393 \\
E-cadherin & 0.59 & 0.14 & 2.22 & 440 \\
CD3e & 0.63 & 0.15 & 2.18 & 460 \\
SMA & 0.63 & 0.21 & 2.00 & 428 \\
IFM4 & 0.58 & 0.24 & 0.51 & -296 \\
CD45RO & 0.61 & 0.25 & 1.88 & 386 \\
CD45 & 0.61 & 0.25 & 1.88 & 339 \\
IFM3 & 0.57 & 0.29 & 1.76 & 265 \\
Hoechst & 0.60 & 0.32 & 1.70 & 245 \\
CD8a & 0.60 & 0.37 & 1.62 & 197 \\
CD20 & 0.69 & 0.38 & 1.60 & 225 \\
Pan-CK & 0.67 & 0.39 & 1.58 & 170 \\
IFM2 & 0.56 & 0.40 & 0.58 & -209 \\
FOXP3 & 0.64 & 0.44 & 1.52 & 190 \\
CD163 & 0.56 & 0.49 & 0.69 & -210 \\
IFM1 & 0.53 & 0.63 & 0.75 & -61 \\
PD-1 & 0.54 & 0.66 & 1.27 & 154 \\
Ki67 & 0.51 & 0.72 & 0.82 & -101 \\
CD31 & 0.58 & 0.84 & 1.12 & 82 \\
PD-L1 & 0.50 & 0.99 & 1.01 & 30 \\
\bottomrule
\end{tabular}
\label{tab:orion_baselines}
\end{table*}

\begin{figure*}[!t]
  \centering
   \includegraphics[width=6in]{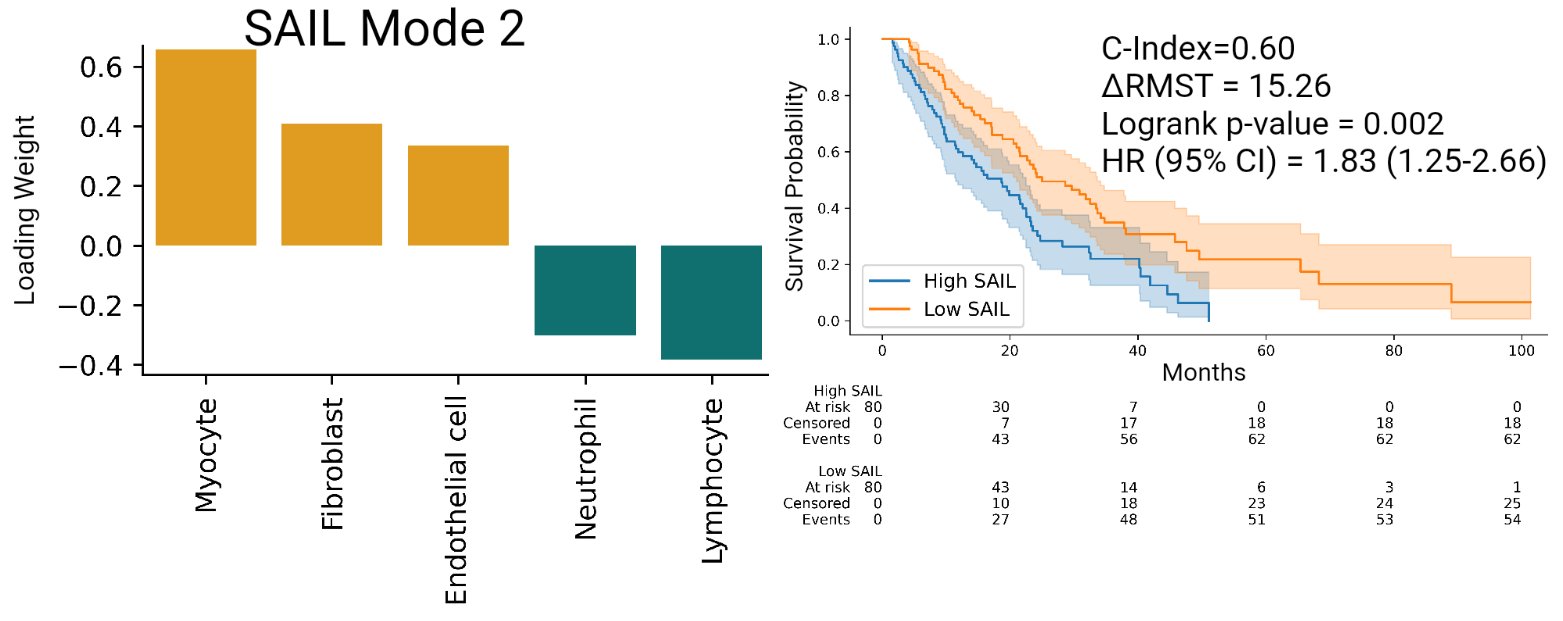}
   \caption{Unsupervised SAIL Modes link spatial coherence to clinical outcome. A SAIL Mode was identified in the BRAF-CRC cohort by minimising Geary's $C$ over one-hot encoded cell types (cancer epithelial cells, normal epithelial cells, fibroblasts, lymphocytes, neutrophils, macrophages, endothelial cells, and myocytes). The bar chart shows the learned $f_{\mathrm{map}}$ weights assigned to each cell type. Global SAIL scores derived from this mode stratify patients into distinct survival groups.}
   \label{fig:braf_analysis}
\end{figure*}

\begin{table*}[!t]
\centering
\small
\caption{Baseline survival associations in the BRAF-CRC cohort using inverse-distance spatial weighting to compute univariate global spatial autocorrelation using Geary's $C$ on each one-hot cell-type channel. The multivariate covariate (Mean aggregate) corresponds to the mean of the univariate cell-type spatial autocorrelations.}
\begin{tabular}{lcccc}
\toprule
Covariate & C-Index & Log-rank $p$-value & HR (high vs low) & $\Delta$RMST (low--high, months) \\
\midrule
Mean aggregate & 0.59 & 0.01 & 1.58 & 10.4 \\
Endothelial cell & 0.58 & 0.02 & 1.57 & 11.8 \\
Lymphocyte & 0.59 & 0.06 & 1.42 & 7.2 \\
Cancer epithelial cell & 0.57 & 0.06 & 1.42 & 9.2 \\
Fibroblast & 0.57 & 0.07 & 1.41 & 7.5 \\
Normal epithelial cell & 0.55 & 0.26 & 1.23 & 0.9 \\
Neutrophil & 0.56 & 0.27 & 1.23 & 3.7 \\
Myocyte & 0.54 & 0.45 & 0.87 & -2.8 \\
Macrophage & 0.53 & 0.80 & 0.95 & -1.1 \\
\bottomrule
\end{tabular}
\label{tab:braf_baselines}
\end{table*}

\begin{figure*}[!t]
\centering
\subfloat[]{\includegraphics[width=2.5in]{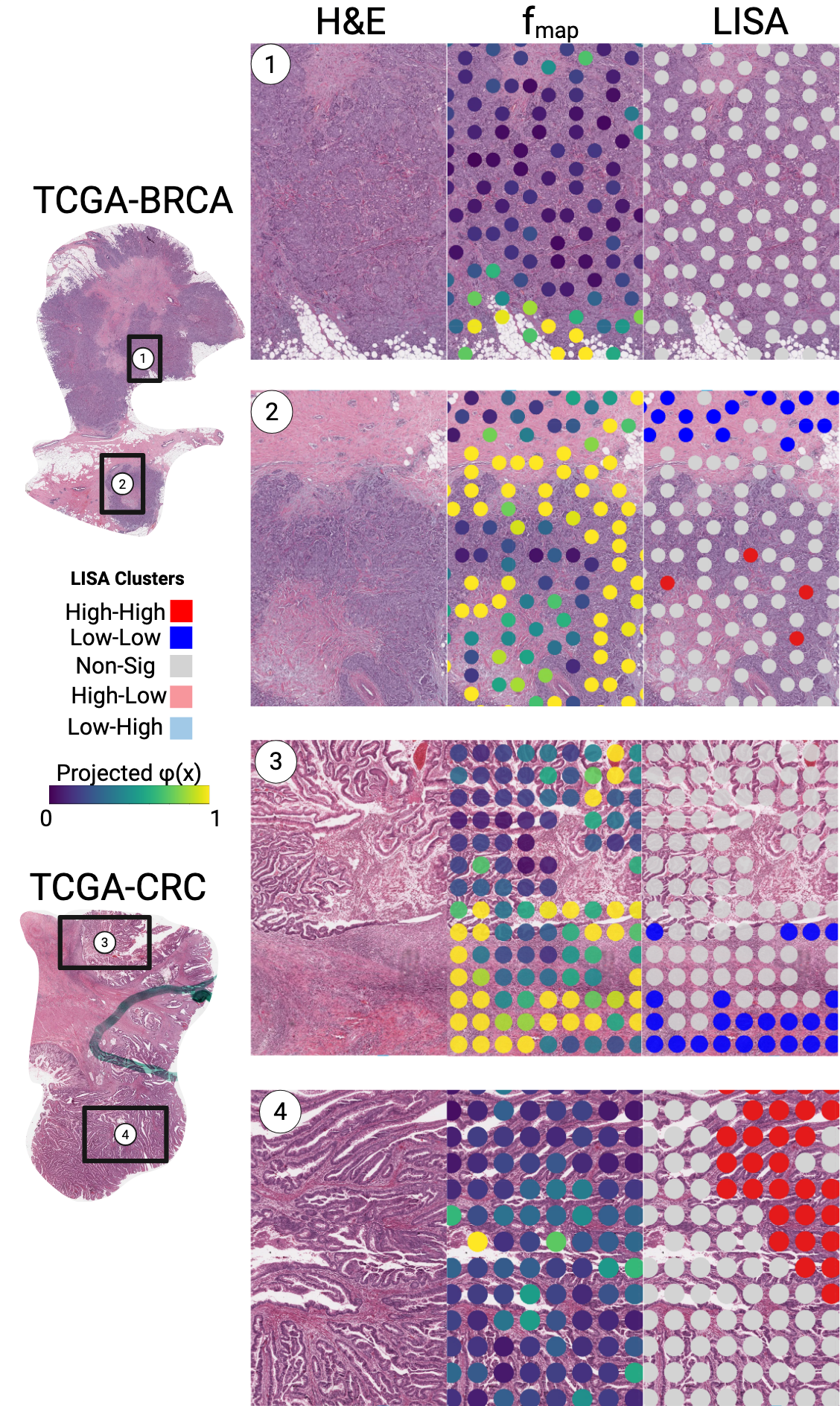}%
\label{TCGA_ZOOMED}}
\hfil
\subfloat[]{\includegraphics[width=2.5in]{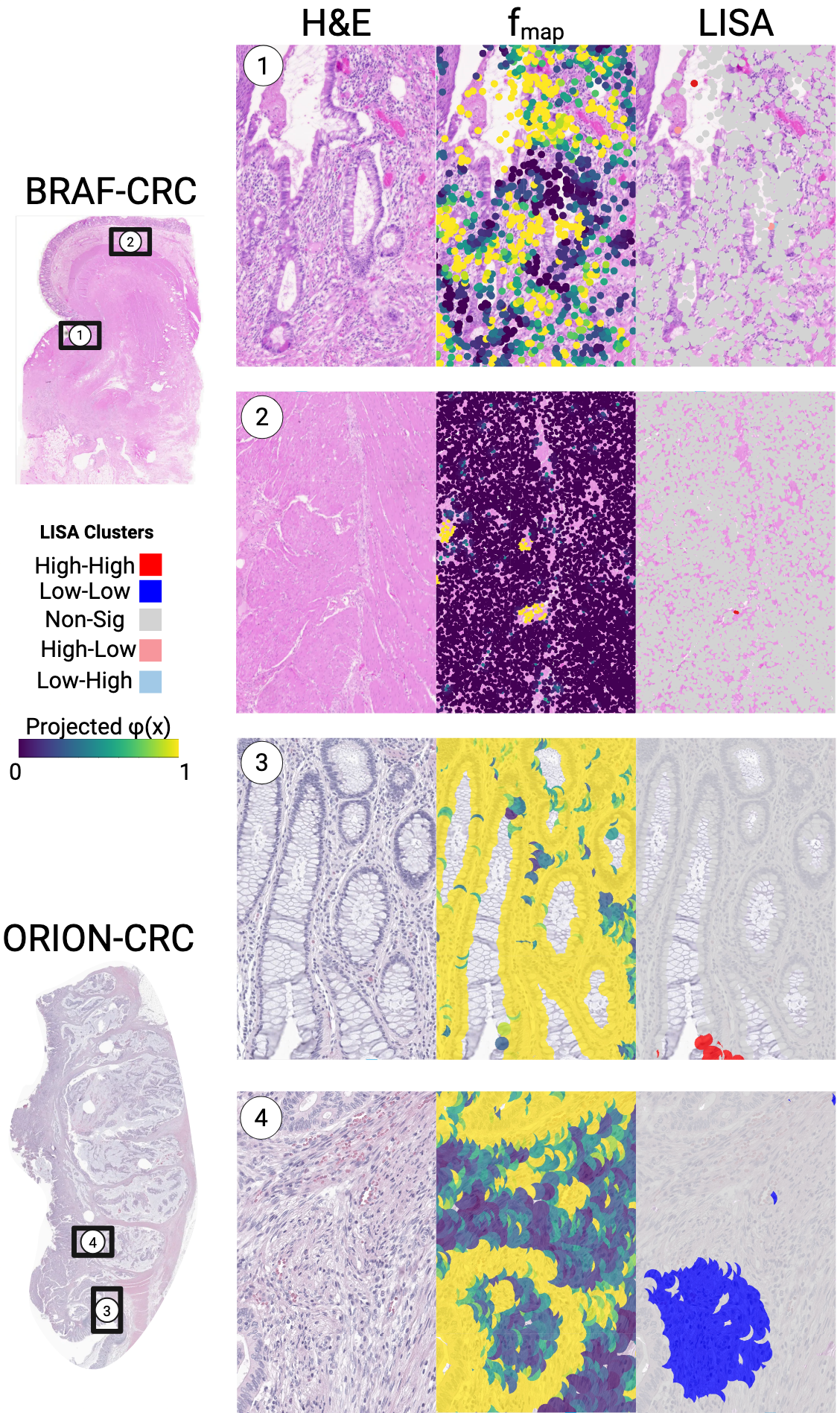}%
\label{CELL_ZOOMED}}
\caption{Zoomed-in regions of SAIL outputs. (a) Patch graphs for TCGA-BRCA and TCGA-CRC. We show two zoomed-in regions per patient, including the $I^{\mathrm{local}}$ score and LISA clusters for each graph node. (b) Cell graphs for BRAF-CRC and ORION-CRC. We show two zoomed-in regions per patient, including the $I^{\mathrm{local}}$ score and LISA clusters for each graph node.}
\label{TCGA_ZOOMED_examples}
\end{figure*}

\section{Additional details on Supervised classification of mesothelioma subtypes with SAIL}
\label{sec:supp_mesograph}
Implementation details for the mesothelioma subtyping analysis of
Section~\ref{sec:mesograph}.

\subsection{Cross-Validation Protocol}
All models are evaluated on a shared, fixed set of splits over the primary
St.~George's Hospital (SGH) cohort. Following the original MesoGraph protocol, the outer split is a leave-one-slide-out
cross-validation: each fold holds out all cores from a single slide as the
test set. Within each fold, the remaining cores are further split
$75\%$/$25\%$ into training and validation sets via a stratified split on
the core-level label.

\subsection{Model Architecture}
SAIL is used in a shared-backbone, dual-head configuration: a single SAIL operator (a linear feature map to $16$ projection dimensions, a Geary-style local index, mean global reduction, and a learnable softmax temperature over log-transformed kNN distances) is evaluated once per core, and its
$16$-dimensional global descriptor is read by two separate linear heads, an epithelioid (E) head and a non-epithelioid (NE) head, mirroring MesoGraph's
two-head design. In addition to the learnable spatial temperature, a query--key content term (linear query and key projections of width $16$ is added to the affinity logits, allowing neighbourhood weighting to depend on cell features as well as spatial distance.

\subsection{Training and Model Selection}
Following MesoGraph, both heads are trained jointly using the same pairwise
margin ranking loss (margin $0.8$): for every pair of cores in a batch with
differing labels, the E head is pushed to rank the epithelioid core higher, 
and the NE head is pushed to rank the non-epithelioid core higher. Models are
trained with Adam (learning rate $5\times10^{-4}$, weight decay $0.02$),
batch size $16$, for up to $300$ epochs per fold. The final core-level decision
score is the difference between the NE-head and E-head core outputs, $\hat y=(w_{\mathrm{NE}}-w_{\mathrm{E}})^\top I^{\mathrm{global}}+(b_{\mathrm{NE}}-b_{\mathrm{E}})$, so the effective contribution of SAIL Head $k$ is $w_{\mathrm{NE},k}-w_{\mathrm{E},k}$ (the per-head weights reported in Fig.~\ref{meso_results_figure}); non-epithelioid cores are treated as the positive class for AUC/AP, matching the evaluation convention of Section~\ref{sec:mesograph}.

\subsection{Statistical Comparison}
To test whether SAIL's improvement in AUC is statistically significant, we draw $10{,}000$ paired bootstrap resamples of the test cores. In each resample, the AUC of SAIL and of each comparison model is computed on the same resampled cores, giving a bootstrap distribution of the difference $\Delta\mathrm{AUC}=\mathrm{AUC}_{\mathrm{SAIL}}-\mathrm{AUC}_{\mathrm{comparison}}$. The $p$-value is the fraction of resamples in which $\Delta\mathrm{AUC}\leq0$.

\begin{figure*}[!t]
  \centering
   \includegraphics[width=\textwidth]{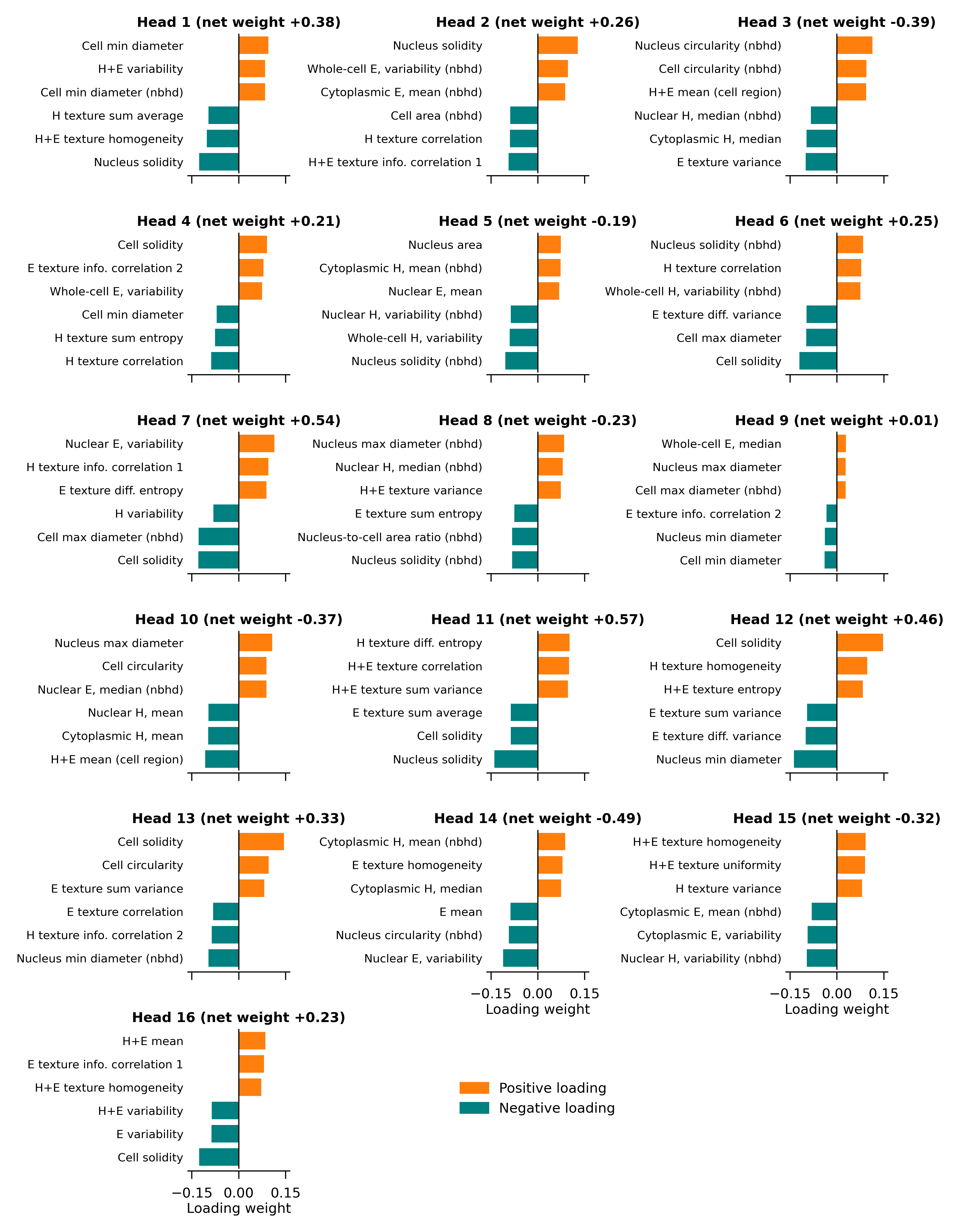}
   \caption{16 learned SAC descriptors (trained using interpretable features) on the mesothelioma dataset used in Section~\ref{sec:supp_mesograph}. For each descriptor we report the three features with the most positive and negative loadings. Here ``nbhd'' denotes a \qty{50}{\um} neighbourhood average.}
   \label{fig:meso_all_modes}
\end{figure*}

\section{Computational Complexity Analysis}
\label{sec:supp_scaling}

\subsection{Empirical Computational Complexity Analysis}
We measure the wall-clock time and peak GPU memory of SAIL as a function of: the number of nodes $n$, the input feature dimension $d$, the number of learned modes (projection dimension) $h$, and the neighbourhood size $k_{\mathrm{NN}}$. We further quantify the cost added by two optional components: the orthogonality (Stiefel) constraint on $f_{\mathrm{map}}$ and a learned feature affinity $f_{\mathrm{affinity}}$.

\subsection{Benchmark Design}
\label{sec:supp_scaling_design}

\paragraph{Synthetic data} Node coordinates are drawn uniformly in a square of side $\sqrt{n}$. Node features are a rank-$10$ mixture of smooth random spatial fields ($64$ random Fourier features with length-scale equal to one eighth of the square side), linearly mixed into $d$ dimensions, perturbed with Gaussian noise (standard deviation $0.5$), and standardised per feature. A $k$-nearest-neighbour graph is built with a $k$-d tree (excluding self-loops), and log-transformed neighbour distances are passed to the affinity module. Data are generated once per $(n,d)$ pair and shared across model variants and repeats.

\paragraph{Model variants} All runs use the unsupervised mode-learning configuration with the Moran-style local index (Eq.~\eqref{eq:moran_local}), mean global reduction, a learnable spatial temperature $\tau$, and decorrelation ($\lambda_{\mathrm{corr}}=8$) and sparsity ($\lambda_{\mathrm{sparse}}=5$) regularisation, optimised with Adam (learning rate $0.05$) on the full graph. To isolate the cost of each component we benchmark all 6 combinations of $f_{\mathrm{map}}$ and $f_{\mathrm{affinity}}$:
\begin{itemize}
    \item \textbf{linear} $f_{\mathrm{map}}$: an unconstrained, column-normalised projection $\mathbf{W}\in\mathbb{R}^{d\times h}$;
    \item \textbf{orthogonal} $f_{\mathrm{map}}$ (default): $\mathbf{W}$ constrained to the Stiefel manifold, parametrised as the first $h$ columns of the matrix exponential of a $d\times d$ skew-symmetric matrix;
\end{itemize}
with the feature affinity,
\begin{itemize}
    \item \textbf{none}: $f_{\mathrm{affinity}}=0$, so that attention depends on spatial distance only (default);
    \item \textbf{shared-map}: $f_{\mathrm{affinity}}(x_j,x_{j'})=\lVert f_{\mathrm{map}}(x_j)-f_{\mathrm{map}}(x_{j'})\rVert_2^2$, reusing the learned projection;
    \item \textbf{query--key}: $f_{\mathrm{affinity}}(x_j,x_{j'})=q(x_j)^\top k(x_{j'})/\sqrt{16}$, with learned $16$-dimensional linear query and key projections.
\end{itemize}

\paragraph{Sweeps} Starting from a reference configuration of $n=10^5$, $d=1{,}000$, $h=25$ and $k_{\mathrm{NN}}=16$, we vary one factor at a time with the others held at their reference values: $n\in\{10^3,\,3\times10^3,\,10^4,\,3\times10^4,\,10^5,\,3\times10^5,\,10^6,\,3\times10^6,\,10^7\}$, $d\in\{50,100,300,1{,}000,3{,}000,10{,}000\}$, $h\in\{1,5,10,25,50,100\}$ and $k_{\mathrm{NN}}\in\{4,8,16,32,64\}$. If a variant exceeds GPU memory, this is recorded and larger values in that sweep are skipped for that variant.

\subsection{Measurement Protocol}
\label{sec:supp_scaling_protocol}

A training step comprises one forward pass over the full graph, evaluation of the objective and both regularisers, a backward pass, and an Adam update; since each benchmark uses a single graph, one step corresponds to one epoch. After $3$ untimed warm-up steps (CUDA initialisation and kernel selection), we record $5$ repeats, each timing $10$ consecutive steps and reporting the mean time per step. The GPU is synchronised before and after every timed interval. Inference is timed as a single forward pass. Peak memory is the maximum memory allocated to tensors during the measurement, via \texttt{torch.cuda.max\_memory\_allocated}. We report means with $\pm1$ standard deviation over repeats. Empirical scaling exponents $b$ (cost $\propto x^{b}$) are least-squares slopes on log--log axes.

All benchmarks were run on a single NVIDIA A100-SXM4-80GB GPU in a DGX A100 node (AMD EPYC 7742 CPU, 2\,TB RAM), using PyTorch 2.5.1 with CUDA 12.4, in FP32 arithmetic (TF32 disabled).

\subsection{Results}
\label{sec:supp_scaling_results}

For the default configuration (Supplementary Figure~\ref{fig:supp_scaling_default}), time per training step is dominated by a fixed, $n$-independent cost of $33$--$38$\,ms up to $n\approx10^5$ and reaches $132$\,ms at $n=3\times10^6$; inference requires $3.9$\,ms at the reference size and $38$\,ms at $n=3\times10^6$. Peak memory grows linearly with $n$ (exponent $0.99$), reaching $27$\,GB at $n=3\times10^6$; $n=10^7$ exceeds the $80$\,GB GPU. The fixed cost is the matrix exponential of the orthogonal map: time per step grows approximately cubically with $d$ (exponent $2.91$; $37$\,ms at $d=1{,}000$ and $30$\,s at $d=10{,}000$), whereas the linear map costs $6$\,ms at $d=1{,}000$ and $13$\,ms at $d=10{,}000$ (Supplementary Figure~\ref{fig:supp_scaling_components}\,a,\,d). The number of modes $h$ and neighbours $k$ have only a minor effect on runtime. Adding a learned affinity increases peak memory by approximately $14\times$ ($13.0$ versus $0.9$\,GB at the reference size; Supplementary Table~\ref{tab:supp_scaling_reference}), growing linearly with both $n$ and $k$, which limits these variants to $n=3\times10^5$ nodes on a single $80$\,GB GPU (Supplementary Figure~\ref{fig:supp_scaling_components}\,b,\,c,\,e,\,f). For reference, constructing the $k$-nearest-neighbour graph on the CPU scaled approximately linearly in $n$, taking $0.36$\,s at $n=10^5$ and $46$\,s at $n=10^7$, and fitting the default model for $100$ epochs at the reference size took $4.4$\,s.

Thus for the training step, the sparse top-$k_{\mathrm{NN}}$ model with a linear $f_{\mathrm{map}}$ requires $\mathcal{O}(ndh)$ time for the projection, $\mathcal{O}(nk_{\mathrm{NN}}h)$ for the affinity-weighted local index and $\mathcal{O}(nh^2)$ for the decorrelation penalty, with $\mathcal{O}(nd+nk_{\mathrm{NN}}h)$ memory, dominated by the input features and the gathered neighbour projections. The orthogonal constraint adds an $n$-independent $\mathcal{O}(d^3)$ term from the $d\times d$ matrix exponential and $\mathcal{O}(d^2)$ memory. Both learned affinities gather the raw features of every neighbour, adding $\mathcal{O}(nk_{\mathrm{NN}}d)$ memory and $\mathcal{O}(nk_{\mathrm{NN}}dr)$ time, where $r=16$ for the query--key affinity and $r=h$ for the shared-map affinity. All variants are therefore linear in $n$ in both time and memory; they differ in their dependence on $d$.

\begin{table}[!t]
\centering
\caption{Computational cost of each SAIL variant at the reference configuration ($n=10^5$, $d=1{,}000$, $h=25$, $k_{\mathrm{NN}}=16$) on one NVIDIA A100-80GB GPU; mean $\pm$ standard deviation over $5$ repeats. \emph{Max.\ $n$} is the largest graph size in the sweep that fitted in GPU memory.}
\label{tab:supp_scaling_reference}
\setlength{\tabcolsep}{3pt}
\begin{tabular}{llcccc}
\toprule
$f_{\mathrm{map}}$ & $f_{\mathrm{affinity}}$ & \shortstack{Train step\\(ms)} & \shortstack{Inference\\(ms)} & \shortstack{Peak mem.\\(GB)} & Max.\ $n$ \\
\midrule
Linear     & none       & $6.1 \pm 0.0$   & $4.0 \pm 0.1$  & $0.91$  & $3\times10^6$ \\
Orthogonal & none       & $37.7 \pm 0.6$  & $3.9 \pm 0.0$  & $0.92$  & $3\times10^6$ \\
Linear     & shared-map & $63.2 \pm 0.5$  & $59.2 \pm 0.8$ & $12.98$ & $3\times10^5$ \\
Orthogonal & shared-map & $129.6 \pm 3.8$ & $59.2 \pm 0.3$ & $13.00$ & $3\times10^5$ \\
Linear     & query--key & $62.7 \pm 0.3$  & $51.0 \pm 1.6$ & $13.02$ & $3\times10^5$ \\
Orthogonal & query--key & $94.8 \pm 1.0$  & $50.6 \pm 1.5$ & $13.03$ & $3\times10^5$ \\
\bottomrule
\end{tabular}
\end{table}

\begin{figure*}[!t]
  \centering
   \includegraphics[width=\textwidth]{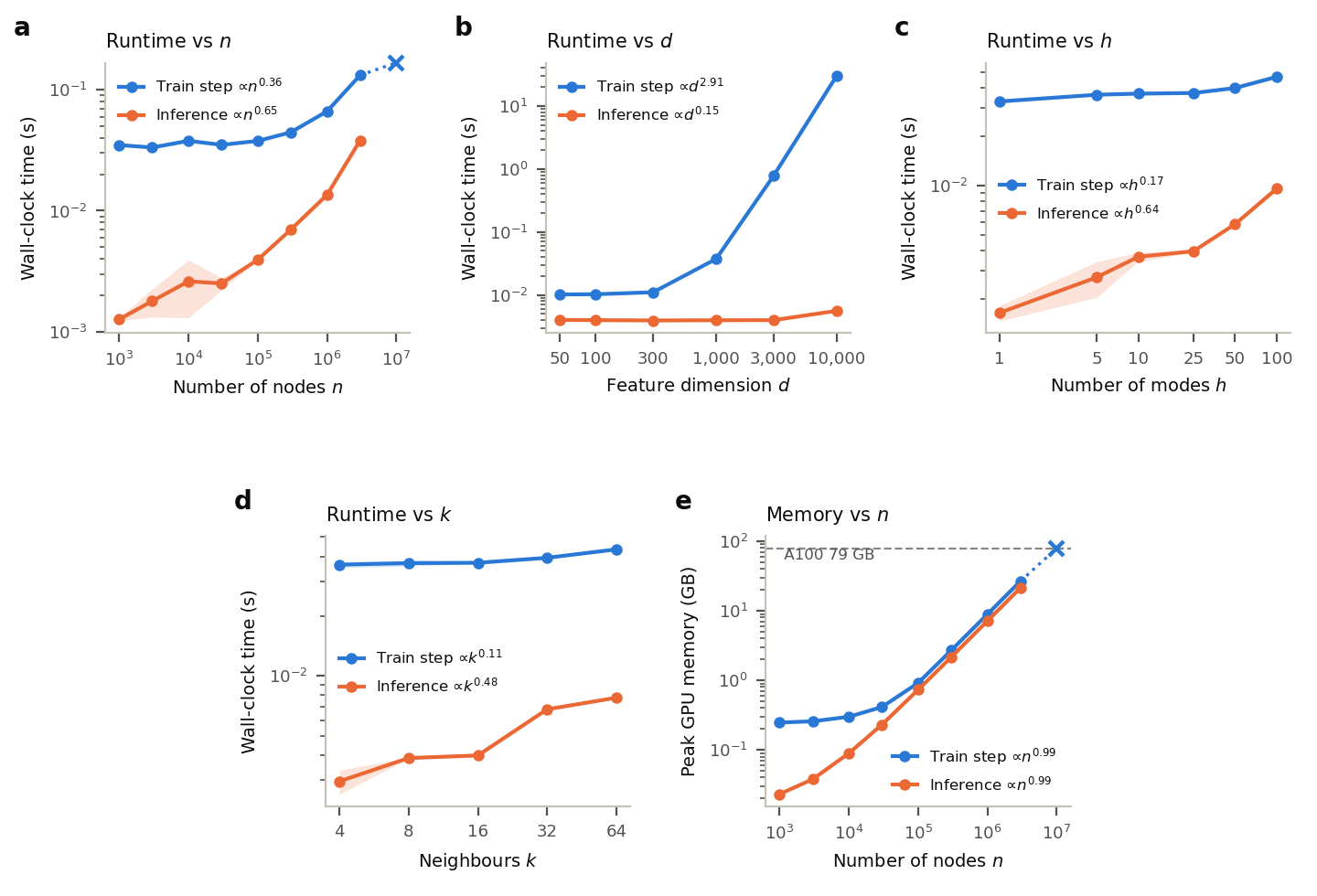}
   \caption{Computational scaling of SAIL in its default configuration (orthogonal $f_{\mathrm{map}}$, no learned feature affinity, Moran-style objective with decorrelation and sparsity regularisation) on synthetic spatial data (Supplementary Section~\ref{sec:supp_scaling_design}). Each panel varies one factor with the others held at $n=10^5$ nodes, $d=1{,}000$ features, $h=25$ modes and $k_{\mathrm{NN}}=16$ neighbours. \emph{Train step}: one full-graph forward pass, backward pass and Adam update (one epoch); \emph{Inference}: one forward pass of the trained model. (a) Runtime versus number of nodes $n$: a fixed, $n$-independent cost dominates up to $n\approx10^5$. (b) Runtime versus feature dimension $d$: training time grows approximately as $d^{3}$, owing to the $d\times d$ matrix exponential of the Stiefel parametrisation, while inference is essentially independent of $d$. (c) Runtime versus number of modes $h$ and (d) versus neighbourhood size $k_{\mathrm{NN}}$: both have only a minor effect. (e) Peak GPU memory versus $n$ grows linearly. The dashed line marks GPU capacity (NVIDIA A100, $80$\,GB); $\times$ marks the first size that did not fit, joined by a dotted line to the largest size that did. Points are means over $5$ repeats with $\pm1$ standard deviation bands; legend exponents are log--log least-squares slopes. All axes are logarithmic.}
   \label{fig:supp_scaling_default}
\end{figure*}

\begin{figure*}[!t]
  \centering
   \includegraphics[width=\textwidth]{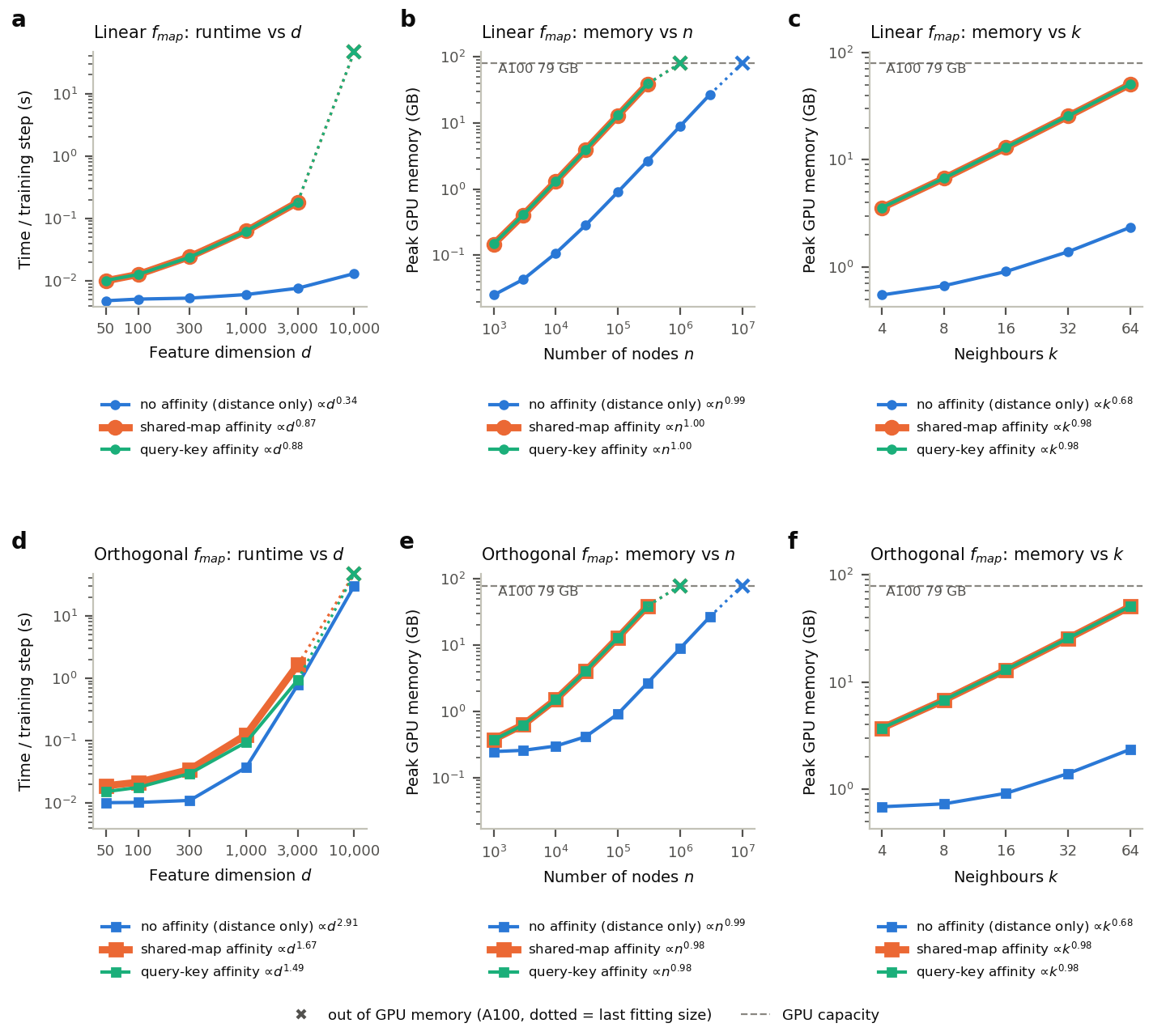}
   \caption{Computational cost of SAIL's optional components. Top row (a--c): linear $f_{\mathrm{map}}$; bottom row (d--f): orthogonal (Stiefel) $f_{\mathrm{map}}$, the default. Colours denote the feature affinity $f_{\mathrm{affinity}}$: none (blue; attention from spatial distance only), shared-map (orange) and query--key (green) (Supplementary Section~\ref{sec:supp_scaling_design}). Other factors are held at $n=10^5$, $d=1{,}000$, $h=25$, $k_{\mathrm{NN}}=16$. (a, d) Time per training step versus feature dimension $d$: nearly flat with the linear map, approximately cubic with the orthogonal map. (b, e) Peak GPU memory versus number of nodes $n$: linear for all variants, but both learned affinities require approximately $14\times$ more memory, because they gather the $d$-dimensional features of every neighbour ($\mathcal{O}(nk_{\mathrm{NN}}d)$), and therefore exceed GPU memory at $n=10^6$. (c, f) Peak GPU memory versus neighbourhood size $k_{\mathrm{NN}}$: linear in $k_{\mathrm{NN}}$ for the learned affinities. The shared-map and query--key affinities have near-identical memory (and near-identical runtime with the linear map), since both are dominated by the same neighbour-feature gather; the shared-map curve is drawn wider underneath so that both remain visible. $\times$ marks the first configuration that did not fit in GPU memory. The dashed line marks GPU capacity. Means over $5$ repeats with $\pm1$ standard deviation bands; all axes are logarithmic.}
   \label{fig:supp_scaling_components}
\end{figure*}

\section{Recovering classical spatial autocorrelation with SAIL in additional synthetic datasets}
\label{sec:supp_toy_construction}
Beyond analysis on the graphs with known spatial structure in Section~\ref{sec:synthetic_experiments}, we provide additional synthetic examples that qualitatively probe SAIL across a wider range of spatial structures.

Each 2D toy graph in Supplementary Figure~\ref{fig:toy_supp} is built on a regular grid, connected via a 4-neighbour (rook-adjacency) edge set, with a single scalar feature per node designed to induce a specific, known spatial-autocorrelation regime. In each case, SAIL is trained on the single
graph to learn $K=1$ mode maximising the mean of the Moran-style local index (Eq.~\eqref{eq:moran_local}), using Adam with learning rate $0.05$ for $50$ epochs, an unconstrained (non-Stiefel) linear projection $f_{\mathrm{map}}$, and no learnable feature affinity ($f_{\mathrm{affinity}}=0$). LISA hotspot maps are computed as in Section~\ref{hotspot_discovery} with a significance threshold of $0.05$. The resulting graphs are shown in Supplementary Figure~\ref{fig:toy_supp}.

\paragraph{Perlin field (top-left)} Node values are drawn from a 2D Perlin
noise field on a $32\times32$ grid. This yields complex but strongly positive local spatial autocorrelation, since neighbouring lattice
points inherit similar noise amplitudes.

\paragraph{Checkerboard (top-right)} On a $16\times16$ grid, each node is assigned a binary value based on the parity of its lattice indices, producing a sharply alternating high-contrast pattern in which every edge connects
nodes of opposite value. This is the canonical negative-autocorrelation construction: every 4-neighbour pair is maximally dissimilar, so the expected Moran's $I$ is strongly negative.

\paragraph{Plateau (bottom-left)} On a $40\times40$ grid over $[0,1]^2$, node
values are a two-level step function, splitting the domain in half along one axis. This produces clustered positive autocorrelation within each half of the domain, together with a sharp discontinuity at the dividing boundary, testing SAIL's ability to localise a hard spatial edge.

\paragraph{Concentric rings (bottom-right)} On a $50\times50$ grid over $[-1,1]^2$, node values follow a radial oscillation, $\sin(8r)$, where $r$ is the distance from the domain centre. This produces a wave pattern that alternates between locally positive and locally negative spatial dependence as $r$ increases, giving concentric High--High and Low--Low rings.

\paragraph{3D point-set terrain} To demonstrate that SAIL applies directly to point sets, we additionally generate a synthetic 3D terrain. A $200\times200$ grid is sampled over $[0,1]^2$ and a height is assigned to each location via 2D Perlin noise, giving a smooth synthetic elevation surface. The height value is used both as the node's third spatial coordinate (each point is embedded in $\mathbb{R}^3$ as $(x,y,z)$, with $z$ the Perlin height) and as the scalar node feature. This yields a point cloud of $40{,}000$ points with no explicit edges: a $k$-nearest-neighbour graph ($k_{\mathrm{NN}}=32$) is built directly from the 3D coordinates. SAIL is trained with a single mode, Moran-style objective, unconstrained linear
projection, Adam, learning rate $0.05$, $50$ epochs. In Supplementary Figure~\ref{fig:POINT_SET} the resulting mode is visualised by jointly colouring points by their LISA cluster assignment and by the projected scalar field $f_{\mathrm{map}}(x)$.

\begin{figure*}[t]
\centering
\subfloat[]{\includegraphics[width=3.4in]{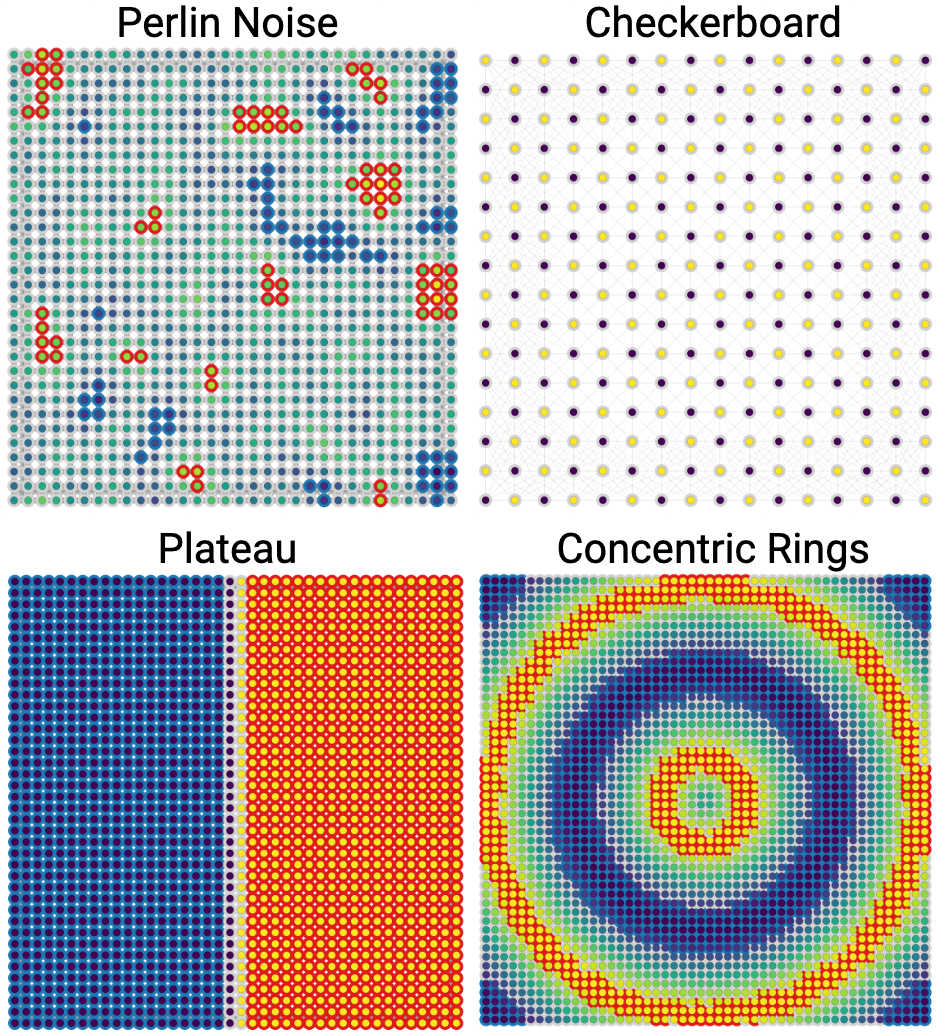}%
\label{fig:toy_supp}}
\hfil
\subfloat[]{\includegraphics[width=3.4in]{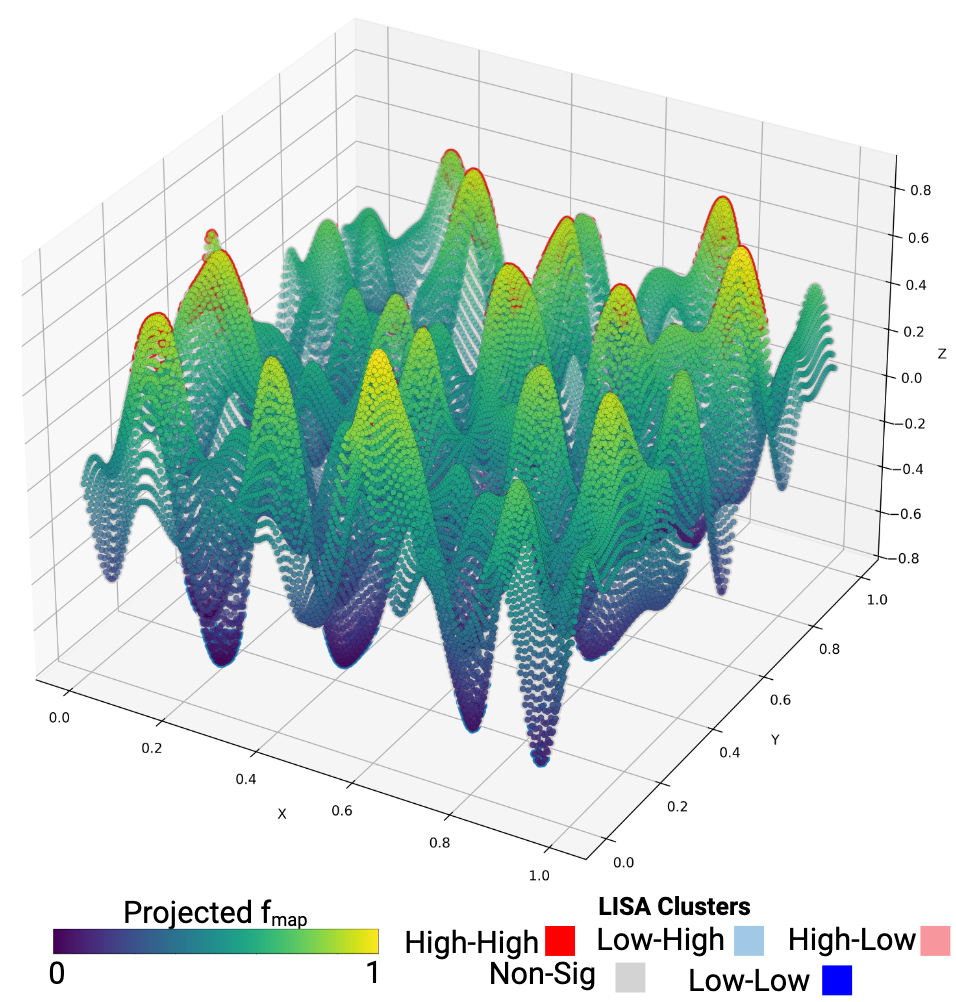}%
\label{fig:POINT_SET}}
\caption{Additional synthetic tests of SAIL. (a) Toy graph examples, coloured by both LISA clusters and the projected scalar field $f_{\mathrm{map}}(x)$. Top-left: a smooth, organically varying spatial field generated with Perlin noise, designed to produce complex but strongly positive local spatial autocorrelation. Top-right: a checkerboard graph with a sharply alternating high-contrast pattern that induces strong negative spatial autocorrelation through repeated adjacent sign flipping. Bottom-left: a plateau graph with a two-level step surface with an abrupt spatial boundary, producing clustered positive autocorrelation within each region and a sharp discontinuity at the divide. Bottom-right: concentric rings with an oscillating radial wave pattern that alternates between positive and negative spatial dependence as distance increases from the centre. (b) A synthetic 3D point set, coloured by both LISA clusters and the projected scalar field $f_{\mathrm{map}}(x)$. It is generated by sampling a grid and assigning heights using 2D Perlin noise, giving a smooth synthetic terrain; the resulting $40{,}000$ points are used to build a $k$-nearest-neighbour graph for SAIL analysis.}
\label{fig:synthetic_supp}
\end{figure*}

\section{Ablation Study}
\label{sec:ablation}
To assess the contribution of each SAIL component, we perform an ablation study on four datasets: the 4i spatial proteomics dataset (Supplementary Table~\ref{tab:hyperparam_results_4i}), the cSCC PD-1/PD-L1 blockade cohort (Supplementary Table~\ref{tab:hyperparam_results_pdl1}), the TCGA-BRCA cohort (Supplementary Table~\ref{tab:hyperparam_results_brca}), and the TCGA-CRC cohort (Supplementary Table~\ref{tab:hyperparam_results_crc}). In every case, $20$ SAIL Modes are learned unsupervised as in Section~\ref{sec:modes} of the main paper, using the same default configuration. Each block of a table varies one component while holding all others at their default (marked $^\dagger$). 

\subsection{4i Spatial Proteomics}
For each of the $10$ annotated subcellular regions we take the best Dice overlap over all modes and both High--High and Low--Low LISA clusters, and report the mean over regions.

\textbf{Diversity regularisation ($\lambda_{\mathrm{corr}}$).} The default $\lambda_{\mathrm{corr}}=1$ gives the highest mean Dice ($0.44$), slightly above no decorrelation ($0.42$). Strong decorrelation degrades performance ($0.36$ at $\lambda_{\mathrm{corr}}=10$ and $0.32$ at $\lambda_{\mathrm{corr}}=100$), consistent with the penalty over-constraining the projections at the expense of spatial coherence.

\textbf{Sparsity regularisation ($\lambda_{\mathrm{sparse}}$).} Mean Dice is unchanged for $\lambda_{\mathrm{sparse}}\in\{0,0.1,1\}$ ($0.44$) and decreases only slightly for stronger sparsity ($0.42$ and $0.41$ at $10$ and $100$). Sparsity therefore improves the interpretability of the learned weights at little cost to region recovery.

\textbf{Local activation ($f_{\mathrm{act}}$).} Identity and ReLU give identical results ($0.44$), since local Geary's $C$ values are non-negative and ReLU leaves them unchanged; a sigmoid gives a marginally lower score ($0.43$). We retain the identity as default to preserve the direct correspondence with classical statistics.

\textbf{Reduction operator ($f_{\mathrm{reduce}}$).} Mean and max reduction perform equally ($0.44$) and sum is marginally lower ($0.43$). Because this analysis uses a single graph, the reduction only rescales the objective and has little effect.

\textbf{Projection parametrisation ($f_{\mathrm{map}}$).} An unconstrained linear projection gives a higher mean Dice than the default Stiefel-constrained projection ($0.47$ vs.\ $0.44$). Because the evaluation takes the best mode per region, it does not penalise redundant modes, and unconstrained projections can place several similar modes near the same compartment. We retain the orthogonality constraint as default because it yields distinct, non-redundant modes, which are easier to interpret.

\textbf{Local statistic.} Geary's $C$ and Moran's $I$ perform equally ($0.44$), whereas Getis--Ord $G^\star$ is substantially lower ($0.32$). This is consistent with $G^\star$ measuring the concentration of high or low values rather than the similarity between neighbouring nodes, which is less suited to delineating compartments.

\textbf{Feature-based affinity ($f_{\mathrm{affinity}}$).} Adding a learnable feature-based affinity reduces mean Dice, both for the query--key variant ($0.33$) and for the shared-$f_{\mathrm{map}}$ variant ($0.36$). In the unsupervised setting, a feature-dependent affinity can increase measured coherence by down-weighting dissimilar neighbours rather than by finding coherent projections.

\textbf{Temperature, neighbourhood size and learning rate.} Learning $\tau$ and fixing $\tau=1$ give the same mean Dice ($0.44$), indicating that on this single, densely sampled graph the $k_{\mathrm{NN}}$ graph already fixes the relevant spatial scale. Results are also robust to the neighbourhood size ($0.43$--$0.44$ for $k_{\mathrm{NN}}\in\{8,16,32,64\}$) and to the learning rate ($0.43$--$0.45$ for $\{0.01,0.05,0.1\}$).

\subsection{cSCC PD-1/PD-L1 Blockade Response}
For each setting we report the mode that best separates responders from non-responders, with a subject-level bootstrap $95\%$ confidence interval and the Mann--Whitney $p$-value.

The default configuration gives an AUC of $0.86$ ($p=0.005$). As on 4i, strong decorrelation ($\lambda_{\mathrm{corr}}\in\{10,100\}$: $0.75$ and $0.78$), a learned feature affinity (query--key $0.80$, shared-$f_{\mathrm{map}}$ $0.75$) and sum reduction ($0.74$) give the largest drops. Sum reduction confounds SAC with the number of cells per patient, which varies across patients in this multi-graph setting, whereas mean and max reduction perform equally ($0.86$). Unlike on 4i, fixing $\tau=1$ lowers performance ($0.79$ vs.\ $0.86$), indicating that learning the spatial scale matters when graphs are analysed jointly. All three local statistics perform similarly ($0.84$--$0.86$), as do the Stiefel and unconstrained projections ($0.86$ vs.\ $0.84$). Larger neighbourhoods improve discrimination, from $0.79$ at $k_{\mathrm{NN}}=8$ to $0.91$ at $k_{\mathrm{NN}}=64$.

\subsection{TCGA-BRCA Survival}
For each setting we report the mode with the highest orientation-free C-Index, $\max(C,1-C)$, together with the log-rank $p$-value and $\Delta$RMST (10-year horizon) of a median split on that mode's score.

The best-mode C-Index is stable across settings ($0.57$--$0.67$), with most configurations between $0.60$ and $0.65$ and the default at $0.64$ ($p=0.005$, $\Delta$RMST $=386$ days). Removing decorrelation ($\lambda_{\mathrm{corr}}=0$) and max reduction give the highest C-Index ($0.67$), whereas strong decorrelation ($\lambda_{\mathrm{corr}}\in\{10,100\}$) loses significant separation ($p=0.46$ and $0.12$). Consistent with the other datasets, Getis--Ord $G^\star$ ($0.57$, $p=0.23$) and the shared-$f_{\mathrm{map}}$ affinity ($0.60$, $p=0.31$) are the weakest variants, Geary's $C$ and Moran's $I$ perform equally ($0.64$), and fixing $\tau=1$ removes significant separation ($p=0.21$). The query--key affinity matches the default C-Index ($0.64$) with a slightly stronger split ($p=0.001$, $\Delta$RMST $=421$ days).

\subsection{TCGA-CRC Survival}
As for TCGA-BRCA, for each setting we report the mode with the highest orientation-free C-Index, together with the log-rank $p$-value and $\Delta$RMST (10-year horizon) of a median split on that mode's score.

The best-mode C-Index varies little across settings ($0.57$--$0.63$), with the default at $0.60$ ($p=0.057$, $\Delta$RMST $=653$ days). Consistent with the other cohorts, strong decorrelation lowers the C-Index ($0.57$ at $\lambda_{\mathrm{corr}}\in\{10,100\}$) and Getis--Ord $G^\star$ is the weakest local statistic ($0.58$, $p=0.20$), while Geary's $C$ and Moran's $I$ perform similarly ($0.60$ and $0.61$). Unlike on 4i and the cSCC cohort, both learned feature affinities slightly improve the C-Index (query--key and shared-$f_{\mathrm{map}}$: $0.63$), as do smaller neighbourhoods ($k_{\mathrm{NN}}\in\{8,16\}$: $0.63$). Fixing $\tau=1$ has little effect on this cohort ($0.59$ vs.\ $0.60$).

\subsection{Summary}
Across the cohorts we see that strong decorrelation ($\lambda_{\mathrm{corr}}\geq10$) degrades performance; Geary's $C$ and Moran's $I$ perform similarly, whereas Getis--Ord $G^\star$ is substantially worse for region recovery and survival. Learning the spatial temperature $\tau$ has little effect on the single 4i graph and on TCGA-CRC, but improves performance on the cSCC and TCGA-BRCA cohorts. The effect of a learned feature affinity is dataset-dependent: it reduces performance on 4i and the cSCC cohort but slightly improves the C-Index on TCGA-CRC.

\begin{table}[!t]
\centering
\caption{Ablation study of SAIL components on the 4i dataset (Supplementary Section~\ref{sec:ablation}). For each setting we report the best Dice per subcellular region (maximum over the 20 modes and High--High/Low--Low LISA clusters), averaged over the 10 regions. Each block varies one component while holding all others at their default setting (indicated via $^\dagger$).}
\begin{tabular}{llc}
\toprule
Category & Setting & Mean Dice\\
\midrule

\multirow{5}{*}{$\lambda_{\mathrm{corr}}$}
 & 0 & 0.42 \\
 & 1$^\dagger$ & \textbf{0.44} \\
 & 5 & 0.42 \\
 & 10 & 0.36 \\
 & 100 & 0.32 \\
\midrule

\multirow{5}{*}{$\lambda_{\mathrm{sparse}}$}
 & 0 & \textbf{0.44} \\
 & 0.1 & \textbf{0.44} \\
 & 1$^\dagger$ & \textbf{0.44} \\
 & 10 & 0.42 \\
 & 100 & 0.41 \\
\midrule

\multirow{3}{*}{$f_{\mathrm{act}}$}
 & Identity$^\dagger$ & \textbf{0.44} \\
 & ReLU & \textbf{0.44} \\
 & Sigmoid & 0.43 \\
\midrule

\multirow{3}{*}{$f_{\mathrm{reduce}}$}
 & Mean$^\dagger$ & \textbf{0.44} \\
 & Max & \textbf{0.44} \\
 & Sum & 0.43 \\
\midrule

\multirow{2}{*}{$f_{\mathrm{map}}$}
 & Stiefel (orthogonal)$^\dagger$ & 0.44 \\
 & Unconstrained & \textbf{0.47} \\
\midrule

\multirow{3}{*}{Statistic}
 & Geary's $C$$^\dagger$ & \textbf{0.44} \\
 & Moran's $I$ & \textbf{0.44} \\
 & Getis--Ord $G^\star$ & 0.32 \\
\midrule

\multirow{3}{*}{$f_{\mathrm{affinity}}$}
 & No affinity$^\dagger$ & \textbf{0.44} \\
 & Query--key & 0.33 \\
 & Shared $f_{\mathrm{map}}$ & 0.36 \\
\midrule

\multirow{2}{*}{$\tau$}
 & Learnable$^\dagger$ & \textbf{0.44} \\
 & Not learnable ($=1$) & \textbf{0.44} \\
\midrule

\multirow{4}{*}{$k_{\mathrm{NN}}$}
 & 8 & 0.43 \\
 & 16 & \textbf{0.44} \\
 & 32$^\dagger$ & \textbf{0.44} \\
 & 64 & \textbf{0.44} \\
\midrule

\multirow{3}{*}{Learning rate}
 & 0.01 & 0.43 \\
 & 0.05$^\dagger$ & 0.44 \\
 & 0.1 & \textbf{0.45} \\
\bottomrule
\end{tabular}
\par\smallskip
\label{tab:hyperparam_results_4i}
\end{table}

\begin{table}[!t]
\centering
\caption{Ablation study of SAIL components on the cSCC PD-1/PD-L1 blockade cohort. For each setting, we report the mode that best separates responders from non-responders (highest AUC over the $20$ modes): its AUC with a $95\%$ bootstrap confidence interval (patients resampled with replacement, $10{,}000$ resamples) and the Mann--Whitney $p$-value. Each block varies one component while holding all others at their default setting (indicated via $^\dagger$).}
\setlength{\tabcolsep}{4pt}
\begin{tabular}{llcc}
\toprule
Category & Setting & AUC (95\% CI) & $p$-value\\
\midrule
\multirow{5}{*}{$\lambda_{\mathrm{corr}}$}
 & 0 & 0.84 (0.62--1.00) & 0.010 \\
 & 1$^\dagger$ & \textbf{0.86} (0.64--1.00) & 0.005 \\
 & 5 & 0.82 (0.60--0.98) & 0.015 \\
 & 10 & 0.75 (0.49--0.95) & 0.062 \\
 & 100 & 0.78 (0.55--0.96) & 0.032 \\
\midrule
\multirow{5}{*}{$\lambda_{\mathrm{sparse}}$}
 & 0 & 0.84 (0.63--0.99) & 0.010 \\
 & 0.1 & 0.82 (0.60--0.98) & 0.015 \\
 & 1$^\dagger$ & \textbf{0.86} (0.64--1.00) & 0.005 \\
 & 10 & 0.82 (0.58--1.00) & 0.015 \\
 & 100 & 0.80 (0.55--1.00) & 0.022 \\
\midrule
\multirow{3}{*}{$f_{\mathrm{act}}$}
 & Identity$^\dagger$ & \textbf{0.86} (0.64--1.00) & 0.005 \\
 & ReLU & 0.83 (0.61--1.00) & 0.012 \\
 & Sigmoid & 0.80 (0.57--0.96) & 0.022 \\
\midrule
\multirow{3}{*}{$f_{\mathrm{reduce}}$}
 & Mean$^\dagger$ & \textbf{0.86} (0.64--1.00) & 0.005 \\
 & Max & \textbf{0.86} (0.66--1.00) & 0.005 \\
 & Sum & 0.74 (0.49--0.94) & 0.073 \\
\midrule
\multirow{2}{*}{$f_{\mathrm{map}}$}
 & Stiefel (orthogonal)$^\dagger$ & \textbf{0.86} (0.64--1.00) & 0.005 \\
 & Unconstrained & 0.84 (0.63--0.98) & 0.010 \\
\midrule
\multirow{3}{*}{Statistic}
 & Geary's $C$$^\dagger$ & \textbf{0.86} (0.64--1.00) & 0.005 \\
 & Moran's $I$ & 0.84 (0.61--1.00) & 0.010 \\
 & Getis--Ord $G^\star$ & 0.84 (0.62--0.99) & 0.010 \\
\midrule
\multirow{3}{*}{$f_{\mathrm{affinity}}$}
 & No affinity$^\dagger$ & \textbf{0.86} (0.64--1.00) & 0.005 \\
 & Query--key & 0.80 (0.57--0.96) & 0.022 \\
 & Shared $f_{\mathrm{map}}$ & 0.75 (0.50--0.95) & 0.062 \\
\midrule
\multirow{2}{*}{$\tau$}
 & Learnable$^\dagger$ & \textbf{0.86} (0.64--1.00) & 0.005 \\
 & Not learnable ($=1$) & 0.79 (0.56--0.97) & 0.027 \\
\midrule
\multirow{4}{*}{$k_{\mathrm{NN}}$}
 & 8 & 0.79 (0.56--0.96) & 0.027 \\
 & 16 & 0.81 (0.56--1.00) & 0.018 \\
 & 32$^\dagger$ & 0.86 (0.64--1.00) & 0.005 \\
 & 64 & \textbf{0.91} (0.75--1.00) & 0.002 \\
\midrule
\multirow{3}{*}{Learning rate}
 & 0.01 & 0.83 (0.62--0.98) & 0.012 \\
 & 0.05$^\dagger$ & \textbf{0.86} (0.64--1.00) & 0.005 \\
 & 0.1 & 0.84 (0.63--0.98) & 0.010 \\
\bottomrule
\end{tabular}
\par\smallskip
\label{tab:hyperparam_results_pdl1}
\end{table}

\begin{table}[!t]
\centering
\caption{Ablation study of SAIL components on the TCGA-BRCA cohort ($498$ patients; Supplementary Section~\ref{sec:ablation}). For each setting we report the SAIL Mode with the highest C-Index; patients are split into high/low groups via a median split on that mode's score, and we report the log-rank $p$-value and $\Delta$RMST (days, 10-year horizon) between groups. Each block varies one component while holding all others at their default setting (indicated via $^\dagger$).}
\begin{tabular}{llccc}
\toprule
Category & Setting & C-Index & $p$-value & $\Delta$RMST\\
\midrule
\multirow{5}{*}{$\lambda_{\mathrm{corr}}$}
 & 0 & \textbf{0.67} & 0.001 & 365 \\
 & 1$^\dagger$ & 0.64 & 0.005 & 386 \\
 & 5 & 0.60 & 0.046 & 243 \\
 & 10 & 0.60 & 0.464 & 96 \\
 & 100 & 0.60 & 0.123 & 188 \\
\midrule
\multirow{5}{*}{$\lambda_{\mathrm{sparse}}$}
 & 0 & 0.62 & 0.020 & 347 \\
 & 0.1 & \textbf{0.65} & 0.032 & 315 \\
 & 1$^\dagger$ & 0.64 & 0.005 & 386 \\
 & 10 & \textbf{0.65} & $<$0.001 & 464 \\
 & 100 & 0.64 & 0.044 & 159 \\
\midrule
\multirow{3}{*}{$f_{\mathrm{act}}$}
 & Identity$^\dagger$ & \textbf{0.64} & 0.005 & 386 \\
 & ReLU & 0.63 & 0.012 & 369 \\
 & Sigmoid & 0.59 & 0.092 & 182 \\
\midrule
\multirow{3}{*}{$f_{\mathrm{reduce}}$}
 & Mean$^\dagger$ & 0.64 & 0.005 & 386 \\
 & Max & \textbf{0.67} & 0.002 & 313 \\
 & Sum & 0.64 & 0.008 & 356 \\
\midrule
\multirow{2}{*}{$f_{\mathrm{map}}$}
 & Stiefel (orthogonal)$^\dagger$ & \textbf{0.64} & 0.005 & 386 \\
 & Unconstrained & 0.62 & 0.017 & 364 \\
\midrule
\multirow{3}{*}{Statistic}
 & Geary's $C$$^\dagger$ & \textbf{0.64} & 0.005 & 386 \\
 & Moran's $I$ & \textbf{0.64} & 0.009 & 379 \\
 & Getis--Ord $G^\star$ & 0.57 & 0.227 & 182 \\
\midrule
\multirow{3}{*}{$f_{\mathrm{affinity}}$}
 & No affinity$^\dagger$ & \textbf{0.64} & 0.005 & 386 \\
 & Query--key & \textbf{0.64} & 0.001 & 421 \\
 & Shared $f_{\mathrm{map}}$ & 0.60 & 0.312 & 174 \\
\midrule
\multirow{2}{*}{$\tau$}
 & Learnable$^\dagger$ & \textbf{0.64} & 0.005 & 386 \\
 & Not learnable ($=1$) & 0.61 & 0.210 & 220 \\
\midrule
\multirow{4}{*}{$k_{\mathrm{NN}}$}
 & 8 & \textbf{0.64} & 0.004 & 394 \\
 & 16 & 0.62 & 0.090 & 293 \\
 & 32$^\dagger$ & \textbf{0.64} & 0.005 & 386 \\
 & 64 & \textbf{0.64} & 0.011 & 362 \\
\midrule
\multirow{3}{*}{Learning rate}
 & 0.01 & 0.63 & 0.111 & 256 \\
 & 0.05$^\dagger$ & \textbf{0.64} & 0.005 & 386 \\
 & 0.1 & 0.61 & $<$0.001 & 490 \\
\bottomrule
\end{tabular}
\par\smallskip
\label{tab:hyperparam_results_brca}
\end{table}

\begin{table}[!t]
\centering
\caption{Ablation study of SAIL components on the TCGA-CRC cohort ($558$ patients; Supplementary Section~\ref{sec:ablation}). For each setting we report the SAIL Mode with the highest C-Index; patients are split into high/low groups via a median split on that mode's score, and we report the log-rank $p$-value and $\Delta$RMST (days, 10-year horizon) between groups. Each block varies one component while holding all others at their default setting (indicated via $^\dagger$).}
\begin{tabular}{llccc}
\toprule
Category & Setting & C-Index & $p$-value & $\Delta$RMST\\
\midrule
\multirow{5}{*}{$\lambda_{\mathrm{corr}}$}
 & 0 & \textbf{0.61} & 0.015 & 474 \\
 & 1$^\dagger$ & 0.60 & 0.057 & 653 \\
 & 5 & 0.60 & 0.091 & 178 \\
 & 10 & 0.57 & 0.320 & 112 \\
 & 100 & 0.57 & 0.021 & 373 \\
\midrule
\multirow{5}{*}{$\lambda_{\mathrm{sparse}}$}
 & 0 & 0.60 & 0.002 & 807 \\
 & 0.1 & 0.60 & 0.032 & 634 \\
 & 1$^\dagger$ & 0.60 & 0.057 & 653 \\
 & 10 & \textbf{0.62} & 0.009 & 389 \\
 & 100 & 0.59 & 0.006 & 554 \\
\midrule
\multirow{3}{*}{$f_{\mathrm{act}}$}
 & Identity$^\dagger$ & 0.60 & 0.057 & 653 \\
 & ReLU & 0.60 & 0.057 & 653 \\
 & Sigmoid & \textbf{0.62} & 0.059 & 184 \\
\midrule
\multirow{3}{*}{$f_{\mathrm{reduce}}$}
 & Mean$^\dagger$ & 0.60 & 0.057 & 653 \\
 & Max & \textbf{0.62} & 0.009 & 313 \\
 & Sum & 0.59 & 0.018 & 381 \\
\midrule
\multirow{2}{*}{$f_{\mathrm{map}}$}
 & Stiefel (orthogonal)$^\dagger$ & 0.60 & 0.057 & 653 \\
 & Unconstrained & \textbf{0.62} & $<$0.001 & 870 \\
\midrule
\multirow{3}{*}{Statistic}
 & Geary's $C$$^\dagger$ & 0.60 & 0.057 & 653 \\
 & Moran's $I$ & \textbf{0.61} & 0.012 & 645 \\
 & Getis--Ord $G^\star$ & 0.58 & 0.203 & 152 \\
\midrule
\multirow{3}{*}{$f_{\mathrm{affinity}}$}
 & No affinity$^\dagger$ & 0.60 & 0.057 & 653 \\
 & Query--key & \textbf{0.63} & 0.029 & 367 \\
 & Shared $f_{\mathrm{map}}$ & \textbf{0.63} & 0.009 & 485 \\
\midrule
\multirow{2}{*}{$\tau$}
 & Learnable$^\dagger$ & \textbf{0.60} & 0.057 & 653 \\
 & Not learnable ($=1$) & 0.59 & 0.034 & 673 \\
\midrule
\multirow{4}{*}{$k_{\mathrm{NN}}$}
 & 8 & \textbf{0.63} & $<$0.001 & 218 \\
 & 16 & \textbf{0.63} & $<$0.001 & 635 \\
 & 32$^\dagger$ & 0.60 & 0.057 & 653 \\
 & 64 & 0.60 & 0.006 & 591 \\
\midrule
\multirow{3}{*}{Learning rate}
 & 0.01 & \textbf{0.61} & 0.001 & 607 \\
 & 0.05$^\dagger$ & 0.60 & 0.057 & 653 \\
 & 0.1 & 0.60 & $<$0.001 & 820 \\
\bottomrule
\end{tabular}
\label{tab:hyperparam_results_crc}
\end{table}

\section{Analysis of spatial coherence of macroeconomic temporal geospatial data}
\label{sec:supp_macro}
In this section we illustrate SAIL's generality on multivariate, spatio-temporal macroeconomic data.

\subsection{Data Source and Cleaning}
We use the Global Macro Database~\cite{S:muller2025global}, an annual panel of macroeconomic indicators across a wide range of countries ($239$). Prior to analysis we exclude: (i) forecast-provenance flags, which mark IMF-style projections
rather than constituting data themselves; (ii) binary crisis dummies (sovereign debt, currency, and banking crises); (iii) indicators denominated in local-currency levels (nominal/real GDP, consumption, government debt, money aggregates M0--M4), which are not comparable across countries and are further subject to abrupt redenomination jumps (e.g.\ Venezuela in 2018, Bolivia in 1985); and (iv) the raw exchange-rate level, which is likewise an arbitrary local-currency-per-USD unit. USD-denominated levels, \%-of-GDP ratios, rates, and index-based indicators are retained. The panel is restricted to 2000--2023 and balanced on a 32-variable subset selected by
greedy forward search to maximise complete country coverage across all years; at this cutoff, 162 countries have complete data for every year in the range.

\subsection{Feature Set}
From the cleaned panel we select 25 cross-country-comparable indicators spanning output and prices (nominal/real GDP, GDP per capita, population, inflation), consumption (total, household, and government, as USD levels and \% of GDP), investment (total and fixed, USD and \% GDP), external accounts
(exports, imports, current account, USD and \% GDP), and government finance (expenditure, revenue, deficit, and debt, all \% GDP). Each feature is z-scored across all country-years in the analysis window prior to modelling.

\subsection{Graph Construction}
One spatial graph is constructed per calendar year, with nodes corresponding to countries. Node coordinates are country centroids obtained from Natural Earth (110\,m boundaries); joining these to the cleaned panel drops a further 17 countries lacking a matching polygon (small island states,
together with Singapore, Malta, Bahrain, and Mauritius), leaving 145 countries in every yearly graph. Edges are defined via great-circle $k$-nearest-neighbour search ($k_{\mathrm{NN}}=6$) using a haversine-distance
ball tree, with raw (non-log-transformed) angular distances passed to SAIL's affinity module.

\subsection{Training Configuration}
A single shared set of $8$ SAIL Modes is learned jointly across all yearly graphs (batch size 1, iterating over all years per epoch), so that a given mode retains a consistent interpretation across the full 2000--2023 window.
Modes are optimised in the unsupervised configuration (Section~\ref{sec:modes}), minimising a Geary-style global spatial autocorrelation objective under a Stiefel-manifold-constrained linear projection, with decorrelation
($\lambda_{\mathrm{corr}}=1.0$) and sparsity ($\lambda_{\mathrm{sparse}}=1.0$) regularisation. Training uses Adam (learning rate $0.05$) for $1000$ epochs. For each mode and year we record the global SAIL score (the spatial
autocorrelation of that mode's projected field on that year's country graph), the per-country projected field ($f_{\mathrm{map}}$ score), and per-country LISA cluster assignments, supporting both the cross-sectional
(mode loadings and country rankings) and longitudinal (spatio-temporal trend) analyses.

\subsection{Results}
\label{sec:supp_macro_results}
SAIL discovers $8$ modes of cross-country spatial coherence, each defined by a sparse combination of a few indicators (Supplementary Figure~\ref{fig:all_macro_weights}). Three broad groups of indicators recur across modes. The first is external trade (exports, imports and the current account; Modes 1, 2, 4 and 8). The second is government finance (expenditure, consumption, revenue and debt as a share of GDP; Modes 3, 5 and 6). The third is overall economic size (real or nominal GDP and GDP per capita; Modes 1, 7 and 8). Several modes contrast two related quantities, for example exports against household consumption (Mode 2), imports against total consumption (Mode 4), government debt against revenue (Mode 5) and government expenditure against government consumption (Mode 6). The learned scalar fields therefore capture relative economic structure rather than any single indicator.

Because one shared set of modes is learned across all years, the global SAIL score of each mode can be tracked from 2000 to 2023 (Supplementary Figure~\ref{fig:all_macro_trends}); lower values indicate that neighbouring countries are more alike along that mode. Some modes are largely stable over time: Modes 1 and 7 remain close to $0.50$ and $0.65$, respectively, throughout the period, indicating a persistent level of regional similarity. Others change gradually. For example, Mode 4 declines steadily until 2019 before rising again, and Mode 8, the least spatially coherent mode overall, falls until around 2012 and then rises. A third group shows sharp, short-lived changes. Mode 2 decreases from $0.65$ to $0.45$ between 2000 and 2008 before rising abruptly in 2009, coinciding with the global financial crisis. Mode 5, which contrasts government debt with revenue, peaks in 2020--2021, coinciding with the COVID-19 pandemic.

\begin{figure*}[!t]
  \centering
   \includegraphics[width=6in]{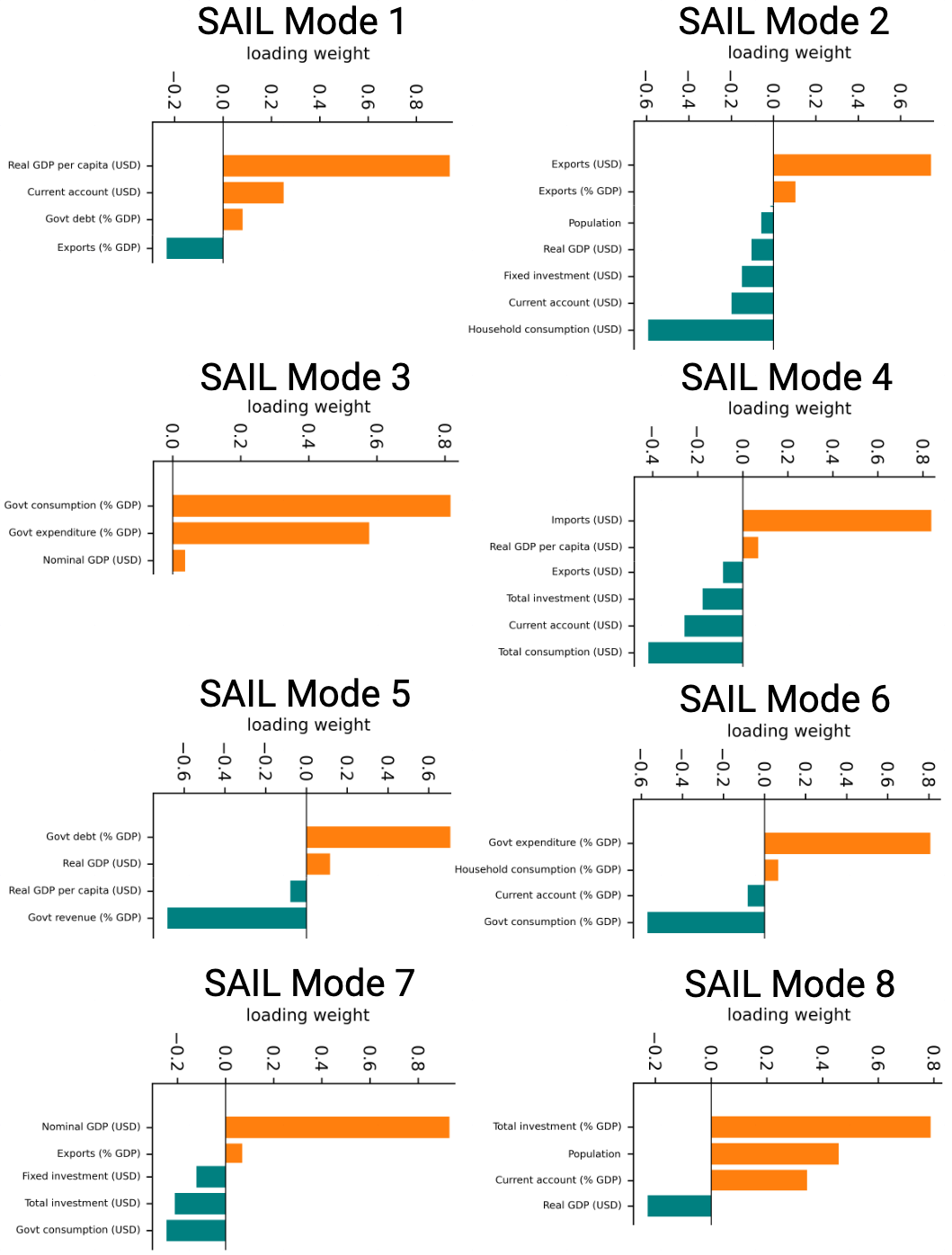}
   \caption{All $8$ SAIL Modes learned on the Global Macro Database. For each mode, bars show the top macroeconomic indicators by absolute loading in the learned $f_{\mathrm{map}}$ projection, with positively and negatively weighted indicators coloured separately; these weights specify the indicator combination whose spatial Geary-style dissimilarity each mode minimises.}
   \label{fig:all_macro_weights}
\end{figure*}

\begin{figure*}[!t]
  \centering
   \includegraphics[width=6in]{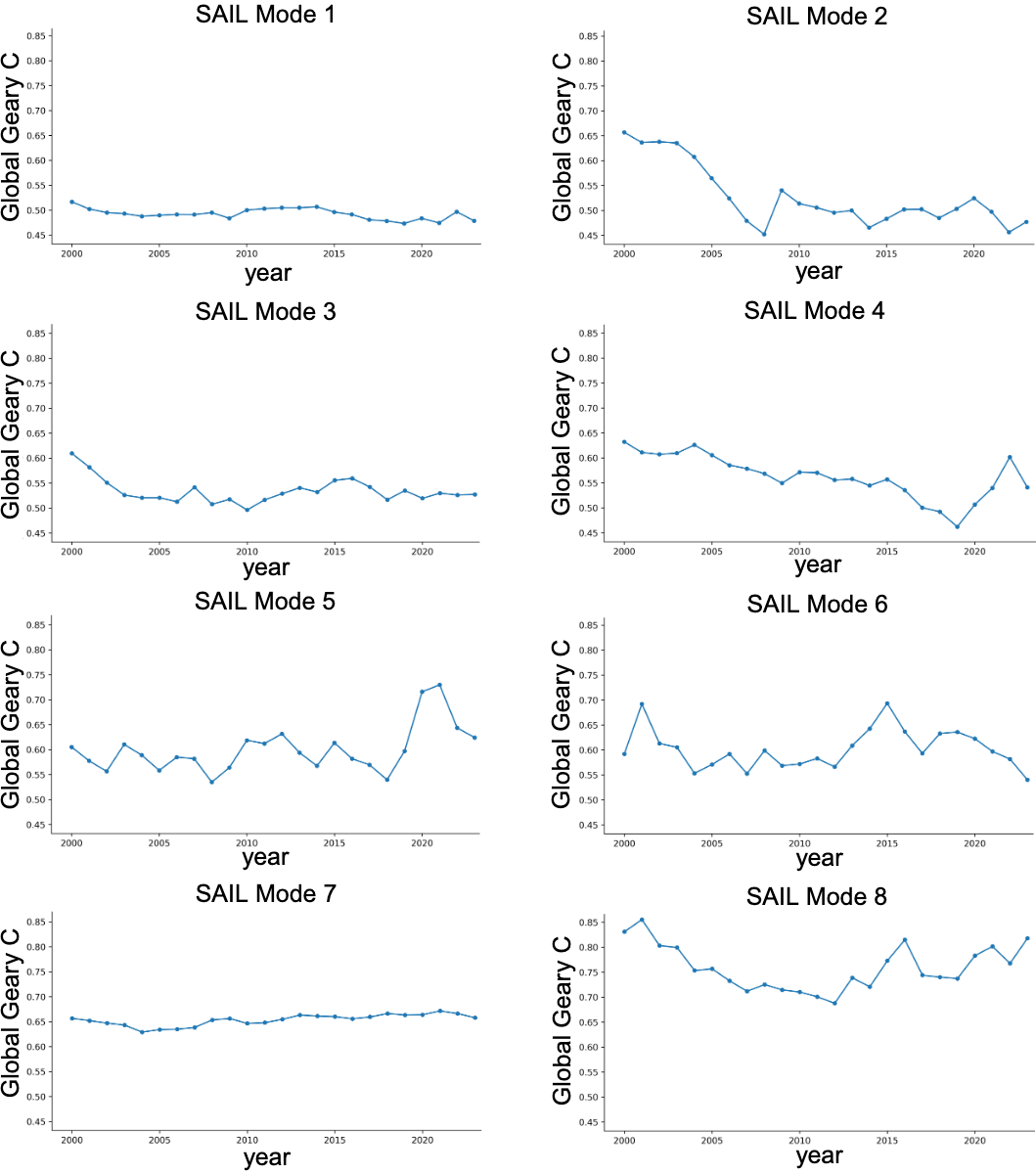}
    \caption{Temporal trends of all $8$ SAIL Modes learned on the Global Macro Database. For each mode, the line shows the global Geary-style SAIL score of the mode's projected field on each yearly country graph from 2000 to 2023; lower values indicate that neighbouring countries are more similar along that mode.}
     \label{fig:all_macro_trends}
\end{figure*}

\section{Geospatial Temporal Analysis: COVID-19 Case Trends}
\label{sec:supp_covid}

\subsection{Data Source and Preprocessing}
We use the Our World in Data (OWID) COVID-19 dataset~\cite{S:dutta2022survey},
taking new confirmed cases per million people as the node feature. Daily
records are aggregated to weekly means per country, yielding one value per
country per calendar week spanning January 2020 to May 2023. Country codes
are matched against Natural Earth (110\,m) administrative boundaries to
obtain a fixed geographic reference frame; any week for which one or more
matched countries have a missing value is dropped.

\subsection{Graph Construction}
Nodes correspond to countries, with coordinates given by country polygon
centroids (latitude/longitude) from the matched shapefile. For each weekly
snapshot, a $k_{\mathrm{NN}}$-nearest-neighbour graph ($k_{\mathrm{NN}}=16$) is built directly from
country coordinates using great-circle (haversine) distance via a ball
tree. The single node feature
per week is the country's weekly per-million case count.

\subsection{Results}
In Supplementary Figure~\ref{fig:COVID_temporal}, tracking the global Moran's $I$ score across all retained weeks reveals
substantial temporal variation, with several distinct peaks corresponding
to periods of strong spatial synchronisation in case growth across
neighbouring countries. Examining the LISA cluster maps over
time, Supplementary Figure~\ref{fig:COVID_lisa}, shows that High--High hotspots
recur disproportionately in Western and Central Europe across multiple
waves, suggesting a persistent geographic pattern of regionally
synchronised outbreaks rather than isolated, one-off events. 

This further illustrates that SAIL's spatial autocorrelation computation extends
directly to real-world, non-biological spatio-temporal data.

\begin{figure}[!t]
  \centering
   \includegraphics[width=3in]{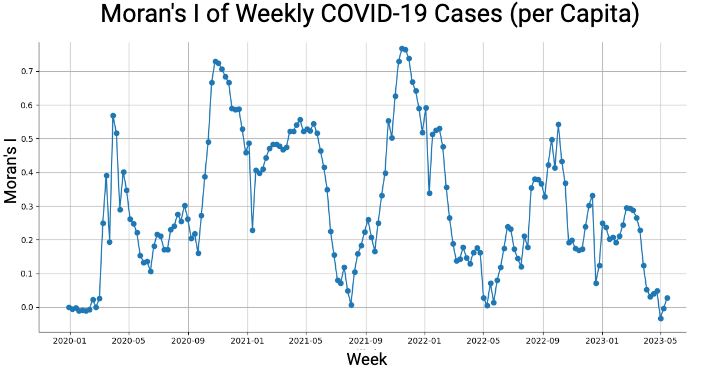}
   \caption{Unsupervised SAIL for COVID-19 trend analysis. Here we show a line graph that reports the global Moran's $I$ of new COVID-19 cases (y-axis) against a given week (x-axis).}
   \label{fig:COVID_temporal}
\end{figure}

\begin{figure}[!t]
  \centering
   \includegraphics[width=3in]{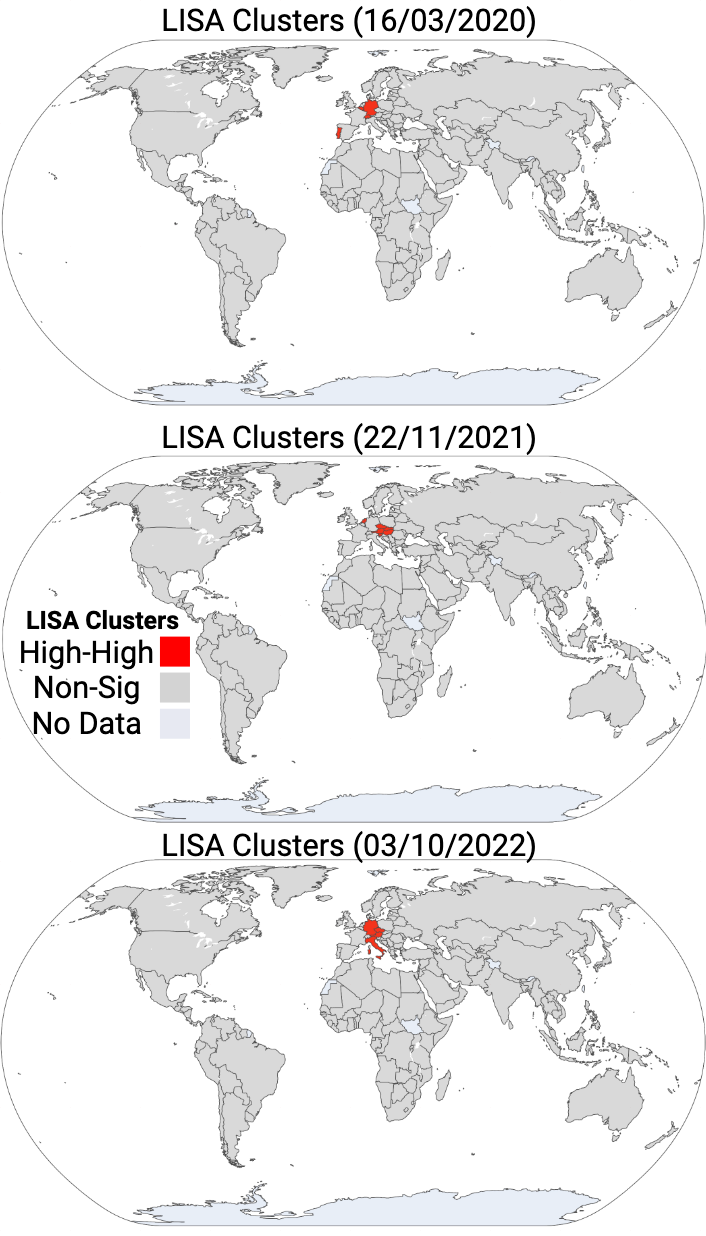}
   \caption{A choropleth map for the world coloured by LISA clustering. Interestingly, we notice that across time, COVID-19 hotspots tend to form in Western and Central European countries, reflecting a consistent pattern of spatially concentrated High--High clusters and regional synchronisation of outbreaks.}
   \label{fig:COVID_lisa}
\end{figure}

\section{Spatial coherence of mixing gases}
\label{sec:supp_diffusion}
To show that SAIL applies to physical systems whose state is defined by spatial coherence, we use it to follow the mixing of two gases by diffusion. Their initial separation is a state of maximal spatial autocorrelation in gas identity, and mixing removes it.

\subsection{Dataset Setup}
We simulate a two-dimensional hard-disc gas of $600$ equal-mass particles (radius $4$) in a $1000\times540$ box, using simulation units for length and seconds for time. At $t=0$, $300$ particles of species A ($s_j=+1$) fill the left half and $300$ particles of species B ($s_j=-1$) fill the right half. The two halves are separated by a vertical partition at the centre of the box, and particles are placed on a jittered grid with a $20$-unit empty margin at the walls and the partition. Initial velocities follow a two-dimensional Maxwell--Boltzmann distribution with a root-mean-square speed of $100$, identical for both species. Particles move ballistically, and all particle--particle, particle--wall and particle--partition collisions are perfectly elastic. The equations of motion are integrated with time step $\Delta t=0.02$\,s, sub-stepped so that no particle moves more than $0.75$ radii per sub-step, for $80$\,s. A snapshot is recorded every $0.24$\,s. We consider three scenarios that differ only in what happens to the partition at $t=0$:
\begin{itemize}
    \item \textbf{Partition closed} (control): the partition remains, so the gases cannot mix;
    \item \textbf{Small hole}: a $120$-unit opening is made at the centre of the partition;
    \item \textbf{Partition removed}: the partition is removed entirely.
\end{itemize}
The partition is purely geometric and never interacts with species identity. Each scenario is simulated with $5$ independent random seeds, and a separate run of the partition-removed scenario, with its own seed, is reserved for training SAIL.

\subsection{Graph Construction and SAIL Model}
Each snapshot is a spatial graph whose nodes are particles, connected by $k_{\mathrm{NN}}=8$ nearest-neighbour edges built from particle positions, with log-transformed neighbour distances passed to the affinity module. The node feature vector is $[s_j,\ \lVert v_j\rVert,\ \epsilon_{j1},\dots,\epsilon_{j4}]$, comprising species identity, particle speed and four nuisance features $\epsilon_{jk}\sim\mathcal{N}(0,1)$, drawn independently per particle and snapshot. Each feature is standardised within each snapshot. Because both species start at the same temperature, speed carries no spatial information, so species identity is the only spatially autocorrelated feature.

SAIL is trained unsupervised (Section~\ref{sec:modes}) to learn a single mode that maximises the Moran-style local index (Eq.~\eqref{eq:moran_local}), using an unconstrained linear $f_{\mathrm{map}}$, mean global reduction, a learnable spatial temperature $\tau$, no learned feature affinity, $\lambda_{\mathrm{corr}}=0$ and $\lambda_{\mathrm{sparse}}=0.01$. The trained mode is then frozen and applied to every snapshot of the $15$ evaluation runs. LISA cluster maps are computed as in Section~\ref{hotspot_discovery}.

\subsection{Results}
SAIL learns a mode that loads almost entirely on species identity (loading $1.00$), with negligible weight on speed ($0.01$) and on all four nuisance features ($|w|\leq0.015$; Supplementary Figure~\ref{fig:diffusion_analysis}\,i), and a learned spatial temperature of $\tau=4.51$. Without supervision, it therefore identifies the one spatially coherent physical quantity in the system and ignores features that vary across snapshots without spatial structure.

At $t=0$ the gases are fully separated and the score is $0.999$ in every run. With the partition closed, it stays close to its initial value throughout ($0.982\pm0.005$ at $t=80$\,s; mean $\pm$ standard deviation over $5$ seeds). When the partition is removed, the score decays rapidly: it reaches $0.53\pm0.04$ at $t=10$\,s, halves after $12.2$\,s and approaches zero by the end of the simulation ($0.04\pm0.03$ at $t=80$\,s). With a small hole, mixing is limited by the flux through the opening and the decay is slower: the score is $0.82\pm0.02$ at $t=10$\,s, halves after $33.4$\,s and is still $0.21\pm0.01$ at $t=80$\,s.

The LISA maps show where the remaining spatial coherence lies (Supplementary Figure~\ref{fig:diffusion_analysis}\,iii). At early times, the two halves of the box form large High--High and Low--Low clusters. As mixing proceeds, these clusters shrink back from the partition towards the outer walls, and the intermixed central region becomes non-significant. With a small hole, mixing starts as a localised plume around the opening while the rest of the box stays clustered; with the partition closed, the two clusters persist throughout.

\begin{figure}[!t]
  \centering
   \includegraphics[width=3in]{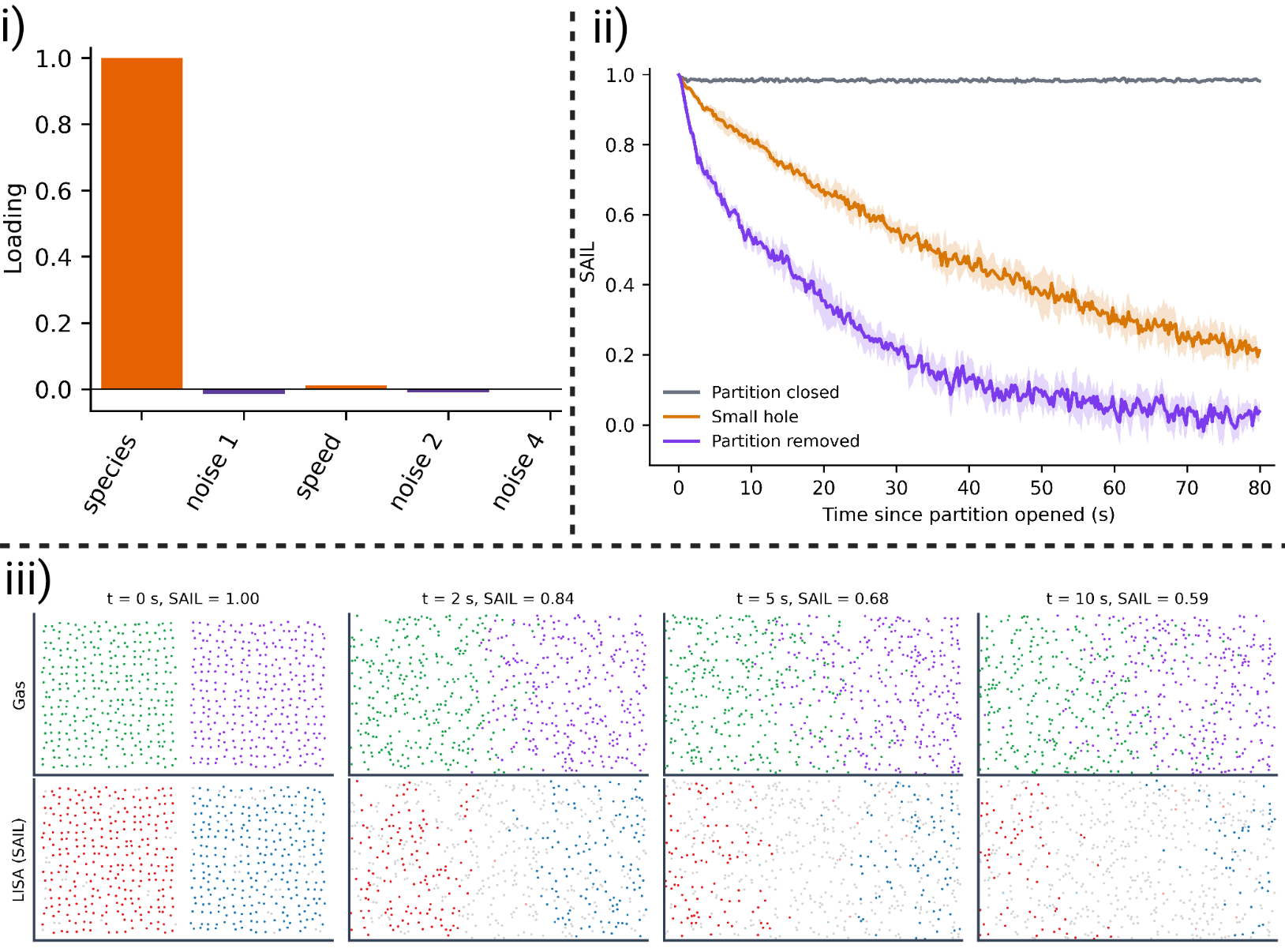}
   \caption{SAIL tracks the mixing of two gases. (i) Loadings of the single SAIL Mode learned on a training run, over species identity, particle speed and four nuisance noise features. (ii) SAIL score over time (seconds) for the three scenarios (partition closed, small hole, partition removed); lines show the mean and shaded bands $\pm1$ standard deviation over $5$ seeds. (iii) Gas states (top) and the corresponding LISA clusters of the learned mode (bottom) at selected time points.}
   \label{fig:diffusion_analysis}
\end{figure}

\section{SAIL limitations}
\label{sec:supp_scope}

SAIL is designed for problems in which the target depends on spatial autocorrelation, and its inductive bias should not be expected to help when it does not. In this section we characterise both regimes: the class of problems on which SAIL is expected to outperform classical message-passing architectures, and the class on which it is not.

\subsection{Problem class where SAIL is optimal}
\label{sec:supp_synth_alpha}

The synthetic benchmark of Supplementary Section~\ref{sec:supp_synth_protocol} is, by construction, aligned with spatial autocorrelation. A natural question is therefore how much of the target must be spatially determined for SAIL to retain its advantage. To answer this, we make the alignment an explicit, controllable parameter. We show that SAIL's advantage is confined to the spatial-autocorrelation regime and disappears when the target depends on feature content alone (Supplementary Figure~\ref{fig:baseline_regression_alpha}).

Specifically, we blend the spatial target with a purely content-based target and sweep the mixing weight $\gamma\in[0,1]$. For each graph, we compute two scalars:
\begin{itemize}
\item \textbf{Spatial component} $y_{\mathrm{spatial}}$: the Moran's $I$ of the hidden random projection $z^\star$ on the hidden $k_{\mathrm{true}}$ neighbourhood, exactly as in Supplementary Section~\ref{sec:supp_synth_dataset}.

\item \textbf{Content component} $y_{\mathrm{feat}}$: a coordinate-free second moment of the features,
\begin{equation}
y_{\mathrm{feat}}
=\frac{1}{n}\sum_{j=1}^{n}\big(x_j^\top v_{\mathrm{feat}}\big)^2
=v_{\mathrm{feat}}^\top \mathbf{C}_{\mathcal{G}}\, v_{\mathrm{feat}},
\end{equation}
where $\mathbf{C}_{\mathcal{G}}$ is the per-graph feature covariance and $v_{\mathrm{feat}}\in\mathbb{R}^d$ is a \emph{fixed} unit direction, held constant across all graphs, splits and seeds. Because $y_{\mathrm{feat}}$ is permutation-invariant and carries no spatial information, it can be recovered by a node-wise MLP with mean pooling, as well as by GCN and GAT.
\end{itemize}
 
Each component is standardised to zero mean and unit variance using statistics from the training split only, so that $\gamma$ mixes two quantities on a comparable scale. The regression target is the convex combination
\begin{equation}
y(\gamma)=(1-\gamma)\,\bar y_{\mathrm{spatial}}+\gamma\,\bar y_{\mathrm{feat}},
\qquad \gamma\in[0,1],
\end{equation}
where $\bar{(\cdot)}$ denotes a train-standardised component. At $\gamma=0$ the target is pure spatial autocorrelation; at $\gamma=1$ it is pure feature content.

We compare SAIL against GCN and GAT on the fully connected graph with node coordinates concatenated to the features, and against a node-wise MLP with coordinates. All models use a hidden width of $32$ (with $2$ heads for attention) and share the optimiser, learning rate, early-stopping criterion and data splits of Supplementary Section~\ref{sec:supp_synth_protocol}. We sweep $\gamma \in \{0,0.15,0.3,0.5,0.7,0.85,1.0\}$ over $10$ seeds and report the mean test Pearson correlation as a function of $\gamma$.

\begin{table}[!t]
\centering
\caption{Per-model results on the synthetic Moran's $I$ regression benchmark.
Each model is evaluated at its Stage-1--selected configuration over $1000$
seeds; we report the mean $\pm$ standard deviation of test Pearson and MSE. Rows are ordered by descending Pearson correlation.}
\label{tab:supp_regression_results}
\begin{tabular}{lccc}
\toprule
Model & \#Params & Pearson & MSE \\
\midrule
SAIL (dense)                        & $82$  & $\mathbf{0.819 \pm 0.015}$ & $\mathbf{0.0114 \pm 0.0016}$ \\
\midrule
GCN (X, FC)         & $81$  & $0.730 \pm 0.022$ & $0.0163 \pm 0.0025$ \\
\midrule
\shortstack[l]{GCN\\(X+coords, FC)}  & $193$ & $0.724 \pm 0.022$ & $0.0169 \pm 0.0028$ \\
\midrule
GAT (X, FC)         & $353$ & $0.627 \pm 0.096$ & $0.0212 \pm 0.0046$ \\
\midrule
\shortstack[l]{GAT\\(X+coords, FC)}  & $785$ & $0.459 \pm 0.203$ & $0.0271 \pm 0.0065$ \\
\midrule
MLP (X)             & $321$ & $0.016 \pm 0.044$ & $0.0335 \pm 0.0031$ \\
\midrule
\shortstack[l]{MLP\\(X+coords)}      & $97$  & $0.010 \pm 0.044$ & $0.0334 \pm 0.0028$ \\
\bottomrule
\end{tabular}
\end{table}


\begin{table}[!t]
\centering
\caption{Spatial-versus-content blend ablation. Mean test Pearson correlation
(subscript: $\pm$ standard deviation over $10$ seeds) as the target shifts from
pure spatial autocorrelation ($\gamma{=}0$) to pure feature content
($\gamma{=}1$). GCN, GAT and MLP use node coordinates (X+coords), and the GNNs
use the fully connected graph (FC). SAIL is strongest at $\gamma{=}0$ and
degrades monotonically as the target loses its spatial component, while the
content-recoverable baselines improve. The best model per row is in bold, with
a clear crossover near $\gamma\approx0.3$--$0.5$.}
\label{tab:supp_alpha_ablation}
\small
\setlength{\tabcolsep}{4pt}
\begin{tabular}{ccccc}
\toprule
$\gamma$ & SAIL & GCN & GAT & MLP \\
\midrule
$0.00$ & $\mathbf{0.831}_{\pm0.009}$ & $0.744_{\pm0.017}$ & $0.450_{\pm0.156}$ & $0.010_{\pm0.049}$ \\
$0.15$ & $\mathbf{0.814}_{\pm0.010}$ & $0.746_{\pm0.018}$ & $0.367_{\pm0.160}$ & $0.118_{\pm0.075}$ \\
$0.30$ & $\mathbf{0.761}_{\pm0.018}$ & $0.755_{\pm0.012}$ & $0.505_{\pm0.125}$ & $0.356_{\pm0.068}$ \\
$0.50$ & $0.584_{\pm0.032}$ & $\mathbf{0.789}_{\pm0.015}$ & $0.736_{\pm0.088}$ & $0.682_{\pm0.038}$ \\
$0.70$ & $0.338_{\pm0.045}$ & $0.815_{\pm0.014}$ & $\mathbf{0.910}_{\pm0.018}$ & $0.905_{\pm0.013}$ \\
$0.85$ & $0.177_{\pm0.053}$ & $0.822_{\pm0.016}$ & $0.975_{\pm0.004}$ & $\mathbf{0.978}_{\pm0.003}$ \\
$1.00$ & $0.100_{\pm0.070}$ & $0.824_{\pm0.014}$ & $0.993_{\pm0.004}$ & $\mathbf{0.998}_{\pm0.001}$ \\
\midrule
\multicolumn{5}{c}{\footnotesize $\gamma$: spatial autocorrelation $\rightarrow$ feature content} \\
\bottomrule
\end{tabular}
\end{table}

\subsection{Results} Supplementary Figure~\ref{fig:baseline_regression_alpha} and Supplementary Table~\ref{tab:supp_alpha_ablation} summarise the results. At $\gamma=0$, SAIL achieves the highest Pearson correlation ($0.831\pm0.009$), ahead of GCN ($0.744\pm0.017$), GAT ($0.450\pm0.156$) and the coordinate-augmented MLP ($0.010\pm0.049$). This is consistent with Theorem~\ref{theorem:inductive_bias}: SAIL explicitly contains the quadratic interactions that define the target. As $\gamma$ increases, SAIL's correlation declines monotonically to $0.100\pm0.070$ at $\gamma=1$. At that point the target is fully determined by feature content and is recovered almost perfectly by the MLP ($0.998\pm0.001$) and GAT ($0.993\pm0.004$). The models cross over between $\gamma\approx0.3$ and $0.5$: GCN leads at $\gamma=0.5$, and the content-based models dominate thereafter.

These results show that SAIL's advantage comes from its match to targets governed by spatial autocorrelation, rather than from a general modelling benefit. They also give an empirical illustration of the problem class $\mathcal{P}_{\mathrm{SAC}}$ of Theorem~\ref{theorem:inductive_bias}. When the target is driven by spatial autocorrelation, SAIL is the appropriate tool. When it is driven mainly by feature content, standard message passing or node-wise models are preferable.

\begin{figure}[t]
\centering
\includegraphics[width=3.4in]{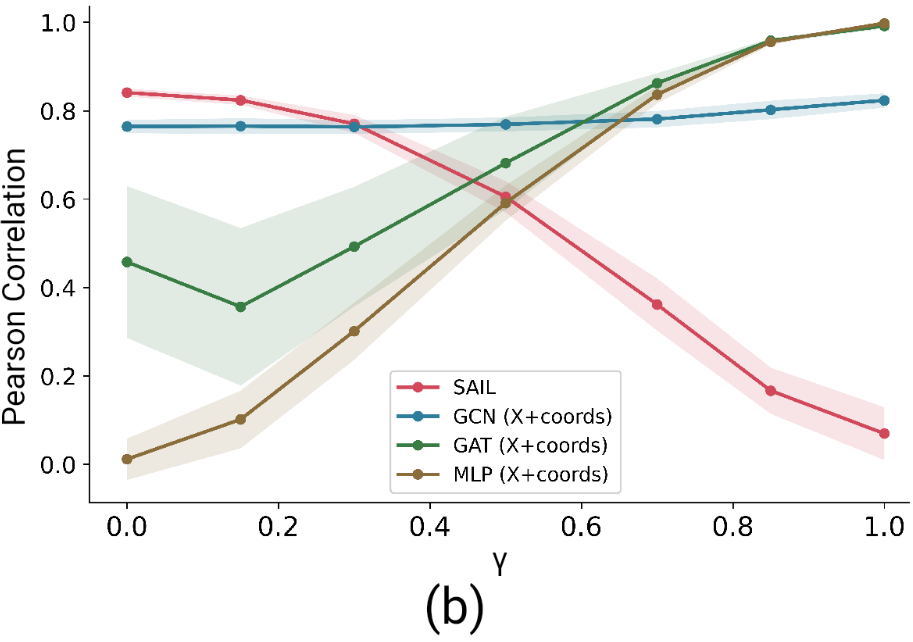}
\caption{Spatial-versus-content blend ablation. Mean Pearson correlation (shaded bands $\pm1$ standard deviation over $10$ seeds) as a function of the blend weight $\gamma$, interpolating the regression target between pure spatial autocorrelation ($\gamma=0$) and pure feature content ($\gamma=1$). SAIL is strongest at $\gamma=0$ and degrades monotonically as the spatial component is removed, while GCN, GAT, and MLP improve, crossing SAIL near $\gamma\approx0.3$--$0.5$.}
\label{fig:baseline_regression_alpha}
\end{figure}


\end{document}